
\documentclass{article}

\usepackage{amsmath,amsfonts,bm}
\usepackage{upgreek}






\def\Secref#1{Section~\ref{#1}}


\def\eqref#1{equation~\ref{#1}}









\def\1{\bm{1}}



\def\rvepsilon{{\bm{\upepsilon}}}

\def\rvx{{\mathbf{x}}}
\def\rvy{{\mathbf{y}}}





\def\va{{\bm{a}}}

\def\vc{{\bm{c}}}

\def\vh{{\bm{h}}}

\def\vq{{\bm{q}}}

\def\vu{{\bm{u}}}

\def\vw{{\bm{w}}}
\def\vx{{\bm{x}}}
\def\vy{{\bm{y}}}



\def\mI{{\bm{I}}}

\def\mQ{{\bm{Q}}}

\def\mV{{\bm{V}}}

\DeclareMathAlphabet{\mathsfit}{\encodingdefault}{\sfdefault}{m}{sl}
\SetMathAlphabet{\mathsfit}{bold}{\encodingdefault}{\sfdefault}{bx}{n}


\def\gK{{\mathcal{K}}}
\def\gL{{\mathcal{L}}}

\def\gN{{\mathcal{N}}}










\newcommand{\R}{\mathbb{R}}



\usepackage{microtype}
\usepackage{graphicx}
\usepackage{booktabs} 

\usepackage{hyperref}



\usepackage[accepted]{icml2025}

\usepackage{amsmath}
\usepackage{amssymb}
\usepackage{mathtools}
\usepackage{amsthm}

\usepackage{makecell}
\usepackage{graphicx}
\usepackage{bibunits}
\usepackage{colortbl}
\usepackage{pifont}
\usepackage{footnote}
\usepackage{multirow,tabularx}
\usepackage{multicol}
\usepackage{comment}
\usepackage{wrapfig}
\usepackage{placeins}
\usepackage{textcomp}
\usepackage{subcaption}

\usepackage[capitalize,noabbrev]{cleveref}

\theoremstyle{plain}
\newtheorem{theorem}{Theorem}[section]
\newtheorem{proposition}[theorem]{Proposition}
\newtheorem{lemma}[theorem]{Lemma}

\theoremstyle{definition}
\newtheorem{definition}[theorem]{Definition}

\theoremstyle{remark}

\newcommand{\bv}[1]{\bm{#1}} 
\newcommand{\bs}[1]{\boldsymbol{#1}} 

\newcommand{\T}{\top}

\usepackage[textsize=tiny]{todonotes}

\icmltitlerunning{When Diffusion Models Memorize}

\begin{document}

\twocolumn[
\icmltitle{When Diffusion Models Memorize: Inductive Biases in Probability Flow of Minimum-Norm Shallow Neural Nets}



\icmlsetsymbol{equal}{*}
\begin{icmlauthorlist}
\icmlauthor{Chen Zeno}{Technion}
\icmlauthor{Hila Manor}{Technion}
\icmlauthor{Greg Ongie}{Marquette}
\icmlauthor{Nir Weinberger}{Technion}
\icmlauthor{Tomer Michaeli}{Technion}
\icmlauthor{Daniel Soudry}{Technion}
\end{icmlauthorlist}

\icmlaffiliation{Technion}{Electrical and Computer Engineering, Technion, Haifa, Israel}
\icmlaffiliation{Marquette}{Mathematical and Statistical Sciences, Marquette University, Marquette, USA}

\icmlcorrespondingauthor{Chen Zeno}{chenzeno@campus.technion.ac.il}

\icmlkeywords{Machine Learning, ICML}

\vskip 0.3in
]



\printAffiliationsAndNotice{\icmlEqualContribution} 

\begin{abstract}
While diffusion models generate high-quality images via probability flow, the theoretical understanding of this process remains incomplete. A key question is when probability flow converges to training samples or more general points on the data manifold.
We analyze this by studying the probability flow of shallow ReLU neural network denoisers trained with minimal $\ell^2$ norm. For intuition, we introduce a simpler score flow and show that for orthogonal datasets, both flows follow similar trajectories, converging to a training point or a sum of training points. However, early stopping by the diffusion time scheduler allows probability flow to reach more general manifold points.
This reflects the tendency of diffusion models to both memorize training samples and generate novel points that combine aspects of multiple samples, motivating our study of such behavior in simplified settings. We extend these results to obtuse simplex data and, through simulations in the orthogonal case, confirm that probability flow converges to a training point, a sum of training points, or a manifold point. Moreover, memorization decreases when the number of training samples grows, as fewer samples accumulate near training points.
\end{abstract}

\section{Introduction}
In diffusion models \citep{sohl2015deep,ho2020denoising,song2021scorebased}, new images are sampled from the data distribution through an iterative process. Beginning with a random initialization, the model gradually denoises the image until a final image emerges. At their core, diffusion models learn the data distribution by estimating the score function of a Gaussian-blurred version of the data distribution. The connection between the score function and the denoiser, often called Tweedie’s identity \citep{robbins1956empirical,miyasawa1961empirical,stein1981estimation}, holds only under optimal Bayes estimation.
Moreover, for the estimated score to be a true gradient field, the denoiser must have a symmetric positive semidefinite Jacobian matrix \citep{chao2023investigating,manor2024posterior}. 
However, in practice, neural network denoisers are used, and their Jacobian matrix is generally non-symmetric, raising open questions about the convergence of the sampling process in score-based diffusion algorithms.

Diffusion models typically use a stochastic sampling process, which can be described by a stochastic differential equation (SDE) \citep{song2021scorebased}. Alternatively, a deterministic version of the sampling process can also be used, formulated as an ordinary differential equation (ODE) \citep{song2021denoising}, called the probability flow ODE. We aim to theoretically analyze the probability flow, in order to illuminate this complex sampling process. However, practical diffusion architectures are typically deep and not fully connected, making it difficult to obtain theoretical guarantees without making additional strong assumptions (e.g., assuming a linearized regime, like the neural tangent kernel \citep{NEURIPS2018_5a4be1fa}). Therefore, in this paper, we focus on diffusion models based on shallow ReLU neural network denoisers. Such networks are simple enough to allow for a theoretical investigation yet rich enough to offer valuable insights.

To gain insight into the dynamics of the probability flow ODE, we also explore a simpler ODE, which corresponds to flowing in the direction of the score of the noisy data distribution, for a fixed noise level. We call this the \emph{score-flow} ODE. The score flow aims to sample from one of the modes of the noise-perturbed data distribution. We explore both the probability flow and the score flow ODEs for denoisers with minimal representation cost that perfectly fit the training data.  Our analysis reveals that, for small noise levels, the trajectories of both flows are similar for a given initialization. However, the diffusion time scheduler induces ``early stopping'', which determines whether the probability flow converges to training samples or to other points on the data manifold. This analysis provides insights into the stability and convergence properties of these processes.

\paragraph{Our Contributions} We investigate the probability and the score flow of shallow ReLU neural network denoisers in the context of interpolating noisy samples with minimal cost, that is, when the denoiser maps each noisy sample exactly to its corresponding clean training point, resulting in zero empirical loss. Our focus is on the low-noise regime,  where noisy samples are well clustered.
\begin{itemize}
\item \textbf{Theoretical}: We prove that when the clean training points are orthogonal to one another, the probability flow and score flow follow a similar trajectory for a given initialization point. However, while the score flow converges only to a training point or to a sum of training points, the probability flow can also converge to a point on the boundary of the hyperbox whose vertices are all partial sums of the training points. This happens due to ``early stopping'' induced by the diffusion time scheduler. We generalize this result to the case where the training points are the vertices of an obtuse simplex.
\item \textbf{Experimental}:
We train shallow denoisers that interpolate the training data with minimal representation cost on orthogonal datasets. We start by empirically demonstrating that the score flow ODE corresponding to a single such denoiser typically converges either to a sum of training points, which we call \textit{virtual training points}, or to a general point on the boundary of the hyperbox (it converges to a training point only in rare occasions). We then show that the probability flow ODE, which uses a sequence of denoisers for varying noise levels, also converges to virtual points and to the boundary of the hyperbox, albeit at a somewhat lower frequency compared to the training points. Finally, we show that generalization improves as the number of clean data points increases.
\end{itemize}

\section{Setup and Review of Previous Results}
We study the denoising problem, where we observe a vector $\bv y \in \mathbb{R}^{d}$ that is a noisy observation of $\bv x \in \mathbb{R}^{d}$, i.e. $ \rvy = \rvx + \rvepsilon$,
such that $\rvx$ and $\rvepsilon$ are statistically independent and $\rvepsilon$ is Gaussian noise with zero mean and covariance matrix $\sigma^2 \mI$. The MSE loss of a denoiser $\bv h(\bv y)$ is 
\begin{align}
\mathcal{L}_{\mathrm{MSE}}\left(\bv h\right)=\mathbb{E}_{\rvx,\rvy}\left\Vert \bv h\left(\rvy\right)-\rvx\right\Vert ^{2}\,, \label{eq:mse_loss}
\end{align}
where the expectation is over the joint probability distribution of $\rvx$ and $\rvy$. 
The minimizer of the MSE loss is the MMSE estimator 
\begin{align}
    \bv h_{\mathrm{MMSE}}\left(\bv y\right) = \mathbb{E}_{\rvx|\rvy}\left[\rvx|\rvy=\vy\right]\,.
\end{align}
In practice, since the true data distribution is unknown, we use empirical risk minimization with regularization.
Consider a dataset consisting of $M$ noisy samples for each of the $N$ clean data points $\bv x_n$ such that ${\bv y}_{n,m}=\bv x_{n}+\boldsymbol{\epsilon}_{n,m}$, $n=1,\ldots,N$, $m=1,\ldots,M$. Then, one typically aims to minimize the loss
\begin{align}
    \mathcal{L}\left(\theta\right)= \frac{1}{MN}\sum_{m=1}^{M}\sum_{n=1}^{N}\left\Vert \bv h_{\theta}\left(\bv y_{n,m}\right)-\bv x_{n}\right\Vert ^{2} +\lambda C(\theta)\label{eq:l2_loss}\,,
\end{align} 
where $\theta$ are the parameters of the denoiser model $\bv h_{\theta}$ and $C(\theta)$ is a regularization term.
We focus on a shallow ReLU network with a skip connection as the parametric model of interest \citep{Ongie2020A, zeno2023how}, given by  
\begin{equation}\label{eq:htheta}
    \bv h_\theta(\bv y) = \sum_{k=1}^K \va_k [\vw_k^\T \bv y + b_k]_+ + \mV \bv y + \vc\,,
\end{equation}
where $\theta = (\{\theta_k\}_{k=1}^K; \vc,\mV)$ with $\theta_k=(b_k,\va_k,\vw_k)\in \R \times \R^d \times \R^d$ and $\vc \in \R^d, \mV \in \R^{d\times d}$ and the  regularization term is a $\ell^2$ penalty on the weights, but not on the biases and skip connections, i.e.,
\begin{align}
    C(\theta) = \frac{1}{2}\sum_{k=1}^{K}\left(\|\va_k\|^2+\|\vw_k\|^2\right)\label{eq:l2_regularization}\,.
\end{align}
\citet{zeno2023how} showed that in the ``low-noise regime'', i.e. when the clusters of noisy samples around each clean data point are well-separated\footnote{The noise level in the low-noise regime, though small, is not negligible and has been noted as practically ``useful''  \citep{zeno2023how}, e.g. for diffusion sampling \citep{raya2023spontaneous}.}, there are multiple solutions 
minimizing
the empirical MSE (first term in \eqref{eq:l2_loss}). Each of these solutions has a different generalization capability. They studied the solution at which the $\ell_2$ regularization of \eqref{eq:l2_regularization} is minimized.
\begin{definition} 
Let $\bv h_\theta: \R^d \rightarrow \R^d$ denote a shallow ReLU network of the form of \eqref{eq:htheta}.
For any function $\bv h:\R^d\rightarrow \R^d$ realizable as a shallow ReLU network, we define its \textbf{representation cost} as
\begin{align}
R(\bv h)  &= \inf_{\theta: \, \bv h = \bv h_\theta} C\left(\theta\right)  \nonumber \\
 &= \inf_{\theta: \, \bv h = \bv h_\theta} \sum_{k=1}^K \|\va_k\|~~\mathrm{s.t.}~~\|\bv w_k\| = 1,~\forall k\,, \label{eq: R norm}
\end{align}
and a \textbf{minimizer} of this cost, i.e., a `min-cost' solution, as 
\begin{align}
    \bv h^*\in\mathrm{arg}\!\min_{\bv h} R(\bv h)~~\mathrm{s.t.}~~{\bv h}(\bv y_{n,m}) = {\bv x}_n~~\forall n, m\,.\label{eq:mainopt}
\end{align}
\end{definition}
In the multivariate case, finding an exact min-cost solution for finitely many noise realizations is generally intractable. Therefore, \citet{zeno2023how} simplified \eqref{eq:mainopt} by assuming that $\bv h(\bv y) = \bv x_n$ for all $\bv y$ in an open ball centered at $\vx_n$. Specifically, letting $B(\vx_n,\rho)$ denote the ball of radius $\rho$ centered at $\vx_n$, we simplify notations by writing this constraint as $\bv h(B(\vx_n,\rho)) = \{\vx_n\}$. Consider minimizing the representation cost under this constraint, that is, solving
\begin{equation}\label{eq:fitonballs}
    \bv h^{*}_{\rho}\left(\bv y\right) \in \mathrm{arg}\!\min_{\bv h} R(\bv h)~~\mathrm{s.t.}~~{\bv h}(B(\vx_n,\rho)) = \{\vx_n\},~~\forall n.
\end{equation}
Even this surrogate problem is still challenging to solve explicitly in the general case. Nonetheless, it can be solved for two specific configurations of training data points, which serve as prototypes for more general configurations. The first case is when all the data points form an obtuse simplex, i.e., the generalization of an obtuse triangle to higher dimensions, and the second case is when the data points form an equilateral triangle (see Appendix \ref{Appendix: proof of section 4}). 

\section{The Probability Flow and the Score Flow} \label{sec:the score flow}
Given an explicit solution for the neural network denoiser, we estimate the score function by leveraging the connection between the MMSE denoiser and the score function \citep{robbins1956empirical, miyasawa1961empirical, stein1981estimation}, 
\begin{align}
      \bv h_{\mathrm{MMSE}}\left(\bv y\right)= \bv y + \sigma ^2\nabla \log p\left(\bv y\right)\,,
\end{align}
where $p\left(\bv y\right)$ is the probability density function of the noisy observation. From this relation, we can estimate the  score function $\nabla \log p\left(\bv y\right)$ as
\begin{align}
      \bs s\left(\bv y\right) = \frac{\bv h^*_{\rho}(\vy)-\bv y}{\sigma^2}\,, \label{eq:score function} 
\end{align}
where $\bv h^*_{\rho}(\vy)$ is the minimum norm denoiser.
In diffusion models, a stochastic process is typically used to sample new images. However, to generate unseen images from the data distribution, \citet{song2021denoising} introduced a deterministic sampling process---the probability flow ODE \citep{song2021scorebased,Karras2022edm}.

We assume in this paper the variance exploding (VE) case, for which the probability flow ODE is given by
\begin{align}
    \forall t\in [0,T]: \frac{\mathrm{d} \bv y_t}{\mathrm{d}t}=- \frac{1}{2}\frac{\mathrm{d} \sigma^2_t}{\mathrm{d}t}\nabla\log p\left(\bv y_t,\sigma_t\right)\,, \label{eq:probability_flow}
\end{align}
where the score is estimated using the neural network denoiser $\nabla\log p\left(\bv y_t,\sigma_t\right)\approx \bv s\left(\bv y_t, \sigma_t\right)$, and $\sigma_t=\sqrt{t}$ is the diffusion time scheduler. The minus sign in the probability flow ODE arises due to the reverse time variable: 
we initialize at $\bv y_T$, and finish at $\bv y_0$, a sample from the data distribution.  
In Appendix \ref{Appendix: The probability flow ODE
is equivalent to score flow} we show that by using time re-scaling arguments the probability flow ODE is equivalent to the following ODE
\begin{align}
    \frac{\mathrm{d}\bv y_r}{\mathrm{d}r} = \bv h_{\rho_{g^{-1}_r}}^{*}(\bv y_r)-\bv y_r, \label{eq:time depended score flow}
\end{align}
where $g_r = -\log\sigma_r$, assuming the radius of the noise balls satisfies $\rho_t = \alpha \sigma_t$ for some $\alpha > 0$.

Additionally, we will also analyze the score flow, which is a simplified case of \eqref{eq:time depended score flow} where $\rho$ does not depend on $t$. Analyzing the score flow can be helpful in understanding the dynamics of the probability flow. The score flow represents the sampling process from one of the modes of the (multi-modal) distribution of $\bv y$. The score flow is initialized at $\bv y_0$ and for $t>0$ follows
\begin{align}
    \frac{\mathrm{d}\bv y_t}{\mathrm{d}t} = \nabla \log p\left(\bv y\right)\,.\label{eq:score_flow}
\end{align}
Using the estimated score function and time re-scaling $r = \frac{1}{\sigma^2}t$ we obtain the score flow 
\begin{align}
    \frac{\mathrm{d}\bv y_r}{\mathrm{d}r} = \bv h^{*}_{\rho}(\bv y_r)-\bv y_r\,. \label{eq:rescaling_score_flow}
\end{align}
Notably, in contrast to the probability flow ODE, the min-cost denoiser here is independent of $t$. 

\section{The Probability and Score Flow  of Min-cost Denoisers} \label{sec:The probabilty flow and the score flow of min-cost denoisers}
In this section, we consider training sets that  model different types of data manifolds, and state for each type the possible convergence points of the score and probability flows of min-cost solutions.  As the score flow is a specific instance of probability flow (after time re-scaling) in which the variance profile is fixed, the difference between the convergence points of these two flows thus illuminates the effect of the variance reduction scheduling $\sigma_t$ (and thus the $\rho_t$ schedule) on the generated sample.

We begin with the following simple, yet general, observation on the dynamics of score flow. For this dynamics, the stability condition for a stationary point  $\bv y$ is that any eigenvalue of the Jacobian matrix of the score function with respect to the input $\bv y$, i.e.,  $\lambda\left(\bv J\left(\bv y\right)\right)$ satisfies
\begin{align}
    \mathrm{Re}\{\lambda\left(\bv J\left(\bv y\right)\right)\}<0 \,. \label{eq:stability condition}
\end{align}
We next show that in any model that perfectly fits an open ball of radius $\rho>0$ around the training points (and thus also interpolates the training set), the clean data points are stable stationary points of the score flow. This implies that, when initialized near these points, the process can converge to the clean data points. 
\begin{proposition}
Let $\rho>0$ be arbitrary. 
Let $\bv h\left(\bv y\right)$ be a denoiser that satisfies  ${\bv h}(B(\vx_n,\rho)) = \{\vx_n\}$ for all $ n\in [N]$ (and thus interpolates the training data).  Then, any training point $\bv y \in \{\bv x_n\}_{n=1}^{N}$ is a stable stationary point of  \eqref{eq:score_flow} where we estimate the score using $s\left(\bv y\right) = \frac{\bv h(\vy)-\bv y}{\sigma^2}$.
\end{proposition}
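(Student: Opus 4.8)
The plan is to check the two defining requirements of a stable stationary point in turn: that each clean data point is a fixed point of \eqref{eq:score_flow}, and that the linearization of the score flow there satisfies the stability condition \eqref{eq:stability condition}.

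First, stationarity is immediate. Since $\vx_n \in B(\vx_n,\rho)$, the hypothesis ${\bv h}(B(\vx_n,\rho)) = \{\vx_n\}$ gives $\bv h(\vx_n) = \vx_n$, and therefore $s(\vx_n) = \bigl(\bv h(\vx_n) - \vx_n\bigr)/\sigma^2 = \vzero$. So $\vx_n$ is a stationary point of the ODE $\dot{\bv y} = s(\bv y)$.

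Second, I would compute the Jacobian of $s$ at $\vx_n$. The one point that needs a word of care is that $\bv h$, being a ReLU network, is only piecewise linear; but on the \emph{open} ball $B(\vx_n,\rho)$ it is the constant map $\bv y \mapsto \vx_n$, hence it is differentiable throughout that ball with vanishing Jacobian, in particular $\bv J_{\bv h}(\vx_n) = \vzero$. Since $s(\bv y) = \bigl(\bv h(\bv y) - \bv y\bigr)/\sigma^2$, this yields $\bv J(\vx_n) = \frac{1}{\sigma^2}\bigl(\bv J_{\bv h}(\vx_n) - \mI\bigr) = -\frac{1}{\sigma^2}\mI$, whose only eigenvalue is $-1/\sigma^2$. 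As $\mathrm{Re}\{-1/\sigma^2\} = -1/\sigma^2 < 0$, condition \eqref{eq:stability condition} holds, so $\vx_n$ is a stable stationary point.

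There is essentially no obstacle here; the argument is short, and the only subtlety (the non-differentiability of ReLU networks) is vacuous precisely because $\bv h$ is locally constant near each training point. If a fully explicit statement of the attraction is wanted, I would additionally note that one need not invoke linearization at all: inside $B(\vx_n,\rho)$ the flow \eqref{eq:score_flow} reads $\dot{\bv y} = (\vx_n - \bv y)/\sigma^2$, whose solution $\bv y(t) = \vx_n + e^{-t/\sigma^2}\bigl(\bv y(0) - \vx_n\bigr)$ remains in the ball and converges to $\vx_n$; thus the entire ball $B(\vx_n,\rho)$ lies in the basin of attraction of $\vx_n$, which makes precise the claim that the process converges to the clean data point when initialized near it.
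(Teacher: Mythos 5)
Your proof is correct and follows essentially the same route as the paper: verify stationarity from interpolation, then compute the Jacobian on the interior of the ball where $\bv h$ is constant to get $\bv J = -\frac{1}{\sigma^2}\mI$ and invoke the stability criterion \eqref{eq:stability condition}. The explicit closed-form solution $\bv y(t) = \vx_n + e^{-t/\sigma^2}(\bv y(0) - \vx_n)$ is a nice, but optional, addition showing the whole ball lies in the basin of attraction.
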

\begin{proof}
For all $\bv y \in \{\bv x_n\}_{n=1}^{N}$ we get that $s\left(\bv y\right) = 0$ since the denoiser interpolates the training data. In addition, for all $\bv y \in \mathrm{int}\left(B(\vx_n,\rho))\right)$ the Jacobian matrix is
\begin{align}
    \bv J\left(\bv y\right) = -\frac{1}{\sigma^2}\bv I\,,
\end{align}
therefore the stability condition of \eqref{eq:stability condition} holds.
\end{proof}
This result implies that, when the score function is differentiable and the training points are the only stationary points, the score flow will converge to the training points with probability 1.

\subsection{Flow Properties on Analytically Solvable Datasets}

\citet{zeno2023how} found the min-cost solution $\bv h^{*}_{\rho}$ analytically in three cases: (1) orthogonal points, (2) points that form an obtuse angle with one of the points, and (3) a specific case of $3$ training points forming an equilateral triangle. 

For simplicity, we consider the case of a dataset composed of orthogonal points, and defer to the appendices of other types of datasets. Specifically, suppose that we have $N$ training points $\{\bv x_n\}_{n=0}^{N-1}$ where $\bv x_0 =\bm 0$ and the remaining training points are orthogonal, i.e., $\bv x_i^\top \bv x_j = 0$ for all $i,j>0$ with $i\neq j$.\footnote{The result holds for the general case where $\bv x_0$ is non-zero, provided that $\left( \bv x_i-\bv x_0\right)^\top\left( \bv x_j-\bv x_0\right) = 0$.} This approximates the behavior of data in many generic distributions (e.g., standard normal), which becomes more orthogonal in higher dimensions \citep{saxe2013exact, boursier2022gradient}. For example, for standard i.i.d. Gaussian data $\bv x_n$, it can be shown using the analysis in Section 3.2.3 of \citet{vershynin2018high} along with a union bound, that the largest cosine similarity between two distinct datapoints satisfies $\max_{n \neq m} \frac{\bv x_n \cdot \bv x_m}{|\bv x_n| |\bv x_m|} \sim \sqrt{\frac{\ln N}{d}}$ with high probability. Therefore, in the realistic regime $\exp(d) \gg N > d \gg 1$, most pairs are nearly orthogonal. Let $\vu_n = \vx_n/\|\vx_n\|$ for all $n=1,...,N-1$ . A minimizer of \eqref{eq:fitonballs}, $\bv h^*_{\rho}$, is given by \citep[proof of Theorem 3]{zeno2023how}\footnote{The same arguments used in the proof of Theorem 3 in \citep{zeno2023how} can be applied to prove \eqref{eq:orthogonal_minimizer}. The requirement for strictly obtuse angles (i.e., $\bv x_i^\top \bv x_j<0$ instead of $\bv x_i ^\top \bv x_j\le0$) in \citep{zeno2023how} is only made specifically to ensure the uniqueness of the solution.}
\begin{align}
    \bv h^*_{\rho}(\bv y) =
    \sum_{n=1}^{N-1}  
    \Bigg(
    &\frac{\left \Vert \bv x_n \right \Vert}{\left \Vert \bv x_n \right \Vert - 2\rho} 
    \Big([\vu_n^\top \vy - \rho]_+ \notag \\
    &- [\vu_n^\top \vy - \left(\left \Vert \bv x_n \right \Vert - \rho\right)]_+\Big)\vu_n
    \Bigg).\label{eq:orthogonal_minimizer}
\end{align}
We prove (Appendix \ref{appendix:proof orthogonal of stationary points}) the set of stationary points is the set of all possible sums of training points.
\begin{theorem}\label{theorem:orthogonal_stationary_points}
Suppose that the training points $\{\bv x_0, \bv x_1,\bv x_2,...,\bv x_{N-1}\} \subset \R^d$ are orthogonal. Then, the set of the stable stationary points of \eqref{eq:score_flow} is
$\mathcal{A} = \{ \sum_{n \in \mathcal{I}} \bv x_n \mid  \mathcal{I} \subseteq [N-1] \}$.
\end{theorem}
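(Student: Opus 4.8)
The plan is to exploit the orthonormality of $\{\vu_n\}_{n=1}^{N-1}$ to reduce the score flow to a product of scalar flows, one per coordinate $\vu_n^\top\bv y$, together with a contracting flow on the orthogonal complement. Concretely, I would first write $\bv y = \sum_{n=1}^{N-1} y_n\vu_n + \bv y^\perp$ with $y_n = \vu_n^\top\bv y$ and $\bv y^\perp \perp \mathrm{span}\{\vu_1,\dots,\vu_{N-1}\}$. From \eqref{eq:orthogonal_minimizer} we have $\bv h^*_\rho(\bv y) = \sum_{n=1}^{N-1} \phi_n(y_n)\,\vu_n$, where $\phi_n(s) = \tfrac{\|\bv x_n\|}{\|\bv x_n\|-2\rho}\big([s-\rho]_+ - [s-(\|\bv x_n\|-\rho)]_+\big)$ depends only on the single coordinate $s=y_n$ and has no component outside the span. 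Hence the rescaled score flow \eqref{eq:rescaling_score_flow} decouples into $\dot y_n = f_n(y_n) := \phi_n(y_n) - y_n$ for $n=1,\dots,N-1$ and $\dot{\bv y}^\perp = -\bv y^\perp$. In particular every stationary point has $\bv y^\perp = \bm 0$, and the problem reduces to analyzing the scalar fields $f_n$.

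Next I would analyze each $f_n$ in the low-noise regime, where $2\rho < \|\bv x_n\| =: L_n$ (this is also the condition that makes \eqref{eq:orthogonal_minimizer} well-defined). The map $\phi_n$ is continuous and piecewise linear with three pieces: $\phi_n \equiv 0$ on $(-\infty,\rho]$; $\phi_n(s) = \tfrac{L_n}{L_n-2\rho}(s-\rho)$ on $[\rho, L_n-\rho]$; and $\phi_n \equiv L_n$ on $[L_n-\rho,\infty)$. Therefore $f_n$ is continuous and piecewise linear with $f_n' = -1$ on the two outer pieces and $f_n' = \tfrac{2\rho}{L_n-2\rho} > 0$ on the middle piece, and its zeros are exactly $s \in \{0,\, L_n/2,\, L_n\}$. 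A short check shows $f_n(\rho) = -\rho \neq 0$ and $f_n(L_n-\rho) = \rho \neq 0$, so none of the zeros of $f_n$ coincides with a kink; thus $f_n$ is differentiable at each of its zeros, with $f_n'(0) = f_n'(L_n) = -1$ and $f_n'(L_n/2) = \tfrac{2\rho}{L_n-2\rho} > 0$ (strictly positive since $2\rho < L_n$ places $L_n/2$ in the interior of the middle piece).

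I would then assemble the stationary points and apply the stability criterion \eqref{eq:stability condition}. A point $\bv y$ is stationary iff $\bv y^\perp = \bm 0$ and $y_n \in \{0, L_n/2, L_n\}$ for every $n$. At such a point — none of whose coordinates is a kink — the score $\bv s(\bv y) = \sigma^{-2}(\bv h^*_\rho(\bv y) - \bv y)$ is differentiable, with Jacobian $\bv J(\bv y) = \sigma^{-2}\big(\sum_{n=1}^{N-1} f_n'(y_n)\,\vu_n\vu_n^\top - \bv P_\perp\big)$, where $\bv P_\perp$ is the orthogonal projection onto $(\mathrm{span}\{\vu_n\})^\perp$. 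This matrix is symmetric, with eigenvalues $\sigma^{-2} f_n'(y_n)$ for $n=1,\dots,N-1$ and $-\sigma^{-2}$ with multiplicity $d-(N-1)$. Hence \eqref{eq:stability condition} holds iff $f_n'(y_n) < 0$ for all $n$, i.e. iff $y_n \in \{0, L_n\}$ for all $n$. Writing $\mathcal{I} = \{n : y_n = L_n\} \subseteq [N-1]$ and using $L_n\vu_n = \bv x_n$, such a point equals $\sum_{n\in\mathcal{I}}\bv x_n$; conversely, each choice of $\mathcal{I}$ yields a stationary point satisfying \eqref{eq:stability condition} (the empty choice giving $\bv x_0 = \bm 0$). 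Therefore the set of stable stationary points is exactly $\mathcal{A} = \{\sum_{n\in\mathcal{I}}\bv x_n : \mathcal{I}\subseteq[N-1]\}$.

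The decoupling step is immediate from orthonormality, so the real work — and the main place to be careful — is the piecewise analysis of $f_n$, in particular verifying that the stationary points never sit at a kink of $\bv h^*_\rho$ (where the Jacobian, and hence \eqref{eq:stability condition}, is not defined); the identities $f_n(\rho) = -\rho$ and $f_n(L_n-\rho) = \rho$ settle this. The only hypothesis used beyond orthogonality is the low-noise condition $2\rho < \min_n \|\bv x_n\|$, which both validates \eqref{eq:orthogonal_minimizer} and ensures the unstable fixed point $L_n/2$ lies strictly inside the middle linear piece so that its instability is strict.
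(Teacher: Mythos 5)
Your proof is correct and takes essentially the same route as the paper's: orthogonality lets you align axes so the Jacobian is diagonal in the $\{\vu_n\}$ basis, and a stationary point is stable precisely when no coordinate sits on the middle (expansive) linear piece of $\phi_n$, leaving $y_n \in \{0,\|\bv x_n\|\}$ for all $n$ and hence exactly the partial sums $\sum_{n\in\mathcal{I}}\bv x_n$. The one thing you add that the paper leaves implicit is the check that the stationary coordinates $\{0,\|\bv x_n\|/2,\|\bv x_n\|\}$ never coincide with a ReLU kink (since $f_n(\rho)=-\rho\neq 0$ and $f_n(\|\bv x_n\|-\rho)=\rho\neq 0$), so the Jacobian is well-defined at every stationary point and the eigenvalue test is legitimate.
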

This implies that the stationary points are the vertices of a hyperbox (an $n$-dimensional analog of a rectangular). Next, we prove (in Appendix \ref{appendix:proof orthogonal score flow convergence}) that the score flow converges to the vertex of the hyperbox closest to the initialization $\bv y_0$. Also, for some $\bv y_0$, score flow first converges to the hyperbox boundary, then to a specific vertex.

\begin{theorem}\label{theorem: orthogonal score flow convergence}
Suppose that the training points $\{\bv x_0, \bv x_1,\bv x_2,...,\bv x_{N-1}\} \subset \R^d$ are orthogonal. 
Consider the score flow where we estimate the score using $\bv s\left(\bv y\right) = \frac{\bv h^*_{\rho}(\vy)-\bv y}{\sigma^2}$ and an initialization point $\bv y_0$. If $\forall i\in [N-1]:\bv u_i^\top\bv y_0 \ne \frac{\left \Vert \bv x_i\right \Vert}{2}$, then 
\begin{itemize}
\item We converge to the closest vertex of the hyperbox to the initialization $\bv y_0$.
\item If the closest point to $\bv y_0$ on the hyperbox is a point on its boundary which is not a vertex, then for any $\epsilon < \min_{i}|\bv u_i ^\top\bv y_0|$ there exists $\rho_0\left(\epsilon\right)$ and $T_0\left(\epsilon,\rho\right), T_1\left(\rho\right)$ such that for all $\rho < \rho_0\left(\epsilon\right)$ and all $T\in[T_0\left(\epsilon,\rho\right),T_1\left(\rho\right)]$, the point $\bv y_T$ is not a stable stationary point and at most at distance $\epsilon$ from the boundary of the hyperbox.
\end{itemize}
\end{theorem}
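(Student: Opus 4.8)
The plan is to exploit the fact that, because the training points are orthogonal and the minimizer has the explicit form \eqref{eq:orthogonal_minimizer}, the score flow \eqref{eq:rescaling_score_flow} \emph{decouples} into scalar ODEs. Write $\bv y=\sum_{n=1}^{N-1}z_n\vu_n+\bv y^{\perp}$ with $z_n=\vu_n^{\top}\bv y$ and $\bv y^{\perp}\perp\mathrm{span}\{\vu_1,\dots,\vu_{N-1}\}$. Using $\vu_i^{\top}\vu_j=\delta_{ij}$ and the fact that $\bv h^{*}_{\rho}(\bv y)\in\mathrm{span}\{\vu_n\}$, the flow becomes $\dot{\bv y}^{\perp}=-\bv y^{\perp}$ together with, for each $n$,
\begin{align}
\dot z_n=f_n(z_n):=\frac{\|\bv x_n\|}{\|\bv x_n\|-2\rho}\big([z_n-\rho]_+-[z_n-(\|\bv x_n\|-\rho)]_+\big)-z_n .
\end{align}
First I would record the elementary phase line of $f_n$ on $\R$: it is continuous and piecewise linear, equal to $-z$ for $z<\rho$, to $\tfrac{2\rho}{\|\bv x_n\|-2\rho}\big(z-\tfrac{\|\bv x_n\|}{2}\big)$ on $[\rho,\|\bv x_n\|-\rho]$, and to $\|\bv x_n\|-z$ for $z>\|\bv x_n\|-\rho$; its zeros are $0$ (attracting, slope $-1$), $\|\bv x_n\|/2$ (repelling, slope $+2\rho/(\|\bv x_n\|-2\rho)>0$), and $\|\bv x_n\|$ (attracting, slope $-1$). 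Consequently $z_n(\cdot)$ is monotone, converges to $0$ if $z_n(0)<\|\bv x_n\|/2$ and to $\|\bv x_n\|$ if $z_n(0)>\|\bv x_n\|/2$, never reaching an endpoint in finite time, while $\bv y^{\perp}(r)=e^{-r}\bv y^{\perp}(0)\to\vzero$.

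For the first bullet, combining the coordinatewise limits gives $\bv y_r\to\bv v^{\star}:=\sum_{n:\,z_n(0)>\|\bv x_n\|/2}\bv x_n$ (the hypothesis $\vu_i^{\top}\bv y_0\neq\|\bv x_i\|/2$ is exactly what rules out starting on the repelling manifold). Expanding $\|\bv y_0-\sum_{n\in\mathcal I}\bv x_n\|^2=\|\bv y^{\perp}(0)\|^2+\sum_{n\in\mathcal I}(z_n(0)-\|\bv x_n\|)^2+\sum_{n\notin\mathcal I}z_n(0)^2$ and minimizing over $\mathcal I$ coordinatewise shows $n\in\mathcal I$ is optimal iff $z_n(0)>\|\bv x_n\|/2$, so $\bv v^{\star}$ is precisely the hyperbox vertex nearest $\bv y_0$.

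For the second bullet, let $\tilde z_n=\mathrm{clip}(z_n(0),0,\|\bv x_n\|)$, so the point of the hyperbox $H$ nearest $\bv y_0$ is $P=\sum_n\tilde z_n\vu_n$; "$P\in\partial H$, not a vertex" is equivalent to $S:=\{n:z_n(0)\in(0,\|\bv x_n\|)\}$ being nonempty and a proper subset of $[N-1]$. I would then split coordinates into a fast group and a slow group. For $n\notin S$, the hypothesis $\epsilon<\min_i|\vu_i^{\top}\bv y_0|$ forces $z_n(0)<0$ or $z_n(0)\ge\|\bv x_n\|$, so $z_n$ stays in a slope-$(-1)$ regime with $|z_n(r)-\tilde z_n|=e^{-r}|z_n(0)-\tilde z_n|$; together with $\|\bv y^{\perp}(r)\|=e^{-r}\|\bv y^{\perp}(0)\|$ this lets me pick $T_0=T_0(\epsilon,\bv y_0)$ after which the fast part is within $\epsilon$ of $P$. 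For $n\in S$, shrinking $\rho_0(\epsilon)$ keeps each such $z_n(0)$ strictly inside the middle (repelling) regime, where $|z_n(r)-z_n(0)|=|z_n(0)-\|\bv x_n\|/2|\,(e^{2\rho r/(\|\bv x_n\|-2\rho)}-1)$ as long as $z_n(r)$ remains in that regime; hence there is $T_1(\rho)=\Theta(\epsilon/\rho)$ up to which the slow coordinates drift by less than $\epsilon$, and since $T_1(\rho)\to\infty$ as $\rho\to0$, shrinking $\rho_0(\epsilon)$ further makes $[T_0,T_1(\rho)]$ nonempty (also enforcing $\rho<\min_n\|\bv x_n\|/2$ so \eqref{eq:orthogonal_minimizer} is valid). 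On this window $\bv y_r$ is within $\epsilon$ of $P\in\partial H$; and it is not a stable stationary point because, picking any $n_0\in S$, $z_{n_0}(r)\in(0,\|\bv x_{n_0}\|)\setminus\{\|\bv x_{n_0}\|/2\}$ (monotone, never at an endpoint, moving away from $\|\bv x_{n_0}\|/2$), whereas every stationary point of the decoupled flow has $n_0$-th coordinate in $\{0,\|\bv x_{n_0}\|/2,\|\bv x_{n_0}\|\}$ and vanishing $\bv y^{\perp}$-component.

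The individual estimates are routine; the one delicate point is the two–timescale bookkeeping in the second bullet: the orthogonal part and the $n\notin S$ coordinates relax at an $O(1)$ rate, while the $n\in S$ coordinates drift at the $O(\rho)$ rate of the repelling middle regime. The constants $\rho_0(\epsilon)$, $T_0(\epsilon,\rho)$, $T_1(\rho)$ must therefore be fixed in that order to guarantee a nonempty stopping window on which $\bv y_r$ is simultaneously $\epsilon$-close to $\partial H$ and bounded away from every vertex and every repelling fixed point. I expect this coordination of constants, rather than any single computation, to be the main obstacle.
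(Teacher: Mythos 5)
Your proposal is essentially the paper's own argument: decompose the flow orthogonally along the $\vu_n$ directions and the complement, analyze each scalar ODE by its piecewise-linear phase line (attracting zeros at $0$ and $\|\bv x_n\|$, repelling at $\|\bv x_n\|/2$), conclude coordinate-wise convergence to the nearest vertex for the first bullet, and run a fast/slow two-timescale comparison for the second. Your extra step spelling out that $\sum_{z_n(0)>\|\bv x_n\|/2}\bv x_n$ really is the nearest vertex (via the coordinatewise distance expansion) fills in something the paper leaves implicit.

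One small mismatch with the stated claim: you choose $T_1$ so that the slow coordinates drift by less than $\epsilon$, which makes $T_1=\Theta(\epsilon/\rho)$ depend on $\epsilon$, whereas the theorem asserts a purely $\rho$-dependent $T_1(\rho)$. The paper avoids this by taking $T_1$ to be the escape time of the slow coordinates from the middle (repelling) regime, $T_1=\min_{n\in S}\Delta T_n(\rho)=O(\rho)$ in $t$-time, which is $\epsilon$-free; closeness to $\partial H$ then follows solely from the equilibrated outer/orthogonal coordinates, since the slow ones never leave $\bigl(\rho,\|\bv x_n\|-\rho\bigr)$ on that window even if they drift more than $\epsilon$. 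Your argument is correct but proves a slightly more restrictive statement than the one written; swapping in the escape-time $T_1$ (and keeping your $T_0$) recovers it exactly, with the window non-empty for small $\rho$ because $T_0=O(\rho^2)\ll T_1=O(\rho)$ in $t$-time under $\rho=\alpha\sigma$. Also, your parenthetical that the hypothesis $\epsilon<\min_i|\vu_i^\T\bv y_0|$ ``forces $z_n(0)<0$ or $z_n(0)\ge\|\bv x_n\|$'' for $n\notin S$ is not what that hypothesis does — membership in $S^{\mathrm c}$ already gives $z_n(0)\le 0$ or $z_n(0)\ge\|\bv x_n\|$ by definition; the $\epsilon$ bound is used to guarantee that the outer coordinates start more than $\epsilon$ from their limits, so the relaxation time $T_0$ is finite and positive.
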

Next, we consider the probability flow. For tractable analysis, we approximate the score estimator for small noise levels (i.e., for all $\min_{n\in[N-1]}\frac{\rho_t}{\left \Vert \bv x_n \right \Vert} \ll 1$) via  Taylor's approximation to obtain (See Figure \ref{fig:score flow trajectories} in Appendix \ref{app:Additional simulations} for a comparison of the exact and approximated score function trajectories, which are nearly identical at low noise levels)
\begin{align}
    \bv s\left(\bv y,t\right) =\frac{1}{\sigma_t^2} \left(\sum_{n=1}^{N-1} \bv u_n  \phi(\bv u_n^{\top} \bv y) - \left(\bv I - \sum_{n=1}^{N-1} \bv u_n \bv u_n^{\top}\right)\bv y\right)
     \label{eq:score estimation orthogonal}
\end{align}
where 
\begin{align}
    \phi(z) = \begin{cases} 
    -z & z < \rho_t \\
    \rho_t\left(\frac{2}{\left\Vert \bv x_n\right\Vert} z -1\right) & \rho_t < z < \left\Vert \bv x_n\right\Vert - \rho_t \\
    \left\Vert \bv x_n\right\Vert  - z & z > \left\Vert \bv x_n\right\Vert - \rho_t
    \end{cases}\,.
\end{align}
With this approximation, one can show the probability flow and the score flow have a similar trajectory (for small $\rho$), if they have the same initialization point. However, the  $\rho_t $ diffusion time scheduler in probability flow induces ``early stopping''. This can lead to the probability flow to converge to a non-vertex boundary point (in contrast to score flow), or to influence the speed of convergence to a stationary point. We show this in the following result for the probability flow (proved in Appendix \ref{Appendix: proof orthogonal probability flow convergence}) 

\begin{theorem} \label{theorem: orthogonal probability flow convergence}
Suppose that the training points $\{\bv x_0, \bv x_1,\bv x_2,...,\bv x_{N-1}\} \subset \R^d$ are orthogonal. Consider the probability flow where $\sigma_t=\sqrt{t}$, we estimate the score using \eqref{eq:score estimation orthogonal}, and $\bv y_T$ is the initialization point. If $\forall i\in [N-1]: \bv u_i^\top\bv y_T \ne \frac{\left \Vert \bv x_i\right \Vert}{2}$, then
\begin{itemize}
    \item If the closest point to $\bv y_T$ on the hyperbox is a vertex, then we converge to this vertex.
    \item If the closest point to $\bv y_T$ on the hyperbox is not a vertex, then there exists $\tau ({\bv y}_T,\rho_T)$ such that we converge to the closest vertex to the initialization point $\bv y_T$ if $T>\tau ({\bv y}_T,\rho_T)$, and we converge to a point on the boundary of the hyperbox if $T<\tau ({\bv y}_T,\rho_T)$.
\end{itemize}
\end{theorem}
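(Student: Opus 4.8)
The plan is to exploit the orthonormality of the $\bv u_n := \bv x_n/\|\bv x_n\|$, $n=1,\dots,N-1$, to decouple the probability flow. Substituting the approximate score \eqref{eq:score estimation orthogonal} into \eqref{eq:time depended score flow} gives $\dot{\bv y}_r = \sum_{n=1}^{N-1}\bv u_n\,\phi^{(n)}_{\rho(r)}(\bv u_n^\top\bv y_r) - (\bv I - \sum_n\bv u_n\bv u_n^\top)\bv y_r$, where $\phi^{(n)}$ is $\phi$ with $\|\bv x_n\|$ plugged in and $\rho(r)$ is the radius schedule expressed in the rescaled time. Writing $z_n := \bv u_n^\top\bv y$ and hitting the right-hand side with $\bv u_m^\top$, the cross terms cancel because $\bv u_m^\top\bv u_n=\delta_{mn}$, leaving the $N-1$ decoupled scalar equations $\dot z_n = \phi^{(n)}_{\rho(r)}(z_n)$; likewise $\tfrac{d}{dr}(\bv v^\top\bv y)=-\bv v^\top\bv y$ for every $\bv v$ orthogonal to $\mathrm{span}\{\bv u_n\}$, so the component of $\bv y_r$ orthogonal to the span decays exponentially to $0$. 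Hence $\bv y_r$ converges to a point of the hyperbox $\mathcal{H}:=\{\bv y: \bv u_n^\top\bv y\in[0,\|\bv x_n\|]\ \forall n,\ (\bv I-\sum_n\bv u_n\bv u_n^\top)\bv y=0\}$ iff each $z_n$ converges, and identifying the limit with a vertex or a non-vertex boundary point of $\mathcal{H}$ becomes a one-dimensional question per coordinate.

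Next I would do the time change and the scalar phase-line analysis. With $\sigma_t=\sqrt t$ we have $r = -\log\sigma_t = -\tfrac12\log t$, so $\sigma=e^{-r}$ and, under $\rho_t=\alpha\sigma_t$, $\rho(r)=\alpha e^{-r}$; the flow runs over $r\in[r_T,\infty)$ with $r_T=-\tfrac12\log T$, and the total integrated radius is $\int_{r_T}^{\infty}\rho(r)\,dr = \alpha e^{-r_T}=\alpha\sqrt T=\rho_T$. Fix $n$, write $L:=\|\bv x_n\|$, $z:=z_n$, and work in the low-noise regime $\rho_T<L/2$ so that the middle branch of $\phi$ is nonempty. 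The claims are: (i) if $z(T)<\rho_T$ then $z(r)<\rho(r)$ for all $r$ — on this branch $\dot z=-z$, so $z$ and $\rho$ decay at the common rate $e^{-r}$ and $z/\rho$ is conserved (and if $z<0$ then $z<0<\rho$ trivially) — whence $z\to0$; symmetrically, if $z(T)>L-\rho_T$ then $z\to L$ (once $z$ reaches $L-\rho(r)$ it rides that edge). (ii) If $z(T)\in(\rho_T,L-\rho_T)$, put $w:=z-L/2$; on the middle branch $\dot w=\tfrac{2\rho(r)}{L}w$, so $w(r)=w(T)\exp\!\big(\tfrac2L\int_{r_T}^r\rho\big)$ drifts monotonically away from $0$ with $|w(r)|\le|w(T)|e^{2\rho_T/L}$. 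Here $\rho_T<\delta_n:=\min\!\big(z(T),L-z(T)\big)$, and one checks the elementary inequality $\delta_n< \tfrac{L}{2}\log\tfrac{L/2}{|w(T)|}$, which forces $|w(r)|<L/2$ for all $r$; the gap between $z$ and the shrinking edge then has positive derivative $\rho(r)\big(1\mp 2w(r)/L\big)>0$, so $z$ never escapes the middle branch and freezes at $z(\infty)=L/2+w(T)e^{2\rho_T/L}\in(0,L)$. Summarizing: coordinate $n$ converges to the endpoint of $[0,\|\bv x_n\|]$ nearest to $z_n(T)$ whenever $z_n(T)\notin(0,\|\bv x_n\|)$ or $\rho_T>\delta_n$, and to a strictly interior point whenever $z_n(T)\in(0,\|\bv x_n\|)$ and $\rho_T<\delta_n$.

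Finally I would assemble. The condition that the closest point of $\mathcal{H}$ to $\bv y_T$ is a vertex is equivalent to $z_n(T)\notin(0,\|\bv x_n\|)$ for every $n$; in that case subcase (i) applies to every coordinate and $\bv y_r$ converges to the vertex obtained by clamping each $z_n(T)$ into $\{0,\|\bv x_n\|\}$ — which is exactly that closest vertex — giving the first bullet. If instead the closest point is not a vertex, set $\rho^\star:=\max\{\delta_n: z_n(T)\in(0,\|\bv x_n\|)\}>0$ and $\tau:=(\rho^\star/\alpha)^2$ (the quantity written $\tau(\bv y_T,\rho_T)$ in the statement). When $T>\tau$ we have $\rho_T>\delta_n$ for every $n$ with $z_n(T)\in(0,\|\bv x_n\|)$, so every coordinate resolves to its nearest endpoint and $\bv y_r$ converges to the closest vertex; when $T<\tau$ at least one coordinate freezes at an interior value, so the limit is a non-vertex point of $\mathcal{H}$, which lies on the boundary of $\mathcal{H}$ — automatic when $d\ge N$ since $\mathcal{H}$ is then lower-dimensional in $\R^d$, and otherwise still a proper-face point whenever some coordinate resolves to an endpoint. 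The hypothesis $\bv u_i^\top\bv y_T\ne\|\bv x_i\|/2$ enters only to keep the closest vertex well defined and to keep each starting coordinate off the unstable midpoint equilibrium of its branch.

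The main obstacle is the scalar analysis of step 2: the ODE $\dot z=\phi_{\rho(r)}(z)$ is non-autonomous through $\rho(r)$, so one cannot read stability off a fixed phase line, and the delicate points are (a) the ratio-conservation argument showing that a coordinate starting in an edge branch never leaves it, hence is genuinely pulled to that endpoint, and (b) the monotone-gap argument showing that a coordinate starting in the middle branch never reaches the shrinking edges and therefore freezes at an interior value — together with checking the inequality $\delta_n< \tfrac{\|\bv x_n\|}{2}\log\tfrac{\|\bv x_n\|/2}{|z_n(T)-\|\bv x_n\|/2|}$ that makes $\rho_T=\delta_n$ the operative threshold. Matching the geometric closest-point condition to these coordinatewise conditions, and the dimension remark about the boundary of $\mathcal{H}$, are comparatively routine.
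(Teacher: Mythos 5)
Your proposal follows essentially the same route as the paper: decouple the flow coordinate-by-coordinate using orthonormality of the $\vu_n$, pass to the time variable $r=-\log\sigma_t$, observe $\rho(r)=\alpha e^{-r}$, and solve the scalar ODE branch-by-branch (the middle-branch limit $\tfrac{L}{2}+w(T)e^{2\rho_T/L}$ you derive is the same closed form the paper obtains after its second substitution $k=-\alpha e^{-r}$). Where you genuinely add something is in the two invariance lemmas — the ratio-conservation argument for the edge branches and the monotone-gap argument $\dot g = \tfrac{2\rho}{L}(g+\rho)>0$ for the middle branch — which show that a coordinate remains for all $r$ in the branch it starts in at $r_T$. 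The paper applies the three branch formulas but does not verify that the trajectory stays on the branch, and this matters: because of your elementary inequality $\min\bigl(z(T),L-z(T)\bigr)<\tfrac{L}{2}\log\tfrac{L/2}{|z(T)-L/2|}$, the paper's claimed transition value $T=\bigl(\tfrac{L}{2\alpha}\bigr)^2\log^2\tfrac{L/2}{|z(T)-L/2|}$ is always strictly larger than the value at which a coordinate actually leaves the middle branch, namely $T=\bigl(\delta_n/\alpha\bigr)^2$ with $\delta_n:=\min\bigl(z_n(T),\|\vx_n\|-z_n(T)\bigr)$. Your threshold $\tau=(\rho^\star/\alpha)^2$ with $\rho^\star=\max\{\delta_n:z_n(T)\in(0,\|\vx_n\|)\}$ is the correct one; for $T$ between $(\rho^\star/\alpha)^2$ and the paper's expression every interior coordinate is already in an edge branch and the flow reaches a vertex, whereas the paper's wording would put that regime in the non-vertex case. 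In short: same decoupling and time-rescaling, but your branch-invariance lemmas make the scalar phase portrait rigorous and yield the sharp $\tau$; one should only note that at $T=\tau$ itself some $\delta_n=\rho_T$ and the excluded-midpoint hypothesis $\vu_i^\top\vy_T\neq\|\vx_i\|/2$ does not cover this knife-edge, which both you and the paper implicitly leave aside.
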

Theorem \ref{theorem: orthogonal probability flow convergence} shows that the probability flow converges\footnote{Note that, unlike the score flow, the probability flow ODE reaches the point exactly. Since this is a special case of convergence, and to maintain consistent terminology, we use the term "converge to a point" in both cases.} to a vertex of the hyperbox or a point on the boundary of the hyperbox. We consider this hyperbox boundary as an implicit data manifold---the diffusion model samples from this hyperbox boundary even though we did not assume an explicit sampling model that generated the training data, such as a distribution supported on the manifold. However, in some cases probability flow ODE can converge to specific points in this manifold: the training points, or sums of training points (``virtual points''). We view these virtual points as representing a form of generalization, as they go beyond the empirical data distribution and form novel combinations not present in the training set.

This result aligns well with empirical findings that diffusion models can memorize individual training examples and generate them during sampling \citep{carlini2023extracting}. In addition, an empirical result shows that Stable Diffusion~\citep{rombach2022high} can reproduce training data by piecing together foreground and background objects that it has memorized \citep{somepalli2023diffusion}. This behavior resembles our result that the probability flow can also converge to sums of training points. In Stable Diffusion we observe a ``semantic sum'' of training points; however, our analysis focuses on the probability flow of a simple 1-hidden-layer model, while in deep neural networks summations in deeper layers can translate into more intricate semantic combinations.

In Appendices \ref{Appendix:Obtuse-angle datasets} and \ref{Appendix:An equilateral triangle dataset}, we extend our results to non-orthogonal datasets. In Appendix \ref{Appendix:Obtuse-angle datasets}, we analyze training points forming an 
$(N-1)$-simplex, where $\bv x_0$ forms an obtuse angle with all other vertices. We prove that the set of stable stationary points is a subset of all partial sums of the training points. Additionally, we show that when angles between data points are nearly orthogonal, a stable stationary point corresponding to the sum of all points exists. We then prove that the score flow first converges to a point along a chord connecting the origin to another training point before settling on an edge of the chord. Similarly, the probability flow converges either to a point on the chord or to one of its edges.
In Appendix \ref{Appendix:An equilateral triangle dataset}, we consider training points forming an equilateral triangle. We show that the score flow first converges to the triangle’s face before ultimately reaching a vertex.

\section{Simulations}\label{sec:simulations}
\subsection{Probability Flow and Score Flow}
In this section, we demonstrate the findings of Theorems ~\ref{theorem:orthogonal_stationary_points}, \ref{theorem: orthogonal score flow convergence} and \ref{theorem: orthogonal probability flow convergence} in shallow neural networks.
In practical settings, the continuous probability flow ODE given by \eqref{eq:probability_flow} is discretized to $S$ timesteps,  as 
\begin{align}
    \vy_{t-1} = \vy_t + (\sigma_t^2 - \sigma_{t-1}^2)\frac{ (\vh^*_{\rho_t}(\vy_t) - \vy_t)}{2\sigma_t^2},\quad t=T,\ldots,1\, ,\label{eq:discrete ode}
\end{align}
where $\vh^*_{\rho_t}(\vy_t)$ is modeled as a series of $S$ denoisers (usually with weight sharing), which are applied consecutively to gradually denoise the signal.
In this setting, the sampling should theoretically be initialized at $T=\infty$, however in practice it is initialized from a finite timestep $T$, which is chosen such that $\sigma_T\gg \|\bv x_i\|$ for all $i$. Similarly, the score-flow of \eqref{eq:score_flow} is discretized as 
\begin{align}
    \vy_{t+1} = \vy_t + \gamma \frac{ (\vh^*_{\rho_{t_0}}(\vy_t) - \vy_t)}{\sigma_{t_0}^2}, \quad t=0,1,\ldots \, ,\label{eq:discrete score flow ode}
\end{align}
where $\gamma$ is some step size and here $t_0$ is a fixed timestep (so that all iterations are with the same denoiser). Note that here $t$ increases along the iterations, and since we use a single denoiser, there is no constraint on the number of iterations we can perform.

It should be noted that while our theorems characterize only the low-noise regime, here we simulate a more practical sampling process, which starts the sampling from large noise. Namely, the initialization ($\bv y_T$ in \eqref{eq:discrete ode} and $\bv y_0$ in \eqref{eq:discrete score flow ode}) is drawn from a Gaussian with large $\sigma$. Thus, our theoretical analysis becomes relevant only once the dynamics enter the low-noise regime. 

To demonstrate our results for the case of an orthogonal dataset, we use orthonormal training samples, set $\sigma_t=\sqrt{t}$, and choose $T=100$ to ensure an effectively high noise at the beginning of the sampling process. 
We train a set of $S=150$ denoisers, ensuring $50$ equally-spaced noise levels in the ``low-noise regime'' and $100$ equally-spaced noise levels outside it. Note that since we are discretizing an ODE, using sufficiently small timesteps ensures that the discretization does not affect the results. In practical models, some denoisers are guaranteed to operate in the low-noise regime, as the final sampled point is obtained when
$t\approx 0$, where the noise level is inherently small. We train our networks on data in dimension $d=30$, with $M=500$ noisy samples per training sample, taking the dimension of the hidden layer of the networks to be $K=300$.

To be consistent with our theory, which assumes the denoiser achieves exact interpolation over the noisy training samples, we use a non-standard training protocol to enforce close-to-exact interpolation.
Specifically, we pose the denoiser training as the equality constrained optimization problem
\begin{equation}
\min_{\theta} C(\theta)~~\mathrm{s.t.}~~\vh_{\theta}(\vy_{n,m}) = \vx_n,~~\forall n,m
\end{equation}
which we optimize using the Augmented Lagrangian (AL) method (see, e.g., \citep{nocedal2006penalty}).
Specifically, we define
\begin{align}
    \gL_{AL}(\theta, \mQ, \mu) := &C(\theta) +
    \frac{1}{MN}\sum_{m=1}^{M}\sum_{n=1}^{N}\frac{\mu}{2}\|\vh_\theta\left(\vy_{n,m}\right) - \vx_n\|^2 \nonumber \\
    &+ \langle \vq^{(n,m)},  \vh_\theta\left(\vy_{n,m}\right) - \vx_n\rangle
\end{align}
where $\mu \in\R_{>0}$,  $\vq^{(n,m)}\in\R^d$ represents a vector of Lagrange multipliers, and $\mQ \in \R^{d \times MN}$ is the matrix whose columns are $\vq^{(n,m)}$ for all $m =1,...,M$, $n=1,...,N$. 
Then, starting from an initialization of $\mu_0 > 0$ and $\mQ_0=\bm 0$, for $k=0,1,...,\gK$ we perform the iterative updates:
\begin{align}
    \theta_{k+1} &= \underset{\theta}{\arg\min}~ \gL_{AL}(\theta_k, \mQ_k, \mu_k) \label{eq:optimization_prob}\\
    \vq^{(n,m)}_{k+1} &= \vq_k^{(n,m)} + \mu_k(\vh_\theta\left(\vy_{n,m}\right) - \vx_n),~~\forall n,m \\
    \mu_{k+1} &= \eta\mu_k, 
\end{align}
 where $\eta > 1$ is a fixed constant.
 The solution of \eqref{eq:optimization_prob} is approximated by following standard training using the Adam optimizer~\citep{kingma2015adam}
with a learning rate of $10^{-4}$ for $10^4$ iterations. We additionally take $\eta=3$ and $\gK=7$, and decrease the learning rate by $0.5$ after each iterative update.

\begin{figure}
    \centering
    \includegraphics[width=\linewidth]{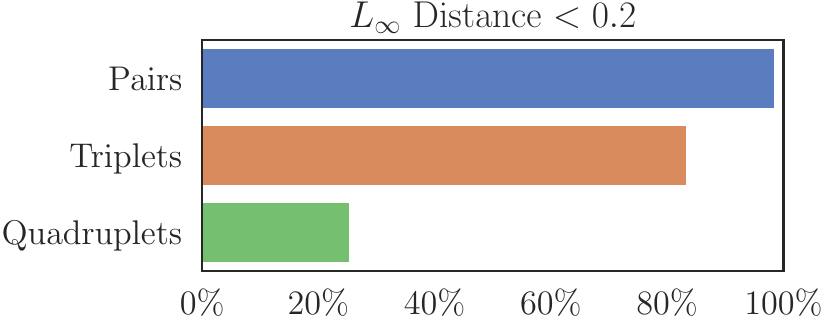}
    \caption{
    \textbf{Existence of stable virtual training points.}
    We run fixed-point iterations on a single denoiser, starting from all possible pair-wise, triplet-wise, and quadruplet-wise combinations of training samples. The plot shows the percentage of points that converged within an $L_\infty$ distance of $0.2$ to the original, virtual, input point.
    }
    \label{fig:fixedPointIters}
 \end{figure}

We start by demonstrating the existence of \textit{virtual} training points, that is, stable stationary points that are sums of training points, as predicted by Theorem~\ref{theorem:orthogonal_stationary_points}. We take a denoiser from the ``low-noise regime'' ($\sigma_t=0.095$ in this example) and run 10 fixed-point iterations on all the predicted virtual points that consist of combinations of pairs, triplets and quadruplets of the training points. 
In Figure~\ref{fig:fixedPointIters} we plot the percentage of these runs that converged within an $L_\infty$ distance of $0.2$ to the predicted virtual point.
As can be seen, 98.6\% of the predicted virtual points composed of pairs of training points are stable in practice, and the stability of virtual points decreases as higher-numbers of combinations are considered. 
Nevertheless, the absolute number of stable virtual points increases substantially as higher-numbers of combinations are considered. 
Specifically, in the same example a total of 429, 3390, and 6965 stationary points were found for the pairs, triplet and quadruplet combinations. 
The increase in the absolute numbers is due to the higher number of higher-order sums. The decrease in percentages is due to small deviations in the ReLU boundaries of the trained denoiser compared to the theoretical optimal denoiser.
These deviations have a greater impact on stationary points that involve sums of more training points.

Next, we explore the full dynamics of the diffusion process. We start with the score flow for a single  denoiser from timestep $t_0$, which corresponds to noise level $\sigma_{t_0}=0.095$. We randomly sample $500$ points from $\gN(0,100 \mI)$, and apply $3000$ score-flow iterations to each, with a step size of $\gamma=5\cdot10^{-4}$. 
The right hand side of Figure~\ref{fig:scoreFlowSampling} shows the percentage of points that converged within an $L_\infty$ distance of $0.2$ to either virtual points, training points, or a boundary of the hyperbox. On the left hand side of Figure~\ref{fig:scoreFlowSampling}, we plot the projection of all samples on three dimensions. Out of $500$ samples, almost all points converged to virtual points, which is expected in random initialization due to their larger number, compared to the training points. The rest of the points converged to the hyperbox's boundaries. The path the points take towards the hyperbox first draws them to the closest boundary, and then they drift along the boundary towards the closest stable stationary point. 

Finally, we examine a full diffusion process with the probability ODE. Here we follow \eqref{eq:discrete ode} using $S=150$ trained denoisers, starting again from $500$ randomly sampled points from $\gN(0,\sigma_T \mI)$.
Our results hold where the noise level is small compared to the norm of the training samples. Therefore, denoisers of large noise levels are not expected to have stable virtual points. In probability flow, most noise levels are large compared to this norm, as the sampling process begins with a large variance (in the VE case). Specifically, in our example only the last 50 denoisers have small noise levels.
Yet, as can be seen on the right hand side of Figure~\ref{fig:diffusionSampling}, a large percentage of the samples produced are virtual points.
In contrast to the score flow case, the start of the sampling process here attracts most samples towards the mean of the training points, as any optimal-MSE denoiser would, which creates a biased starting point to the sampling process in the ``low-noise regime''. From this regime onwards, the points travel along the boundaries of the hyperbox towards their nearest stable points, which is usually a training point. This behavior is demonstrated on the left side of Figure~\ref{fig:diffusionSampling}, where the projected path of a random point is drawn starting from the $90^{\text{th}}$ step. See Appendix~\ref{app:more_thresholds} for comparisons of additional thresholds and Appendix~\ref{appenix:weightDecay} for experiments where neural denoisers are trained using the standard Adam optimizer, with and without weight decay regularization.
We show that without weight decay, the probability flow converges only to training points or boundary points of the hyperbox.
In contrast, with weight decay, it also converges to virtual points, aligning with the results obtained using the Augmented Lagrangian method for training the minimum-norm denoiser.  
We conduct additional experiments on the effect of adding dropout to the training process. This results in a higher MSE loss, leading to fewer denoisers in the ``low-noise regime'' to perfectly fit the training data, and thus to more samples being generated outside the boundary of the hyperbox. Please see Appendix~\ref{appendix:dropout} for further details.
Finally, Appendix~\ref{app:hopfield_models} includes a comparison to detection metrics for memorization, spuriousness, and generalization as proposed by \citet{pham2024memorization}, using the same setup as in Figure \ref{fig:CubesAndBars}.

\begin{figure*}[t]
    \centering
    \begin{subfigure}[t]{0.475\textwidth}
        \centering
        \includegraphics[height=4.2cm]{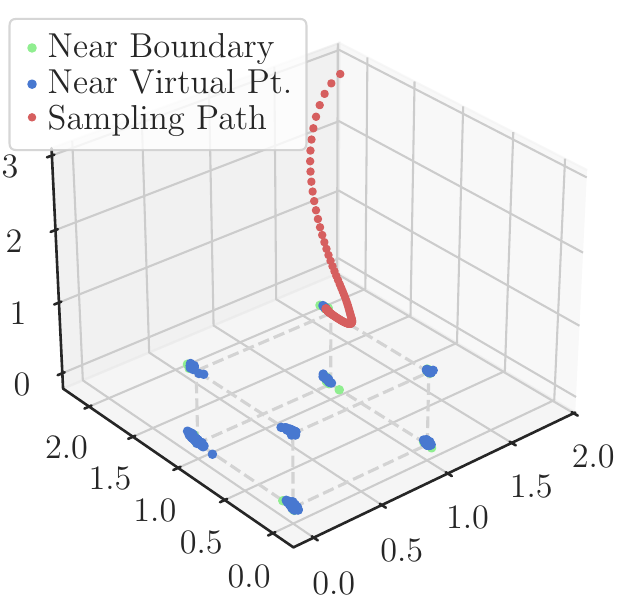}%
        \includegraphics[height=4.2cm]{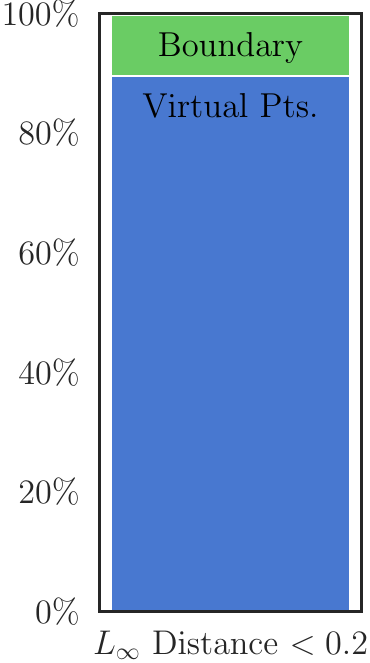}
        \caption{Score Flow.}
        \label{fig:scoreFlowSampling}
    \end{subfigure}
    \hspace{0.02\textwidth}
    \begin{subfigure}[t]{0.475\textwidth}
        \centering
        \includegraphics[height=4.2cm]{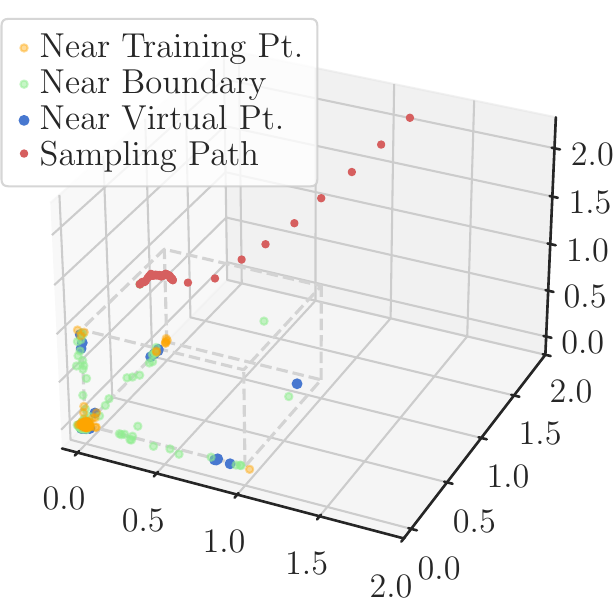}%
        \includegraphics[height=4.2cm]{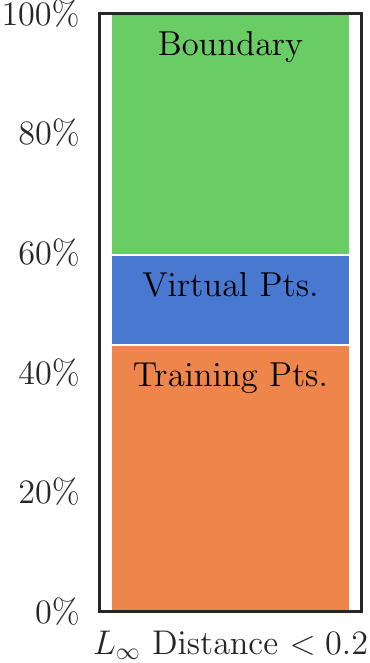}
        \caption{Probability Flow.}
        \label{fig:diffusionSampling}
    \end{subfigure}
    \caption{\textbf{Projection to three dimensions and convergence types frequency of randomly sampled points.}
    We run the discrete ODE formulation of \eqref{eq:discrete ode} for 500 randomly sampled points from $\R^{30}$, for both sampling using the score flow (\ref{fig:scoreFlowSampling}) and a regular diffusion process (\ref{fig:diffusionSampling}). 
    For each, we plot on the right the percentage of points that converged to either a virtual point, a training point, or to the boundaries of the hyperbox, out of all points. On the left, we plot the sampling results projected to three dimensions, along with the path a single point took until convergence.
    In score flow, all points converged to either virtual points or to boundaries of the hyperbox, which is evident in the point clusters in the locations of the projected virtual points.  
    For probability flow, the bias induced by the ``large-noise regime'' denoisers diffusion causes more samples to converge around the training points and their adjacent boundaries. Nevertheless, a large percentage of samples still converge in the vicinity of virtual points.
    The paths the points take towards the hyperbox draws them first to the closest boundary, and then, if the step sizes and amount permit, travel along the edges towards the closest stable stationary points.
    }
    \label{fig:CubesAndBars}
\end{figure*}

\subsection{The Effect of the Number of Training Samples}

The effect of the training set size has been explored in several past works~\citep{somepalli2023diffusion,kadkhodaie2024generalization}, as surveyed in detail in Section~\ref{sec:relatedwork}. 
Here we continue the analysis to investigate the effect of changing $N$, the training set size, on the full dynamics of the diffusion process with the probability ODE.
Specifically, we repeat the experiment while reducing $N$. All the hyperparameters are kept the same, except for $M$ which we increase to $2000$ for $N=10$ only, to prevent over-fitting in the large-noise regime. Figure~\ref{fig:smallN} shows the percentage of points that converged within an $L_\infty$ distance of $0.2$ to either virtual points, training points, or a boundary of the hyperbox, for the different $N$ values. The generalization increases with $N$, drawing a larger percentage of samples to converge in the vicinity of virtual points, or to boundaries of the hyperbox. This aligns with the results of \citet{kadkhodaie2024generalization}.

As in the case of strictly orthogonal data it is impossible to have $N>d+1$, we consider here two additional cases to explore similar setups: oversampling duplications from the training set, and augmenting the training set with samples randomly drawn from the boundary of the hyperbox.
When considering the effect of oversampling duplications, previous works observed that diffusion models tend to overfit more to duplicate training points than to other training points~\citep{somepalli2023diffusion}. However, here we study the regime in which the model perfectly fits all the training points. In practice, duplicate training points would cause the neural network to fit them better, at the expense of the other training points. Then, we expect our analysis to effectively hold, but only for the training points that are well-fitted and their associated virtual points. Therefore, this mirrors the case of decreasing $N$, and will cause more convergence to the duplicated training points and increase memorization.
Next, we augment the original orthonormal dataset used in Figure~\ref{fig:CubesAndBars} with additional random data points drawn from the boundary of the hyperbox. We then retrain the denoisers using the AL method for two values: $N=40$ and $N=50$. In these cases, the probability flow still almost exclusively converges to the hyperbox boundary. Further details appear in Appendix~\ref{app:n_gt_d}.
\begin{figure*}[t]
    \centering
    \begin{subfigure}{0.2\linewidth}
        \centering
        \includegraphics[height=4.8cm]{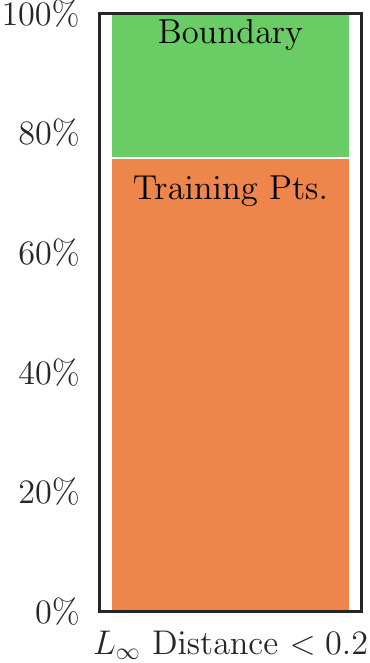}
        \caption{$N=10$.}
    \end{subfigure}%
    \begin{subfigure}{0.2\linewidth}
        \centering
        \includegraphics[height=4.8cm]{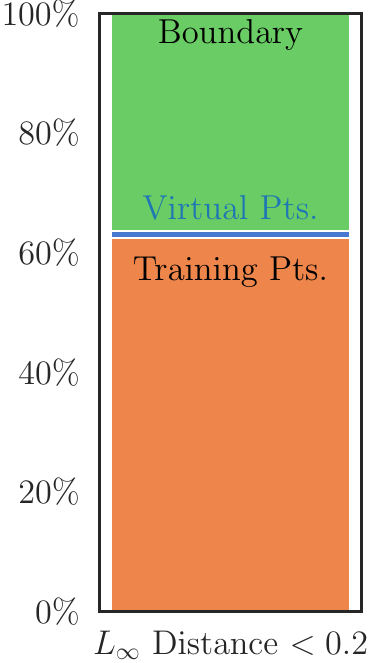}
        \caption{$N=15$.}
    \end{subfigure}%
    \begin{subfigure}{0.2\linewidth}
        \centering
        \includegraphics[height=4.8cm]{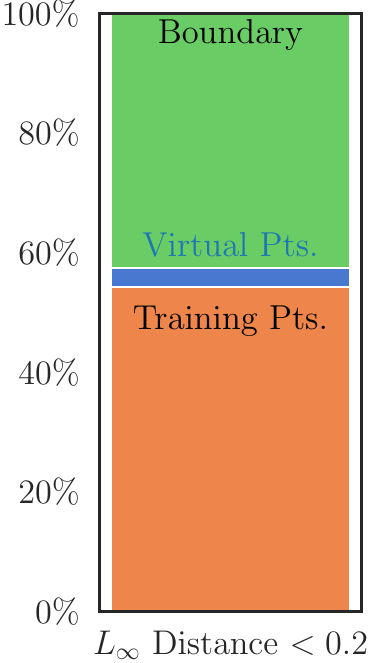}
        \caption{$N=20$.}
    \end{subfigure}%
    \begin{subfigure}{0.2\linewidth}
        \centering
        \includegraphics[height=4.8cm]{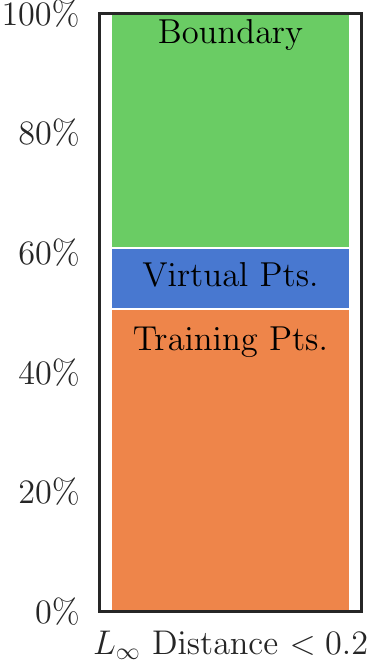}
        \caption{$N=25$.}
    \end{subfigure}%
    \begin{subfigure}{0.2\linewidth}
        \centering
        \includegraphics[height=4.8cm]{figures/StackedBarPlot_Linf0.2_H300_D30_M500.pdf}
        \caption{$N=30$.}
    \end{subfigure}%
    \caption{
    \textbf{Convergence types frequency of randomly sampled points in diffusion sampling for different $N$.}
    We run the discrete ODE formulation of \eqref{eq:discrete ode} for 500 randomly sampled points from $\R^{30}$ for diffusion sampling, using different training set sizes, $N$. 
    We plot the percentage of points that converged to either a virtual point, a training point, or to the boundaries of the hyperbox, out of all points. The generalization increases with $N$, drawing a larger percentage of samples to converge in the vicinity of virtual points and the boundaries of the hyperbox.
    }
    \label{fig:smallN}
\end{figure*}

\section{Related Work}\label{sec:relatedwork}

\paragraph{Memorization and Generalization in Deep Generative Models} Several recent works have sought to explain the transition from memorization to generalization in deep generative models, both from a theoretical and empirical perspective. One early line of work in this vein studied memorization in over-parametrized (non-denoising) autoencoders \citep{radhakrishnan2019memorization,radhakrishnan2020overparameterized}. This work shows that over-parameterized autoencoders trained to low cost are locally contractive about each training sample, such that training images can be recovered by iteratively applying the autoencoder to noisy inputs. A theoretical explanation of this phenomenon using a neural tangent kernel analysis is given in \citep{jiang2020associative}. More recent work has also shown that state-of-the-art diffusion models exhibit a similar form of memorization, such that extraction of training samples is possible by identifying stable stationary points of the diffusion process \citep{carlini2023extracting}. Additionally, when trained on few images, several works have shown that the outputs of diffusion models are strongly biased towards the training set, and thus fail to generalize \citep{somepalli2023diffusion,yoon2023diffusion,kadkhodaie2024generalization}. A recent empirical study suggests that memorization and generalization in diffusion models are mutually exclusive phenomenon, and successful generation occurs only when memorization fails \citep{yoon2023diffusion,zhang2023emergence}. Beyond these empirical studies, recent work has put forward theoretical explanations for generalization in score-based models. In \citep{pidstrigach2022score}, the authors show that score-based models can learn manifold structure in the data generating distribution. A complementary perspective is provided by \citet{kadkhodaie2024generalization}, which argues that diffusion models implicitly encode geometry-adaptive harmonic representations. \citet{ross2025geometricframeworkunderstandingmemorization} propose the Manifold Memorization Hypothesis as a geometric framework for understanding memorization in deep generative models. Using local intrinsic dimension to compare the model's learned manifold with the data manifold, they distinguish between overfitting-driven and data-driven memorization.

\paragraph{Representation Costs and Neural Network Denoisers}
Several other works have investigated  overparameterized autoencoding/denoising networks with minimal representation cost (i.e., minimial $\ell^2$-norm of parameters). Function space characterizations of min-norm solutions of shallow fully connected neural networks are given in \citep{savarese2019infinite,Ongie2020A,parhi2021banach,shenouda2023vector}; extensions to deep networks and emergent bottleneck structure are considered in \citep{jacot2022implicit,jacot2022feature,jacot2023bottleneck,wen2024frequencies}.
The present work relies on the shallow min-norm solutions derived by \citet{zeno2023how} for specific configurations of data points, but goes beyond this work in studying the dynamics of its associated flows.

A recent study investigates properties of shallow min-norm solutions to a score matching objective \citep{zhang2024analyzing}, building off of a line of work that studies min-norm solutions from a convex optimization perspective \citep{pilanci2020neural,ergen2020convex,sahiner2021vector,wang2021convex}. In the case of univariate data, an explicit min-norm solution of the score-matching objective is derived, and convergence
results are given for Langevin sampling with the neural network-learned score function. Additionally, in the multivariate case, general min-norm solutions to the score-matching loss are characterized as minimizers of a quadratic program. Our results differ from \citep{zhang2024analyzing} in that we study different optimization formulations (denoising loss versus score-matching loss) and inference procedures (probability- and score-flow versus Langevin dynamics). Our results focus on high-dimensional data belonging to a simplex, while \citet{zhang2024analyzing} give convergence guarantees only in the case of univariate data.

\section{Discussion}
\paragraph{Conclusions.}
We explored the probability flow ODE of shallow neural networks with minimal representation cost. We showed that for orthogonal dataset and obtuse-angle dataset the probability flow and the score flow follow the same trajectory given the same initialization point and small noise level. The diffusion time scheduler in probability flow induces ``early stopping'', which results in converging to a boundary point instead of a specific vertex (as in score flow) or speed up convergence to a specific vertex. Simulations confirm these findings and show that generalization improves as the number of clean training points increases.
In practice, natural images often lie on a low-dimensional linear subspace due to their approximate low-rank structure (see \citet{zeno2023how}, Appendix D3).
As a result, the denoiser first contracts the data towards this subspace (see Theorem 2 in \citet{zeno2023how}).
If the subspace has a sufficiently high dimension, then data points become approximately orthogonal.
Thus, the geometric structure of high-dimensional image spaces may resemble the hyperbox studied in our work.
One possible extension of this work is to analyze the probability flow ODE in the case of variance-preserving processes. This is an important case since practical diffusion models more often use variance-preserving forward and backward processes.

\paragraph{Limitations.}
A key limitation of our analysis is the assumption (inherited from \citet{zeno2023how}) that the denoiser interpolates data across a full 
$d$-dimensional ball centered around each clean training sample, where 
$d$ represents the input dimension. In real-world scenarios, the number of noisy samples is typically smaller than the input dimension $d$. A more accurate approach might involve assuming that the denoiser interpolates over an 
$(M-1)$-dimensional disc around each training sample, reflecting the norm concentration of Gaussian noise in high-dimensional spaces. Furthermore, for mathematical tractability, our analysis focuses on a single hidden layer model.

\section*{Acknowledgements}
The research of DS was Funded by the European Union (ERC, A-B-C-Deep, 101039436). Views
and opinions expressed are however those of the author only and do not necessarily reflect those of
the European Union or the European Research Council Executive Agency (ERCEA). Neither the
European Union nor the granting authority can be held responsible for them. DS also acknowledges
the support of the Schmidt Career Advancement Chair in AI. The research of NW was partially supported by the Israel Science Foundation (ISF), grant no. 1782/22. The research of TM was partially supported by the Israel Science Foundation (ISF), grant no. 2318/22. The research of GO was partially supported by the US National Science Foundation (NSF), grant no. 2153371.

\section*{Impact Statement}
This paper presents a theoretical analysis of diffusion models under specific constraints, aiming to enhance the understanding of generative models. This may lead to greater transparency when using these models. Moreover, we anticipate that insights gained from these simpler cases will shed light on the memorization and generalization behaviors in large-scale diffusion models, which pose privacy concerns.  Lastly, we note the neural network examined in this paper is a shallow one, whereas practical contemporary implementations almost always involve deep networks. Naturally, addressing deep networks from the outset would pose an impassable barrier. 
In general, our guiding principle for research works that aim to understand new or not-yet-understood phenomena is that we should first study it in the simplest model that shows it, so as not to get distracted by possible confounders, and to enable a detailed analytic understanding. 
For example, when exploring or teaching a statistical problem issues, we would typically start with linear regression, understand the phenomena in this simple case, and then move on to more complex models. Thus, we hope the example we set in this paper will help promote this guiding principle for research and teaching.

\newpage
\bibliography{icml2025}

\begin{thebibliography}{43}
\providecommand{\natexlab}[1]{#1}
\providecommand{\url}[1]{\texttt{#1}}
\expandafter\ifx\csname urlstyle\endcsname\relax
  \providecommand{\doi}[1]{doi: #1}\else
  \providecommand{\doi}{doi: \begingroup \urlstyle{rm}\Url}\fi

\bibitem[Boursier et~al.(2022)Boursier, Pillaud-Vivien, and Flammarion]{boursier2022gradient}
Boursier, E., Pillaud-Vivien, L., and Flammarion, N.
\newblock Gradient flow dynamics of shallow relu networks for square loss and orthogonal inputs.
\newblock In Koyejo, S., Mohamed, S., Agarwal, A., Belgrave, D., Cho, K., and Oh, A. (eds.), \emph{Advances in Neural Information Processing Systems}, volume~35, pp.\  20105--20118. Curran Associates, Inc., 2022.

\bibitem[Carlini et~al.(2023)Carlini, Hayes, Nasr, Jagielski, Sehwag, Tramer, Balle, Ippolito, and Wallace]{carlini2023extracting}
Carlini, N., Hayes, J., Nasr, M., Jagielski, M., Sehwag, V., Tramer, F., Balle, B., Ippolito, D., and Wallace, E.
\newblock Extracting training data from diffusion models.
\newblock In \emph{32nd USENIX Security Symposium (USENIX Security 23)}, pp.\  5253--5270, 2023.

\bibitem[Chao et~al.(2023)Chao, Sun, Cheng, and Lee]{chao2023investigating}
Chao, C.-H., Sun, W.-F., Cheng, B.-W., and Lee, C.-Y.
\newblock On investigating the conservative property of score-based generative models.
\newblock In \emph{International Conference on Machine Learning (ICML)}, 2023.

\bibitem[Ergen \& Pilanci(2020)Ergen and Pilanci]{ergen2020convex}
Ergen, T. and Pilanci, M.
\newblock Convex geometry of two-layer {ReLU} networks: Implicit autoencoding and interpretable models.
\newblock In \emph{International Conference on Artificial Intelligence and Statistics}, pp.\  4024--4033. PMLR, 2020.

\bibitem[Ho et~al.(2020)Ho, Jain, and Abbeel]{ho2020denoising}
Ho, J., Jain, A., and Abbeel, P.
\newblock Denoising diffusion probabilistic models.
\newblock \emph{Advances in neural information processing systems}, 33:\penalty0 6840--6851, 2020.

\bibitem[Jacot(2022)]{jacot2022implicit}
Jacot, A.
\newblock Implicit bias of large depth networks: a notion of rank for nonlinear functions.
\newblock In \emph{The Eleventh International Conference on Learning Representations}, 2022.

\bibitem[Jacot(2023)]{jacot2023bottleneck}
Jacot, A.
\newblock Bottleneck structure in learned features: Low-dimension vs regularity tradeoff.
\newblock In \emph{Advances in Neural Information Processing Systems}, volume~36, pp.\  23607--23629, 2023.

\bibitem[Jacot et~al.(2018)Jacot, Gabriel, and Hongler]{NEURIPS2018_5a4be1fa}
Jacot, A., Gabriel, F., and Hongler, C.
\newblock Neural tangent kernel: Convergence and generalization in neural networks.
\newblock In Bengio, S., Wallach, H., Larochelle, H., Grauman, K., Cesa-Bianchi, N., and Garnett, R. (eds.), \emph{Advances in Neural Information Processing Systems}, volume~31. Curran Associates, Inc., 2018.
\newblock URL \url{https://proceedings.neurips.cc/paper_files/paper/2018/file/5a4be1fa34e62bb8a6ec6b91d2462f5a-Paper.pdf}.

\bibitem[Jacot et~al.(2022)Jacot, Golikov, Hongler, and Gabriel]{jacot2022feature}
Jacot, A., Golikov, E., Hongler, C., and Gabriel, F.
\newblock Feature learning in $l_2$-regularized {DNN}s: Attraction/repulsion and sparsity.
\newblock In \emph{Advances in Neural Information Processing Systems}, volume~35, pp.\  6763--6774, 2022.

\bibitem[Jiang \& Pehlevan(2020)Jiang and Pehlevan]{jiang2020associative}
Jiang, Y. and Pehlevan, C.
\newblock Associative memory in iterated overparameterized sigmoid autoencoders.
\newblock In \emph{International conference on machine learning}, pp.\  4828--4838. PMLR, 2020.

\bibitem[Kadkhodaie et~al.(2024)Kadkhodaie, Guth, Simoncelli, and Mallat]{kadkhodaie2024generalization}
Kadkhodaie, Z., Guth, F., Simoncelli, E.~P., and Mallat, S.
\newblock Generalization in diffusion models arises from geometry-adaptive harmonic representations.
\newblock In \emph{The Twelfth International Conference on Learning Representations}, 2024.
\newblock URL \url{https://openreview.net/forum?id=ANvmVS2Yr0}.

\bibitem[Karras et~al.(2022)Karras, Aittala, Aila, and Laine]{Karras2022edm}
Karras, T., Aittala, M., Aila, T., and Laine, S.
\newblock Elucidating the design space of diffusion-based generative models.
\newblock In \emph{Proc. NeurIPS}, 2022.

\bibitem[Kingma \& Ba(2015)Kingma and Ba]{kingma2015adam}
Kingma, D.~P. and Ba, J.
\newblock Adam: A method for stochastic optimization.
\newblock In \emph{International Conference on Learning Representations}, volume~3, 2015.

\bibitem[Manor \& Michaeli(2024)Manor and Michaeli]{manor2024posterior}
Manor, H. and Michaeli, T.
\newblock On the posterior distribution in denoising: Application to uncertainty quantification.
\newblock In \emph{The Twelfth International Conference on Learning Representations}, 2024.
\newblock URL \url{https://openreview.net/forum?id=adSGeugiuj}.

\bibitem[Miyasawa et~al.(1961)]{miyasawa1961empirical}
Miyasawa, K. et~al.
\newblock An empirical bayes estimator of the mean of a normal population.
\newblock \emph{Bull. Inst. Internat. Statist}, 38\penalty0 (181-188):\penalty0 1--2, 1961.

\bibitem[Nocedal \& Wright(2006)Nocedal and Wright]{nocedal2006penalty}
Nocedal, J. and Wright, S.~J.
\newblock Penalty and augmented lagrangian methods.
\newblock \emph{Numerical Optimization}, pp.\  497--528, 2006.

\bibitem[Ongie et~al.(2020)Ongie, Willett, Soudry, and Srebro]{Ongie2020A}
Ongie, G., Willett, R., Soudry, D., and Srebro, N.
\newblock A function space view of bounded norm infinite width relu nets: The multivariate case.
\newblock In \emph{International Conference on Learning Representations}, 2020.
\newblock URL \url{https://openreview.net/forum?id=H1lNPxHKDH}.

\bibitem[Parhi \& Nowak(2021)Parhi and Nowak]{parhi2021banach}
Parhi, R. and Nowak, R.~D.
\newblock Banach space representer theorems for neural networks and ridge splines.
\newblock \emph{Journal of Machine Learning Research}, 22\penalty0 (43):\penalty0 1--40, 2021.

\bibitem[Pham et~al.(2024)Pham, Raya, Negri, Zaki, Ambrogioni, and Krotov]{pham2024memorization}
Pham, B., Raya, G., Negri, M., Zaki, M.~J., Ambrogioni, L., and Krotov, D.
\newblock Memorization to generalization: The emergence of diffusion models from associative memory.
\newblock In \emph{NeurIPS 2024 Workshop on Scientific Methods for Understanding Deep Learning}, 2024.
\newblock URL \url{https://openreview.net/forum?id=zVMMaVy2BY}.

\bibitem[Pidstrigach(2022)]{pidstrigach2022score}
Pidstrigach, J.
\newblock Score-based generative models detect manifolds.
\newblock \emph{Advances in Neural Information Processing Systems}, 35:\penalty0 35852--35865, 2022.

\bibitem[Pilanci \& Ergen(2020)Pilanci and Ergen]{pilanci2020neural}
Pilanci, M. and Ergen, T.
\newblock Neural networks are convex regularizers: Exact polynomial-time convex optimization formulations for two-layer networks.
\newblock In \emph{International Conference on Machine Learning}, pp.\  7695--7705. PMLR, 2020.

\bibitem[Radhakrishnan et~al.(2019)Radhakrishnan, Yang, Belkin, and Uhler]{radhakrishnan2019memorization}
Radhakrishnan, A., Yang, K., Belkin, M., and Uhler, C.
\newblock Memorization in overparameterized autoencoders.
\newblock In \emph{Deep Phenomena Workshop, International Conference on Machine Learning}, 2019.

\bibitem[Radhakrishnan et~al.(2020)Radhakrishnan, Belkin, and Uhler]{radhakrishnan2020overparameterized}
Radhakrishnan, A., Belkin, M., and Uhler, C.
\newblock Overparameterized neural networks implement associative memory.
\newblock \emph{Proceedings of the National Academy of Sciences}, 117\penalty0 (44):\penalty0 27162--27170, 2020.

\bibitem[Raya \& Ambrogioni(2023)Raya and Ambrogioni]{raya2023spontaneous}
Raya, G. and Ambrogioni, L.
\newblock Spontaneous symmetry breaking in generative diffusion models.
\newblock In \emph{Thirty-seventh Conference on Neural Information Processing Systems}, 2023.
\newblock URL \url{https://openreview.net/forum?id=lxGFGMMSVl}.

\bibitem[Robbins(1956)]{robbins1956empirical}
Robbins, H.
\newblock An empirical bayes approach to statistics.
\newblock In \emph{Proceedings of the Third Berkeley Symposium on Mathematical Statistics and Probability, 1954-1955}, volume~1, pp.\  157--163. Berkeley and Los Angeles: University of California Press, 1956.

\bibitem[Rombach et~al.(2022)Rombach, Blattmann, Lorenz, Esser, and Ommer]{rombach2022high}
Rombach, R., Blattmann, A., Lorenz, D., Esser, P., and Ommer, B.
\newblock High-resolution image synthesis with latent diffusion models.
\newblock In \emph{Proceedings of the IEEE/CVF conference on computer vision and pattern recognition}, pp.\  10684--10695, 2022.

\bibitem[Ross et~al.(2025)Ross, Kamkari, Wu, Hosseinzadeh, Liu, Stein, Cresswell, and Loaiza-Ganem]{ross2025geometricframeworkunderstandingmemorization}
Ross, B.~L., Kamkari, H., Wu, T., Hosseinzadeh, R., Liu, Z., Stein, G., Cresswell, J.~C., and Loaiza-Ganem, G.
\newblock A geometric framework for understanding memorization in generative models.
\newblock In \emph{The Thirteenth International Conference on Learning Representations}, 2025.
\newblock URL \url{https://openreview.net/pdf?id=aZ1gNJu8wO}.

\bibitem[Sahiner et~al.(2021)Sahiner, Ergen, Pauly, and Pilanci]{sahiner2021vector}
Sahiner, A., Ergen, T., Pauly, J., and Pilanci, M.
\newblock Vector-output {ReLU} neural network problems are copositive programs: Convex analysis of two layer networks and polynomial-time algorithms.
\newblock In \emph{International Conference on Learnining Representations (ICLR)}, 2021.

\bibitem[Savarese et~al.(2019)Savarese, Evron, Soudry, and Srebro]{savarese2019infinite}
Savarese, P., Evron, I., Soudry, D., and Srebro, N.
\newblock How do infinite width bounded norm networks look in function space?
\newblock In \emph{Conference on Learning Theory}, pp.\  2667--2690. PMLR, 2019.

\bibitem[Saxe et~al.(2014)Saxe, McClelland, and Ganguli]{saxe2013exact}
Saxe, A.~M., McClelland, J.~L., and Ganguli, S.
\newblock Exact solutions to the nonlinear dynamics of learning in deep linear neural networks.
\newblock In \emph{The Second International Conference on Learning Representations}, 2014.

\bibitem[Shenouda et~al.(2024)Shenouda, Parhi, Lee, and Nowak]{shenouda2023vector}
Shenouda, J., Parhi, R., Lee, K., and Nowak, R.~D.
\newblock Variation spaces for multi-output neural networks: Insights on multi-task learning and network compression.
\newblock \emph{Journal of Machine Learning Research}, 25\penalty0 (231):\penalty0 1--40, 2024.

\bibitem[Sohl-Dickstein et~al.(2015)Sohl-Dickstein, Weiss, Maheswaranathan, and Ganguli]{sohl2015deep}
Sohl-Dickstein, J., Weiss, E., Maheswaranathan, N., and Ganguli, S.
\newblock Deep unsupervised learning using nonequilibrium thermodynamics.
\newblock In \emph{International conference on machine learning}, pp.\  2256--2265. PMLR, 2015.

\bibitem[Somepalli et~al.(2023)Somepalli, Singla, Goldblum, Geiping, and Goldstein]{somepalli2023diffusion}
Somepalli, G., Singla, V., Goldblum, M., Geiping, J., and Goldstein, T.
\newblock Diffusion art or digital forgery? investigating data replication in diffusion models.
\newblock In \emph{Proceedings of the IEEE/CVF Conference on Computer Vision and Pattern Recognition}, pp.\  6048--6058, 2023.

\bibitem[Song et~al.(2021{\natexlab{a}})Song, Meng, and Ermon]{song2021denoising}
Song, J., Meng, C., and Ermon, S.
\newblock Denoising diffusion implicit models.
\newblock In \emph{International Conference on Learning Representations}, 2021{\natexlab{a}}.
\newblock URL \url{https://openreview.net/forum?id=St1giarCHLP}.

\bibitem[Song et~al.(2021{\natexlab{b}})Song, Sohl-Dickstein, Kingma, Kumar, Ermon, and Poole]{song2021scorebased}
Song, Y., Sohl-Dickstein, J., Kingma, D.~P., Kumar, A., Ermon, S., and Poole, B.
\newblock Score-based generative modeling through stochastic differential equations.
\newblock In \emph{International Conference on Learning Representations}, 2021{\natexlab{b}}.
\newblock URL \url{https://openreview.net/forum?id=PxTIG12RRHS}.

\bibitem[Stein(1981)]{stein1981estimation}
Stein, C.~M.
\newblock Estimation of the mean of a multivariate normal distribution.
\newblock \emph{The Annals of Statistics}, 9\penalty0 (6):\penalty0 1135--1151, 1981.

\bibitem[Vershynin(2018)]{vershynin2018high}
Vershynin, R.
\newblock \emph{High-dimensional probability: An introduction with applications in data science}, volume~47.
\newblock Cambridge university press, 2018.

\bibitem[Wang \& Pilanci(2021)Wang and Pilanci]{wang2021convex}
Wang, Y. and Pilanci, M.
\newblock The convex geometry of backpropagation: Neural network gradient flows converge to extreme points of the dual convex program.
\newblock In \emph{International Conference on Learning Representations}, 2021.

\bibitem[Wen \& Jacot(2024)Wen and Jacot]{wen2024frequencies}
Wen, Y. and Jacot, A.
\newblock Which frequencies do cnns need? emergent bottleneck structure in feature learning.
\newblock In \emph{International Conference on Machine Learning}, pp.\  52779--52800. PMLR, 2024.

\bibitem[Yoon et~al.(2023)Yoon, Choi, Kwon, and Ryu]{yoon2023diffusion}
Yoon, T., Choi, J.~Y., Kwon, S., and Ryu, E.~K.
\newblock Diffusion probabilistic models generalize when they fail to memorize.
\newblock In \emph{ICML 2023 Workshop on Structured Probabilistic Inference/Generative Modeling}, 2023.

\bibitem[Zeno et~al.(2023)Zeno, Ongie, Blumenfeld, Weinberger, and Soudry]{zeno2023how}
Zeno, C., Ongie, G., Blumenfeld, Y., Weinberger, N., and Soudry, D.
\newblock How do minimum-norm shallow denoisers look in function space?
\newblock In \emph{Thirty-seventh Conference on Neural Information Processing Systems}, 2023.
\newblock URL \url{https://openreview.net/forum?id=gdzxWGGxWE}.

\bibitem[Zhang \& Pilanci(2024)Zhang and Pilanci]{zhang2024analyzing}
Zhang, F. and Pilanci, M.
\newblock Analyzing neural network-based generative diffusion models through convex optimization.
\newblock \emph{arXiv preprint arXiv:2402.01965}, 2024.

\bibitem[Zhang et~al.(2024)Zhang, Zhou, Lu, Guo, Wang, Shen, and Qu]{zhang2023emergence}
Zhang, H., Zhou, J., Lu, Y., Guo, M., Wang, P., Shen, L., and Qu, Q.
\newblock The emergence of reproducibility and consistency in diffusion models.
\newblock In \emph{Forty-first International Conference on Machine Learning}, 2024.
\newblock URL \url{https://openreview.net/forum?id=HsliOqZkc0}.

\end{thebibliography}
\bibliographystyle{icml2025}

\newpage
\appendix
\onecolumn
\section{Proofs of Results in Section \ref{sec:the score flow}} \label{Appendix: The probability flow ODE
is equivalent to score flow}
The probability flow ODE is given by
\begin{align}
    \frac{\mathrm{d} \bv y_t}{\mathrm{d}t} &=- \frac{1}{2}\frac{\mathrm{d} \sigma^2_t}{\mathrm{d}t}\nabla\log p\left(\bv y_t,\sigma_t\right)
    \\
    &= -\sigma_t \frac{\mathrm{d} \sigma_t}{\mathrm{d}t}\nabla\log p\left(\bv y_t,\sigma_t\right)  \, .
\end{align}
First, we apply change of variable as follows
\begin{align}
        r	&= g\left(t\right) = -\log \sigma_t \\
        \frac{\mathrm{d} r}{\mathrm{d} t}	&= -\frac{1}{\sigma_t}\frac{\mathrm{d} \sigma_t}{\mathrm{d} t} \\
        \frac{\mathrm{d} t}{\mathrm{d} r} & = \left(-\frac{1}{\sigma_t}\frac{\mathrm{d} \sigma_t}{\mathrm{d} t}\right)^{-1} \, .
\end{align}
Therefore,
\begin{align}
    \frac{\mathrm{d}y_t}{\mathrm{d}r} &=\frac{\mathrm{d}y_t}{\mathrm{d}t}\frac{\mathrm{d}t}{\mathrm{d}r}	=\left(-\sigma_t\frac{d\sigma_t}{dt}\nabla\log p\left(y_t,\sigma_t\right)\right)\left(-\frac{\sigma_t}{\frac{d\sigma_t}{dt}}\right) \\
    &=  \sigma_t^2 \nabla\log p\left(y_t,\sigma_t\right)
\end{align}
Next, we estimate the score function using a neural network denoiser, and substitute $t = g^{-1}\left(r\right)$ to obtain
\begin{align}
    \frac{\mathrm{d}y_r}{\mathrm{d}r} = h_{\rho(g^{-1}\left(r\right))}^{*}(y_r)-y_r \, .
\end{align}

\section{Obtuse-Angle Datasets}
\label{Appendix:Obtuse-angle datasets}
In this section we consider the case of a non-orthogonal dataset.
Specifically, suppose the convex hull of the training points $\{\vx_0,\vx_1,...,\vx_{N-1}\} \subset \R^d$ is a $(N-1)$-simplex such that $\vx_0$ forms an obtuse angle with all other vertices; we assume WLOG that $\vx_0=0$. We refer to this as an obtuse simplex.  Let $\vu_n = \vx_n/\|\vx_n\|$ for all $n=1,...,N-1$. In this case, the minimizer $\bv h^*_{\rho}$ is still given by \eqref{eq:orthogonal_minimizer}. 

In Figure \ref{fig:normalized score flow}, we illustrate the normalized score flow for the case of an obtuse $2$-simplex (see Figure \ref{fig:score flow} in Appendix \ref{app:Additional simulations} for the unnormalized score flow). 
The normalized score function is the score function multiplied by the log of the norm of the score and divided by the norm of the score.
As shown, the training points are stationary points. Next, we prove (in Appendix \ref{appendix:proof obtuse of stationary points}) that, in the general case of $N$ training points, the set of stable stationary points is a subset of the set of all partial sums of the training points. Additionally, we demonstrate that when the angles between data points are nearly orthogonal, a stable stationary point corresponding to the sum of the points exists.
\begin{theorem} \label{theorem:obtuse stationary points}
Suppose the convex hull of the training points $\{\vx_0,\vx_1,...,\vx_{N-1}\} \subset \R^d$ is an obtuse simplex. Then,  
the set  $\mathcal{A}$ of the stable stationary points of \eqref{eq:score_flow} satisfies
$ \{\bv x_n\}_{n=0}^{N-1} \subseteq \mathcal{A} \subseteq \{ \sum_{n \in \mathcal{I}} \bv x_n \mid  \mathcal{I} \subseteq \{0,1,\cdots,N-1\} \}$. 
In addition, the point $\sum_{n \in \mathcal{I}} \bv x_n $, where $ \mathcal{I} \subseteq \{0,1,\cdots,N-1\}$ and $|\mathcal{I}| \ge 2$ if $0 \notin \mathcal{I}$ and $|\mathcal{I}| \ge 3$ if $0 \in \mathcal{I}$, is a stable stationary point if $\min_{k\in\mathcal{I}}\left\{ \sum_{i\in\mathcal{I} \setminus \{k\}}\bv u_k^\top \bv u_i \left\Vert \bv x_i\right\Vert \right\} >-\rho$.
\end{theorem}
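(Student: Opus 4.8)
The plan is to prove the three assertions separately, in each case working directly from the closed form \eqref{eq:orthogonal_minimizer} of $\bv h^*_\rho$. Throughout I assume the low-noise condition $0<\rho<\tfrac12\min_{n}\|\bv x_n\|$ (needed for \eqref{eq:orthogonal_minimizer} to be well defined and for the breakpoints to obey $\rho<\|\bv x_n\|-\rho$), and I use the obtuse-simplex hypothesis only through $\bv u_i^\top\bv u_j\le 0$ for all $i\ne j$ in $[N-1]$; the nondegeneracy of the simplex ($\{\bv x_n\}_{n\ge 1}$ linearly independent) is used only to note that the partial sums in $\mathcal A$ are genuinely distinct points. Since $\bv x_0=\bm 0$, inserting the index $0$ into any subset leaves the associated sum of training points unchanged, so in the third assertion I may take $0\notin\mathcal I$, $\mathcal I\subseteq[N-1]$, $|\mathcal I|\ge 2$ (the $k=0$ term in the stated minimum then reads $0>-\rho$ and is vacuous under the convention $\bv u_0:=\bm 0$). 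The first inclusion $\{\bv x_n\}_{n=0}^{N-1}\subseteq\mathcal A$ I would obtain immediately: $\bv h^*_\rho$ satisfies $\bv h^*_\rho(B(\bv x_n,\rho))=\{\bv x_n\}$ by construction, so the preceding Proposition (training points are stable stationary points of any ball-interpolating denoiser) applies.

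For the other two parts I would set $z_n:=\bv u_n^\top\bv y$ and $\psi_n(z):=[z-\rho]_+-[z-(\|\bv x_n\|-\rho)]_+$, so that $\bv h^*_\rho(\bv y)=\sum_{n=1}^{N-1}c_n(\bv y)\bv u_n$ with $c_n(\bv y):=\frac{\|\bv x_n\|}{\|\bv x_n\|-2\rho}\psi_n(z_n)$; since $\psi_n$ is continuous and piecewise linear with slopes in $\{0,1\}$, one has $c_n\in[0,\|\bv x_n\|]$, with $c_n=\|\bv x_n\|\iff z_n\ge\|\bv x_n\|-\rho$ and $c_n=0\iff z_n\le\rho$. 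To prove $\mathcal A\subseteq\{\sum_{n\in\mathcal I}\bv x_n:\mathcal I\subseteq\{0,\dots,N-1\}\}$, let $\bv y$ be a stationary point at which the score is differentiable and put $S:=\{n:\rho<z_n<\|\bv x_n\|-\rho\}$. By direct differentiation the Jacobian of $\bv h^*_\rho$ at $\bv y$ is the symmetric PSD matrix $\bv J^\star=\sum_{n\in S}\frac{\|\bv x_n\|}{\|\bv x_n\|-2\rho}\bv u_n\bv u_n^\top$, so the estimated score $\bv s(\bv y)=(\bv h^*_\rho(\bv y)-\bv y)/\sigma^2$ has Jacobian $\frac1{\sigma^2}(\bv J^\star-\bv I)$. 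If $S\ne\emptyset$, I would take any $n_0\in S$; because $\frac{\|\bv x_{n_0}\|}{\|\bv x_{n_0}\|-2\rho}>1$ and $\|\bv u_{n_0}\|=1$, the Rayleigh quotient satisfies $\bv u_{n_0}^\top(\bv J^\star-\bv I)\bv u_{n_0}\ge\frac{\|\bv x_{n_0}\|}{\|\bv x_{n_0}\|-2\rho}-1>0$, so $\bv J^\star-\bv I$ has a strictly positive eigenvalue and the stability condition \eqref{eq:stability condition} fails. Hence $S=\emptyset$ at every stable stationary point; then each $c_n\in\{0,\|\bv x_n\|\}$ and $\bv y=\bv h^*_\rho(\bv y)=\sum_{n\in\mathcal I}\bv x_n$ with $\mathcal I:=\{n:c_n=\|\bv x_n\|\}$.

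For the sufficient condition I would fix $\mathcal I\subseteq[N-1]$ with $\min_{k\in\mathcal I}\sum_{i\in\mathcal I\setminus\{k\}}\bv u_k^\top\bv u_i\|\bv x_i\|>-\rho$ and set $\bv y^\star:=\sum_{n\in\mathcal I}\bv x_n$ (the case $|\mathcal I|=1$ just recovers the training-point statement). I evaluate $z_n=\bv u_n^\top\bv y^\star$: if $n\notin\mathcal I$ then every term $\bv u_n^\top\bv x_m$ with $m\in\mathcal I$ has $m\ne n$, hence is $\le 0$ by obtuseness, so $z_n\le 0<\rho$ and $c_n=0$; if $n\in\mathcal I$ then $z_n=\|\bv x_n\|+\sum_{m\in\mathcal I\setminus\{n\}}\bv u_n^\top\bv u_m\|\bv x_m\|$, which is $\le\|\bv x_n\|$ by obtuseness and $>\|\bv x_n\|-\rho$ by hypothesis, so $n$ lies strictly in the saturated regime with $c_n=\|\bv x_n\|$. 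Therefore $\bv h^*_\rho(\bv y^\star)=\sum_{n\in\mathcal I}\|\bv x_n\|\bv u_n=\bv y^\star$, so $\bv y^\star$ is stationary, and since all these regime memberships are cut out by strict inequalities in $z_n=\bv u_n^\top\bv y$ they persist on a neighborhood of $\bv y^\star$, on which $\bv h^*_\rho$ is constant equal to $\bv y^\star$. Consequently the estimated score has Jacobian $-\frac1{\sigma^2}\bv I$ there, all eigenvalues equal $-1/\sigma^2<0$, and $\bv y^\star$ is stable.

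The step I expect to be the main obstacle is the upper-bound argument of the second paragraph: its conceptual heart is the Rayleigh-quotient inequality showing that a single unsaturated coordinate $n_0\in S$ already forces a positive eigenvalue of $\bv J^\star-\bv I$ (this is precisely where the amplification factor $\|\bv x_n\|/(\|\bv x_n\|-2\rho)>1$ enters), and the technical nuisance is the kink set where $\bv h^*_\rho$, hence the score, is not differentiable and the linearization test does not literally apply. Since $\mathcal A$ is defined through \eqref{eq:stability condition} in terms of eigenvalues of the Jacobian, restricting to differentiability points is consistent with the statement; a fully rigorous treatment would additionally verify, via a one-sided directional analysis along $\pm\bv u_{n_0}$, that no stable stationary point sits on the kink set either, which I would isolate as a short lemma.
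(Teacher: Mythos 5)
Your proof is correct and follows essentially the same route as the paper's: compute the Jacobian of the score to be $\tfrac{1}{\sigma^2}\bigl(\sum_{n}\gamma_n\Delta_n(\bv y)\,\bv u_n\bv u_n^\top-\bv I\bigr)$ with $\gamma_n=\|\bv x_n\|/(\|\bv x_n\|-2\rho)>1$, observe it is symmetric (so stability reduces to a real-eigenvalue condition), and then use exactly the Rayleigh-quotient test in the unsaturated direction $\bv u_{n_0}$ to show that any $n_0\in S$ forces $\lambda_{\max}>0$; the paper packages the same inequality as $\|\bv a^\top\bv U\|^2>1$ with $\bv a=\bv u_{n_0}$. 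The first inclusion follows from the interpolation proposition, and the sufficient condition is derived by the same plug-in of $\bv y^\star=\sum_{n\in\mathcal I}\bv x_n$ and the same threshold inequality $z_n>\|\bv x_n\|-\rho$. The only place you go beyond the paper is in explicitly checking $z_n\le 0<\rho$ for $n\notin\mathcal I$ (which the paper leaves implicit) and in flagging the kink-set non-differentiability issue — a real gap in the paper's own treatment, and your proposed resolution via one-sided directional analysis is a sensible way to close it, though as stated it remains a sketch rather than a completed lemma.
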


The condition $\min_{k\in\mathcal{I}} \sum_{i\in\mathcal{I} \setminus \{k\}}\bv u_k^\top \bv u_i \left\Vert \bv x_i\right\Vert >-\rho$ holds for almost orthogonal dataset (and $\rho>0$).

Next, we prove (in Appendix \ref{appendix:proof obtuse of score flow convergence}) that in the general case with $N$ training points, for small noise levels (i.e., small $\rho$) and an initialization point close to the chords connecting the origin to each training point ($\bv x_n$), the score flow first converges to a point along a chord connecting the origin and another training point, and then to an edge of the chord ($\mathbf{0}$ or $\bv x_n$, depending on initialization).

\begin{theorem}\label{theorem: obtuse of score flow convergence}
Suppose the convex hull of the training points $\{\vx_0,\vx_2,...,\vx_{N-1}\} \subset \R^d$ is an obtuse simplex. Given an initial point $\bv y_0$ such that $\rho <\bv u_i^\top \bv y_0 < \left \Vert \bv x_i \right \Vert -\rho$ and $\bv u_j^\top \bv y_0 < \rho$ for all $j \ne i$, consider the score flow 
where we estimate the score using $\bv s\left(\bv y\right) = \frac{\bv h^*_{\rho}(\vy)-\bv y}{\sigma^2}$.
Then we converge to the closest edge of the chord. In addition, for all $\epsilon \in (0,\bv u_i^\top \bv y_0$) there exists $\rho_0\left(\epsilon\right)$ and $T_0(\epsilon,\rho), T_1(\rho)$ such that for all $\rho < \rho_0\left(\epsilon\right)$ the point $\bv y_T$ is not a stable stationary point and at most at distance $\epsilon$ from the line between $\bv x_1$ and $\bv x_i$ for $T_0(\epsilon,\rho)<T<T_1(\rho)$.
\end{theorem}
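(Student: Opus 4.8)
The plan is to mimic the strategy used for the orthogonal case (Theorem \ref{theorem: orthogonal score flow convergence}), exploiting the fact that the minimizer $\bv h^*_\rho$ in \eqref{eq:orthogonal_minimizer} has exactly the same functional form here, so the dynamics decouple along the directions $\bv u_1,\dots,\bv u_{N-1}$ plus the orthogonal complement. First I would decompose $\bv y = \sum_{n=1}^{N-1} (\bv u_n^\top \bv y)\,\bv u_n + \bv y_\perp$ (note the $\bv u_n$ need not be orthonormal since the simplex is only obtuse, not right-angled, so I would instead work with the Gram matrix $G_{nm} = \bv u_n^\top \bv u_m$ and track the coordinates $z_n(t) = \bv u_n^\top \bv y_t$). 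From \eqref{eq:rescaling_score_flow} and \eqref{eq:orthogonal_minimizer}, the flow reads $\dot{\bv y} = \sum_n c_n(z_n)\,\bv u_n - \bv y_{\perp}$-type terms, where $c_n$ is the piecewise-linear gain implicit in \eqref{eq:orthogonal_minimizer}. The key structural observation is that the component $\bv y_\perp$ orthogonal to $\mathrm{span}\{\bv u_n\}$ decays like $e^{-t}$, so it is harmless; the interesting behavior is in the $z$-coordinates.

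Next I would use the initialization hypotheses: $\rho < z_i(0) < \|\bv x_i\| - \rho$ puts coordinate $i$ in the "middle" (transport) region of $\phi$, while $z_j(0) < \rho$ for $j \neq i$ puts every other coordinate in the "contracting" region $\phi(z) = -z$ (after re-centering). I would argue by a continuity/invariant-region argument that for $\rho$ small enough these regions are (approximately) forward-invariant: the $j \neq i$ coordinates are driven toward $0$, and because of the obtuse-angle / almost-orthogonality control on the cross terms $\bv u_k^\top \bv u_i \|\bv x_i\|$ (the same quantity appearing in Theorem \ref{theorem:obtuse stationary points}), the $z_i$ coordinate stays in its middle region and is slowly transported — its drift $\dot z_i$ has a definite sign determined by whether $z_i(0)$ is above or below $\|\bv x_i\|/2$, exactly as in the orthogonal case. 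Thus the flow first collapses onto a neighborhood of the chord $\{t\,\bv x_i : t \in [0,1]\}$ (this gives the "$\epsilon$ from the line between $\bv x_1$ and $\bv x_i$" statement — I suspect "$\bv x_1$" there is a typo for the origin $\bv x_0$), and I would make this quantitative by a Grönwall estimate: the distance to the chord is bounded by (initial off-chord mass)$\,\cdot e^{-t}$ plus an $O(\rho)$ forcing term, which is why one needs $\rho < \rho_0(\epsilon)$ and an intermediate time window $T_0(\epsilon,\rho) < T < T_1(\rho)$ — before $T_0$ the off-chord mass hasn't decayed, after $T_1$ the slow transport in $z_i$ has pushed the point past the midpoint and it has started to peel off toward a vertex. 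Finally, once $z_i$ crosses into one of the end regions, the same argument as Proposition/Theorem~\ref{theorem:orthogonal_stationary_points} shows convergence to the nearer endpoint ($\bv x_0 = \mathbf 0$ if $z_i(0) < \|\bv x_i\|/2$, else $\bv x_i$), establishing "we converge to the closest edge of the chord."

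The main obstacle I expect is controlling the cross-coupling introduced by non-orthogonality. In the orthogonal case the coordinates $z_n$ evolve completely independently; here $\dot z_n$ depends on all the $c_m(z_m)$ through $\bv u_n^\top \bv u_m$, so the clean "decoupled piecewise-linear ODE" picture is only approximate, and I must show these cross terms are $O(\rho)$-small or have a sign consistent with the invariant regions (this is precisely where the obtuse-angle assumption and the "almost orthogonal" condition from Theorem \ref{theorem:obtuse stationary points} do the work). The delicate point is near the faces of the piecewise-linear map, where $\phi$ is non-smooth: I would handle crossings of the region boundaries one at a time, checking that the vector field points consistently across each face (a Filippov-type transversality check), and use that there are only finitely many such crossings along any trajectory in the relevant time window. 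The bookkeeping for the constants $\rho_0(\epsilon), T_0(\epsilon,\rho), T_1(\rho)$ — making sure the window $[T_0, T_1]$ is nonempty for $\rho$ small — is the other place where care is needed, but it parallels the orthogonal-case proof in Appendix \ref{appendix:proof orthogonal score flow convergence} closely enough that I would import that scaffolding.
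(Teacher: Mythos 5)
Your proposal circles the correct conclusion but misreads the key structural fact that makes the paper's proof short. In the region singled out by the hypotheses --- $\rho < \bv u_i^\top \bv y < \|\bv x_i\| - \rho$ and $\bv u_j^\top \bv y < \rho$ for all $j\ne i$ --- every ReLU term in \eqref{eq:orthogonal_minimizer} with $n\ne i$ is identically zero, and the $n=i$ term is in its linear segment. Hence $\bv h^*_\rho(\bv y) = \tfrac{\|\bv x_i\|}{\|\bv x_i\|-2\rho}\bv u_i(\bv u_i^\top\bv y-\rho)$ exactly, and the score flow there is a \emph{single} rank-one affine ODE
\begin{align}
\dot{\bv y} = \frac{1}{\sigma^2}\Bigl(\bigl(\tfrac{\|\bv x_i\|}{\|\bv x_i\|-2\rho}\bv u_i\bv u_i^\top - \bv I\bigr)\bv y - \tfrac{\|\bv x_i\|\rho}{\|\bv x_i\|-2\rho}\bv u_i\Bigr),
\end{align}
whose matrix has eigenvector $\bv u_i$ with eigenvalue $\tfrac{2\rho}{\|\bv x_i\|-2\rho}>0$ and the whole of $\bv u_i^\perp$ with eigenvalue $-1$. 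The paper just diagonalizes this via Lemma~\ref{lemma: affine system ODE}, reads off the explicit trajectory, and compares the ``time to approach the chord within $\epsilon$'' (governed by $e^{-t/\sigma^2}$ on $\bv u_i^\perp$, giving $\Delta\tilde T\sim(\rho/\alpha)^2\log(\cdot)$) against the ``time to exit through the $\bv u_i$-face'' (governed by the slow unstable mode, giving $\Delta T\sim \rho\tfrac{\|\bv x_i\|-2\rho}{2\alpha^2}\log(\cdot)$), exactly as in Theorem~\ref{theorem: orthogonal score flow convergence}. The endpoint $\bv x_0$ vs.\ $\bv x_i$ is decided by the sign of $\bv u_i^\top\bv y_0-\|\bv x_i\|/2$.

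The consequence is that the cross-coupling you built your plan around simply does not appear inside this region: there is no Gram-matrix ODE to tame, no Filippov transversality to verify, no Gr\"onwall estimate to set up. The only place the obtuse-angle assumption enters is to confirm that the region is forward-invariant until the $\bv u_i$-face is hit --- and there it \emph{helps} rather than threatens you: along the flow $\dot z_j = (\bv u_j^\top\bv u_i)\,\tfrac{\|\bv x_i\|}{\|\bv x_i\|-2\rho}(z_i-\rho) - z_j$ with $\bv u_j^\top\bv u_i<0$ and $z_i>\rho$, so the first term is negative and $z_j$ decays \emph{faster} than $e^{-t/\sigma^2}$, keeping $z_j<\rho$ automatically. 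Your intuition that ``the obtuse-angle assumption does the work'' is right, but it does it by making the invariance trivial, not by bounding perturbations. If you carried out your plan carefully it would eventually collapse to this observation, but the decomposition into span plus orthogonal complement and the anticipated battle with face-crossings are detours: the eigenbasis you want is $\{\bv u_i\}\cup\bv u_i^\perp$, and there is only one face to cross. (You are correct that ``line between $\bv x_1$ and $\bv x_i$'' should read $\bv x_0$; the paper's own proof also switches $\bv x_0$ and $\bv x_1$ in a few places.)
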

We next turn to the probability flow. To this end, we assume  that the initial point $\bv y_T$ is such that $\rho_T <\bv u_i^\top \bv y_T < \left \Vert \bv x_i \right \Vert -\rho_T$ and $\bv u_j^\top \bv y_T < \rho$ for all $j \ne i$.
We again use Taylor's approximation in the  small-noise level regime (specifically, for all $i\in [N-1]$ $\frac{\rho_t}{\left \Vert \bv x_n \right \Vert} \ll 1$), to obtain the following score estimation at a point $\bv y$ such that $\rho_t <\bv u_i^\top \bv y < \left \Vert \bv x_i \right \Vert -\rho_t$ and $\bv u_j^\top \bv y < \rho_t$ for all $j \ne i$ is
\begin{align}
    \bv s\left(\bv y,t\right) =\frac{1}{\sigma_t^2}\left(\left(\left(1+\frac{2}{\left\Vert \bv x_{i}\right\Vert }\rho_t\right)\bv u_i \bv  u_i^{\top} -\bv I\right)\bv y - \rho_t\bv u_i \right)\,.\label{eq:score estimation obtuse}
\end{align}
We now have the following result regarding probability flow (proved in Appendix \ref{Appendix: proof obtuse probability flow convergence})
\begin{theorem}\label{theorem: obtuse probability flow convergence}
Suppose the convex hull of the training points $\{\vx_0,\vx_2,...,\vx_{N-1}\} \subset \R^d$ is an obtuse simplex. Given an initial point $\bv y_T$ such that $\rho_T <\bv u_i^\top \bv y_T < \left \Vert \bv x_i \right \Vert -\rho_T$ and $\bv u_j^\top \bv y_T < \rho_T$ for all $j \ne i$.
Consider the probability flow 
where $\sigma_t=\sqrt{t}$ and we estimate the score using \eqref{eq:score estimation obtuse}.
Then, $\exists \tau ({\bv y}_T,\rho_T)$) such that we converge to a point on the line connecting $\bv x_1$ and $\bv x_i$ if $T <\tau ({\bv y}_T,\rho_T)$
and if  $T \ge \tau ({\bv y}_T,\rho_T)$ we converge to the closest point in the set $\{\bv x_0, \bv x_i\}$ to  $\bv y_T$.
\end{theorem}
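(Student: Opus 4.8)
\textbf{Proof Proposal for Theorem \ref{theorem: obtuse probability flow convergence}.}

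The plan is to reduce the $d$-dimensional probability flow to a low-dimensional problem by exploiting the structure of the score estimator \eqref{eq:score estimation obtuse}, then analyze that reduced system directly. First I would decompose the ambient space relative to the active direction $\bv u_i$: writing $\bv y = (\bv u_i^\top \bv y)\bv u_i + \bv y^\perp$, the score \eqref{eq:score estimation obtuse} shows that the component $\bv y^\perp$ orthogonal to $\bv u_i$ obeys $\frac{\mathrm{d}\bv y^\perp_r}{\mathrm{d}r} = -\bv y^\perp_r$ (in the re-scaled time of \eqref{eq:time depended score flow}), so it decays exponentially and plays no role in the limit. Thus the dynamics collapse to the scalar quantity $z_r := \bv u_i^\top \bv y_r$, which follows an ODE of the form $\frac{\mathrm{d} z_r}{\mathrm{d} r} = \frac{2\rho_{t(r)}}{\|\bv x_i\|} z_r - \rho_{t(r)}$, after accounting for the $\sigma_t^2$ rescaling between \eqref{eq:probability_flow} and \eqref{eq:time depended score flow}. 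The key point is that the noise-free component of the drift, $(\bv u_i\bv u_i^\top - \bv I)\bv y$, contributes $0$ to $\dot z$, so the entire motion along $\bv u_i$ is driven by the $\rho_t$-dependent terms; this is precisely where the ``early stopping'' enters. I would need to handle the region condition: as long as $\rho_t < \bv u_i^\top\bv y < \|\bv x_i\| - \rho_t$ and $\bv u_j^\top \bv y < \rho_t$ for $j\neq i$ (which I would verify is preserved along the flow for $\rho_T$ small, since the orthogonal components only shrink and $z_r$ stays in the middle band while $\rho_t$ is bounded), the estimator \eqref{eq:score estimation obtuse} is the one in force.

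Next I would integrate the scalar ODE explicitly. With $\sigma_t = \sqrt t$, $\rho_t = \alpha\sigma_t = \alpha\sqrt t$, and the time change $r = g(t) = -\log\sigma_t$, the coefficient functions become elementary; the linear scalar ODE $\dot z = a(r) z - b(r)$ with $b(r) = \tfrac{\|\bv x_i\|}{2}a(r)$ (so the ``instantaneous fixed point'' is always $\|\bv x_i\|/2$) integrates to $z_r - \tfrac{\|\bv x_i\|}{2} = \bigl(z_T - \tfrac{\|\bv x_i\|}{2}\bigr)\exp\!\bigl(\int_T^r a(s)\,\mathrm ds\bigr)$. Because the sign of $z_r - \|\bv x_i\|/2$ is preserved, the flow is monotonically pushed away from the midpoint toward $0$ or $\bv x_i$ according to which side $\bv u_i^\top\bv y_T$ lies on — this gives the ``closest point in $\{\bv x_0,\bv x_i\}$'' conclusion. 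The threshold $\tau(\bv y_T,\rho_T)$ is then defined by the total accumulated ``gain'' $G(T) := \int_T^{0} a(s(r))\,\mathrm dr$ (equivalently $\int$ over the physical time interval of the low-noise phase): if $T$ is large enough that $G(T)$ is big, $z$ travels all the way to an endpoint and the orthogonal part has died, so we land exactly on $\bv x_0$ or $\bv x_i$; if $T < \tau$, the flow terminates at time $0$ with $z_0$ strictly between $0$ and $\|\bv x_i\|$, i.e. at an interior point of the chord from $\bv x_0=\bv 0$ to $\bv x_i$ (after the $\bv y^\perp$ component has been contracted away, or more precisely contracted toward the $\bv u_i$-axis). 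I would make $\tau$ precise by solving $G(\tau) = \log\frac{|z_T - \|\bv x_i\|/2|}{\min(z_T,\|\bv x_i\|-z_T)}$ or a similar closed expression and checking monotonicity of $G$ in $T$, so the dichotomy $T \gtrless \tau$ is well-posed.

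The main obstacle, I expect, is not the scalar integration itself but controlling the \emph{region validity} and the transition out of the low-noise regime. Two issues must be managed: (i) ensuring the inactive coordinates $\bv u_j^\top\bv y$ for $j\neq i$ genuinely stay below $\rho_t$ throughout, which requires comparing their exponential decay rate against the (slow, since $\rho_t = \alpha\sqrt t \to 0$) shrinkage of the threshold $\rho_t$ near $t = 0$ — this is the delicate part and likely forces a smallness assumption on $\rho_T$ together with a careful estimate near the terminal time; and (ii) the interplay between the $r$-time (in which the ODE is autonomous-ish and clean) and the physical $t$-time with $\sigma_t = \sqrt t$, since $r = -\tfrac12\log t \to \infty$ as $t\to 0$, so the ``finite'' probability-flow horizon $[0,T]$ becomes an infinite $r$-horizon and one must check the improper integral $G(T)$ converges or diverges appropriately — this determines whether exact convergence to an endpoint is even possible. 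Once those two technical points are settled, assembling the statement is routine: the orthogonal-decay lemma plus the explicit scalar solution plus the definition of $\tau$ via $G$ gives both bullet points.
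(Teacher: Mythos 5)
Your proposal matches the paper's proof in essence: project \eqref{eq:score estimation obtuse} onto $\bv u_i$ and its orthogonal complement, observe that the orthogonal part decays exponentially while $z_r=\bv u_i^\top\bv y_r$ satisfies the scalar affine ODE $\tfrac{\mathrm d z_r}{\mathrm d r}=\rho_{g^{-1}(r)}\bigl(\tfrac{2}{\|\bv x_i\|}z_r-1\bigr)$ with instantaneous fixed point $\|\bv x_i\|/2$, integrate explicitly, and set $\tau$ as the $T$ for which the accumulated gain $\int_{r_T}^\infty\tfrac{2}{\|\bv x_i\|}\rho_{g^{-1}(r)}\,\mathrm dr=\tfrac{2\alpha\sqrt T}{\|\bv x_i\|}$ exactly carries $z$ to an endpoint of the chord. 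The paper's only cosmetic difference is a second time-change $k=-\alpha e^{-r}$ that makes the scalar ODE autonomous before invoking Lemma~\ref{lemma:affine ODE}; your direct integration is equivalent, though the closed form for $\tau$ should be $\tau=\bigl(\tfrac{\|\bv x_i\|}{2\alpha}\bigr)^2\log^2\!\bigl(\tfrac{\|\bv x_i\|/2}{|\bv u_i^\top\bv y_T-\|\bv x_i\|/2|}\bigr)$ rather than the $\min$-expression you sketched.
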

Theorem \ref{theorem: obtuse probability flow convergence} shows that the probability flow converges to a point on the chord or to one of the edges of the chord. In this scenario, we consider the chords as the implicit data manifold.

\begin{figure}[t]
	\centering
	\includegraphics[height=5cm]{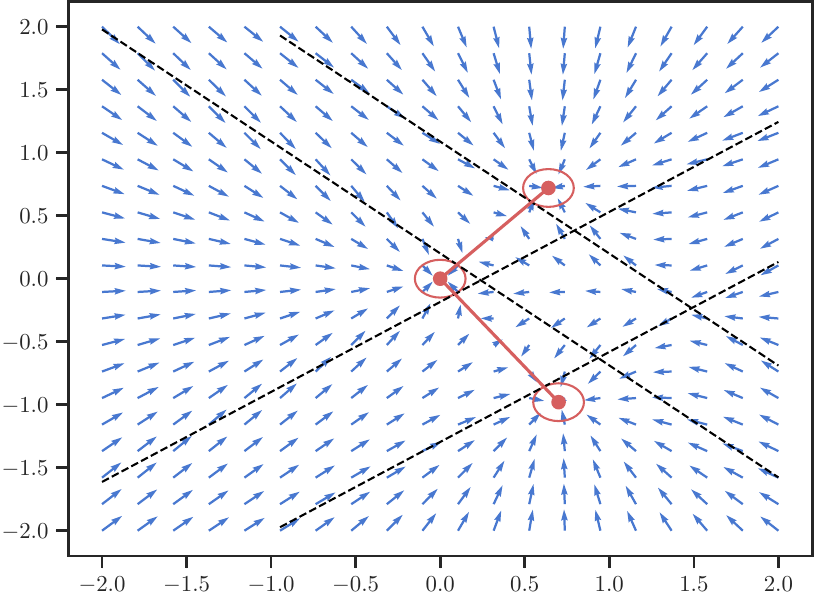}
    \caption{\textbf{The normalized score function of obtuse simplex}. The red dots are the training points $\bv x_1, \bv x_2, \bv x_3$. The black lines are the ReLU boundaries. }
    \label{fig:normalized score flow}
\end{figure}

\section{An Equilateral Triangle Dataset}
\label{Appendix:An equilateral triangle dataset}
In this section we consider the score flow in the case where the training points form the vertices of an equilateral triangle (as this is the last remaining dataset case for which the min-cost denoiser is analytically solvable \citep{zeno2023how}). In Figure~\ref{fig:normalized score flow equilateral triangle} we illustrate the normalized score flow for the
case of an equilateral triangle dataset.
\begin{figure}[t]
	\centering
	\includegraphics[height=5cm]{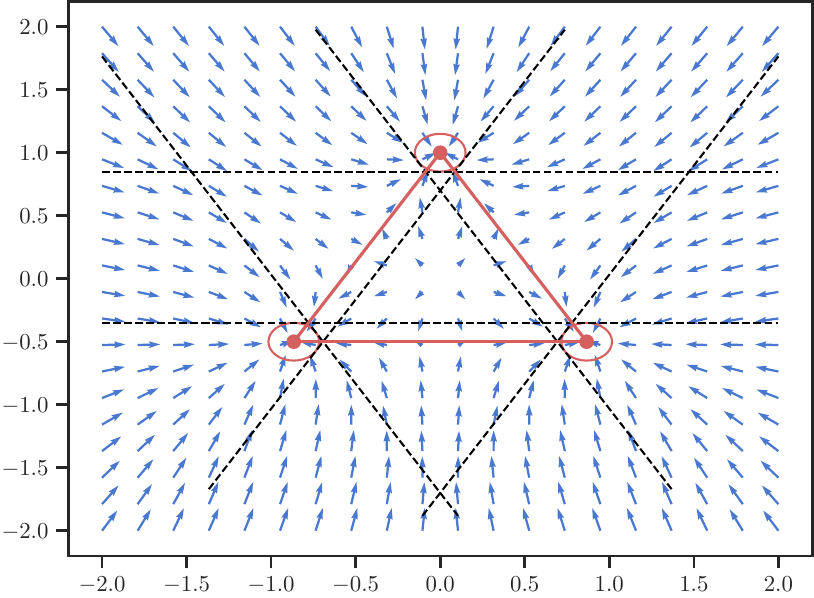}
    \caption{\textbf{The normalized score function of equilateral triangle}. The red dots are the training points $\bv x_1, \bv x_2, \bv x_3$. The black lines are the ReLU boundaries. }
    \label{fig:normalized score flow equilateral triangle}
\end{figure}

We prove (in Appendix \ref{appendix: proof equilateral triangle score flow convergence}) that, given an initialization point near the edge of the triangle, the score flow first converges to the face of the triangle (the implicit data manifold here) and then to the vertex closest to the initialization point $\bv y_0$.
\begin{proposition}
\label{theorem: equilateral triangle score flow convergence}
Suppose the convex hull of the training points $\vx_1,\vx_2,\vx_3 \in \R^d$ is an equilateral triangle. Given an initial point $\bv y_0$ such that $i\in\{1,2\}-\frac{\left\Vert \bv x_i\right\Vert}{2}+\rho<\bv u_i^{\top}\bv y_0<\left\Vert \bv x_i\right\Vert -\rho$ and $\bv u_3^{\top}\bv y < - \frac{\left\Vert \bv x_3 \right\Vert}{2} + \rho$, consider the score flow 
where we estimate the score using $\bv s\left(\bv y\right) = \frac{\bv h^*_{\rho}(\vy)-\bv y}{\sigma^2}$.
Then we converge to the closest vertex to the $\bv y_0$. In addition, for all $0 < \epsilon < \left(\bv u_1 + \bv u_2\right)^\top\bv y_0 - \frac{\left\Vert \bv x\right\Vert}{2}$ there exists $\rho_0\left(\epsilon\right)$ and $T_0\left(\rho,\epsilon\right), T_1\left(\rho\right)$ such that for all $\rho < \rho_0\left(\epsilon\right)$ the point $\bv y_T$ is not a stable stationary point and at most $\epsilon$ distance from the line between $\bv x_1$ and $\bv x_2$ for $T_0\left(\rho,\epsilon\right)<T<T_1\left(\rho\right)$.
\end{proposition}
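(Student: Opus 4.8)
The plan is to reduce the equilateral-triangle analysis to a nearly one-dimensional dynamics by tracking the projections $p_i(t) := \bv u_i^\top \bv y_t$ for $i\in\{1,2,3\}$ and the component of $\bv y_t$ orthogonal to $\mathrm{span}\{\bv x_1,\bv x_2,\bv x_3\}$. First I would plug the initialization hypotheses ($-\tfrac{\|\bv x_i\|}{2}+\rho < \bv u_i^\top \bv y_0 < \|\bv x_i\|-\rho$ for $i\in\{1,2\}$ and $\bv u_3^\top \bv y_0 < -\tfrac{\|\bv x_3\|}{2}+\rho$) into the explicit min-cost denoiser $\bv h^*_\rho$ for the equilateral-triangle configuration (as derived in \citet{zeno2023how} and recalled in Appendix~\ref{Appendix: proof of section 4}) to identify which ReLU units are active in the relevant region. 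Because the angle between any two of $\bv u_1,\bv u_2,\bv u_3$ is $\pi/3$, we have $\bv u_i^\top \bv u_j = \tfrac12$ for $i\neq j$, so the active-unit structure should make $\bv s(\bv y)$ affine on the region of interest; the component orthogonal to the triangle's plane decays like $e^{-t/\sigma^2}$, so after a short transient the flow effectively lives in the $2$-dimensional affine span of the triangle.

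Next I would analyze the planar linear (affine) system. Writing $\bv s(\bv y) = \tfrac{1}{\sigma^2}(\bv h^*_\rho(\bv y) - \bv y)$ with the units for $\bv u_1$ and $\bv u_2$ active (and the $\bv u_3$ unit inactive, given the initialization), I expect the drift to push $\bv y$ toward the face (the segment between $\bv x_1$ and $\bv x_2$, up to an $O(\rho)$ offset), analogously to the orthogonal and obtuse cases: one linear combination of coordinates — roughly the ``distance to the face'' direction — is strongly contracting, while the motion \emph{along} the face is slow, governed by an eigenvalue that is $O(\rho/\|\bv x\|)$ and hence vanishes as $\rho\to 0$. This is exactly the two-timescale picture behind Theorems~\ref{theorem: orthogonal score flow convergence} and \ref{theorem: obtuse of score flow convergence}. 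I would then make this quantitative: choose $\rho_0(\epsilon)$ small enough that the fast contraction brings $\bv y_t$ within $\epsilon$ of the line between $\bv x_1$ and $\bv x_2$ by some time $T_0(\rho,\epsilon)$, and show the slow along-face drift keeps $\bv y_t$ within $\epsilon$ of that line up to time $T_1(\rho)$ (with $T_1(\rho)/\!$ something $\to\infty$ as $\rho\to 0$, so the window $[T_0,T_1]$ is nonempty). The sign of the slow eigenmode, together with whether $(\bv u_1+\bv u_2)^\top\bv y_0$ exceeds or falls below $\tfrac{\|\bv x\|}{2}$, determines toward which vertex the along-face drift ultimately carries the flow; combined with the stability of the vertices (Proposition, the unlabeled one right after \eqref{eq:stability condition}, applied here), this yields convergence to the closest vertex to $\bv y_0$.

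The last piece is to handle region transitions: as $\bv y_t$ drifts along the face toward, say, $\bv x_1$, at some point the ReLU boundary associated with $\bv u_1$ (the one at $\bv u_1^\top \bv y = \|\bv x_1\|-\rho$) is crossed, after which the dynamics near $\bv x_1$ becomes the pure contraction $\bv J = -\tfrac{1}{\sigma^2}\bv I$ of the Proposition, guaranteeing convergence into $\bv x_1$. I would verify that the trajectory cannot escape the ``basin'' on the way — i.e. that $\bv u_3^\top \bv y_t$ stays below the threshold that would activate the third unit, and that $\bv u_2^\top \bv y_t$ stays in its active range until the hand-off — using the explicit affine form of the drift in each cell and continuity/monotonicity of the relevant projections. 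The main obstacle I anticipate is precisely this bookkeeping of cell boundaries in the triangle's geometry: unlike the orthogonal case where the coordinates decouple cleanly, the $\bv u_i^\top \bv u_j = \tfrac12$ cross-terms couple the projections, so showing that the slow drift stays inside the intended sequence of activation cells (and does not, e.g., veer across the interior toward $\bv x_3$) requires a careful invariant-region argument for the planar linear system in each cell, with the $\rho$-dependence tracked uniformly. The rest — the eigenvalue computation, the $e^{-t/\sigma^2}$ transient bound, and assembling $\rho_0,T_0,T_1$ — is routine once the cell structure is pinned down.
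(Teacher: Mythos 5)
Your proposal is essentially the paper's own argument. Both works proceed by restricting to the activation cell determined by the initialization and observing that the score is affine there, $\bv s(\bv y) = \frac{1}{\sigma^2}(\bv A\bv y + \bv b)$ with $\bv A = \frac{\left\Vert \bv x\right\Vert }{\frac{3}{2}\left\Vert \bv x\right\Vert -2\rho}\left(\bv u_1 \bv u_1^{\top}+\bv u_2 \bv u_2^{\top}\right)-\bv I$; the paper solves this via Lemma~\ref{lemma: affine system ODE} by diagonalizing $\bv A$, obtaining the slow eigenvector $\bv v_1 \propto \bv u_1 - \bv u_2$ with eigenvalue $\lambda_1 = \frac{2\rho}{\frac{3}{2}\left\Vert \bv x\right\Vert - 2\rho} = O(\rho)$ (slow expansion along the face) and the fast eigenvector $\bv v_2 = \bv u_1 + \bv u_2 = -\bv u_3$ with an $O(1)$ negative eigenvalue (contraction toward the face), with the out-of-plane component decaying as $e^{-t/\sigma^2}$, exactly as you describe. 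The window $[T_0,T_1]$ is then pinned down by comparing $\Delta T_d(\rho,\epsilon)$ (time to reach the $\epsilon$-neighborhood of the segment $\bv x_1\bv x_2$ along the fast mode) with $\Delta T_e(\rho)$ (time to exit the cell along the slow mode), using $\rho = \alpha\sigma$; this matches your two-timescale plan. One small correction to your write-up: which vertex is ultimately reached is determined solely by the sign of the slow-mode coefficient $(\bv u_1 - \bv u_2)^{\top}\bv y_0$ (positive gives $\bv x_1$, negative gives $\bv x_2$, zero gives the midpoint); the comparison of $(\bv u_1 + \bv u_2)^{\top}\bv y_0$ with $\frac{\left\Vert \bv x\right\Vert}{2}$ is only the WLOG sign convention encoded in the proposition's $\epsilon$ constraint and plays no role in vertex selection.
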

Without loss of generality, we can permute the training points indices $\{1,2,3\}$ in the above result. The probability flow for this case can be also analyzed, similarly to what we did in previous cases.

\section{Proofs of Results in Section \ref{sec:The probabilty flow and the score flow of min-cost denoisers}}
\label{Appendix: proof of section 4}
In this section we use the following Propositions from \citep{zeno2023how}.
\begin{proposition}\label{prop:obtuse_triangle}
    Suppose that the convex hull of the training points $\{\vx_1,\vx_2,...,\vx_{N}\} \subset \R^d$ is a $(N-1)$-simplex such that $\vx_1$ forms an obtuse angle with all other vertices, i.e., $(\vx_j-\vx_1)^\T(\vx_i-\vx_1) < 0$ for all $i\neq j$ with $i,j>1$. Then the minimizer $\bv h_\rho^*$ of \eqref{eq:fitonballs} is unique and is given by
    \begin{equation}\label{eq:obtusefit}
    \bv h^*_{\rho}(\vy) = 
    \vx_1 + \sum_{n=2}^{N} \vu_n \phi_n(\vu_n^\T(\vy-\vx_1))
    \end{equation}
    where  $\vu_n = \frac{\vx_n-\vx_1}{\|\vx_n-\vx_1\|}$, $\phi_n(t) = s_n([t-a_n]_+-[t-b_n]_+)$, with $a_n = \rho$, $b_n = \|\vx_n-\vx_1\|-\rho$, and $s_n = \|\vx_n-\vx_1\|/(b_n-a_n)$ for all $n=2,...,N$.
\end{proposition}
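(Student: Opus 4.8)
The plan is to prove the two matching bounds $R(\bv h)\le \sum_{n=2}^{N} 2 s_n$ and $R(\bv h)\ge \sum_{n=2}^{N} 2 s_n$ for every $\bv h$ feasible for \eqref{eq:fitonballs}, and then to read off uniqueness from the equality case of the lower bound. After translating the data so that $\vx_1=0$ and composing a general feasible $\bv h$ with the orthogonal projection $\rmP$ onto $\mathrm{span}\{\vx_2,\dots,\vx_N\}$ on both sides — which sends $\vy\in B(\vx_n,\rho)$ into that same ball, preserves feasibility on the resulting $(N-1)$-dimensional balls, and cannot increase the representation cost (it only shrinks the weight and output vectors) — it suffices to work in $\R^{N-1}$ with $\vx_1=0$ and $\vx_i^\T\vx_j<0$ for all distinct $i,j\ge 2$.

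\textbf{Upper bound.} I would simply exhibit \eqref{eq:obtusefit} and verify it. It is realized by $2(N-1)$ ReLU units — two per direction $\vu_n=(\vx_n-\vx_1)/\ell_n$, $\ell_n:=\|\vx_n-\vx_1\|$, with unit weight $\vu_n$, breakpoints $a_n=\rho$ and $b_n=\ell_n-\rho$, and output vectors $\pm s_n\vu_n$ — so its cost is $\sum_{n=2}^{N}2 s_n$ and hence $R(\bv h_\rho^*)\le\sum 2 s_n$. Feasibility is a short check using the hypothesis: on $B(\vx_1,\rho)$ one has $\vu_n^\T(\vy-\vx_1)<\rho=a_n$ for every $n$, so all $\phi_n$ vanish and $\bv h_\rho^*\equiv\vx_1$; on $B(\vx_n,\rho)$ one has $\vu_n^\T(\vy-\vx_1)>b_n$, so $\phi_n$ is saturated and contributes $\ell_n\vu_n=\vx_n-\vx_1$, while for $m\ne n$ the obtuseness $(\vx_m-\vx_1)^\T(\vx_n-\vx_1)<0$ gives $\vu_m^\T(\vy-\vx_1)<\rho=a_m$, so $\phi_m$ vanishes; hence $\bv h_\rho^*\equiv\vx_n$ there. (This uses $2\rho<\min_n\ell_n$, implicit in the setup so that $a_n<b_n$ and the balls are disjoint.)

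\textbf{Lower bound} — the crux. I would use Fenchel/LP duality for the representation cost: it suffices to produce one $\R^{N-1}$-valued measure $\Lambda$ supported on $\bigcup_n B(\vx_n,\rho)$ that (a) annihilates every affine function and (b) satisfies $\bigl\|\int[\vw^\T\vy+b]_+\,d\Lambda(\vy)\bigr\|\le 1$ for all unit $\vw$ and all $b\in\R$. Given such a $\Lambda$, for any representation $\theta$ of any realizable $\bv h$ one has $\langle\Lambda,\bv h_\theta\rangle=\sum_k \va_k^\T\!\int[\vw_k^\T\vy+b_k]_+\,d\Lambda(\vy)\le\sum_k\|\va_k\|$ (the skip connection drops out by (a)), hence $\langle\Lambda,\bv h\rangle\le R(\bv h)$; and because $\Lambda$ lives on the (disjoint) balls, $\langle\Lambda,\bv h\rangle$ equals the fixed value $\sum_n\vx_n^\T\Lambda(B(\vx_n,\rho))$ for every feasible $\bv h$, which I would design to equal $\sum_{n=2}^{N}2 s_n$. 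The delicate point is that $\Lambda$ cannot be supported on the vertices alone — for $N$ affinely independent points in $\R^{N-1}$ the only affine-annihilating measure on them is $0$ — so it must spread mass over each ball (e.g.\ over the shrunken spheres carrying the candidate's ReLU boundaries), and verifying (b) is exactly where strict obtuseness enters: the integrand $[\vw^\T\vy+b]_+$ can be simultaneously large near several $\vx_n$ only for those $\vu_n$ making an acute angle with $\vw$, and pairwise obtuseness forces $\sum_n[\vw^\T\vu_n]_+^2\le 1$, which caps the resulting vector. An equivalent route is to invoke the exact Radon-domain representation-cost formula of \citet{Ongie2020A}: it evaluates to $\sum 2 s_n$ on the ridge-type candidate, and for a general feasible $\bv h$ one lower-bounds the Radon-domain seminorm by restricting to the $N-1$ directions $\vu_n$, reducing to $N-1$ one-dimensional total-variation problems ("move from $\vx_1$ to $\vx_n$ across a segment of length $\ell_n-2\rho$ with zero slope at both ends," each costing at least $2 s_n$), with obtuseness guaranteeing the slices do not overlap.

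\textbf{Uniqueness.} I would trace the equality conditions in $\langle\Lambda,\bv h_\theta\rangle\le\sum_k\|\va_k\|$: equality forces every active unit to have $\va_k$ positively parallel to $\int[\vw_k^\T\cdot+b_k]_+\,d\Lambda$ with that integral of unit norm, which — given the support and structure of the constructed $\Lambda$ — is possible only if $\vw_k\in\{\vu_n\}$ and the breakpoint coincides with $a_n$ or $b_n$; feasibility then pins the output magnitudes, leaving exactly \eqref{eq:obtusefit}. The hardest, least routine step is constructing $\Lambda$ (equivalently, carrying out the Radon-domain computation) so that exact affine-annihilation is compatible with the $\sup$-norm bound (b); this is the sole place the \emph{strict} inequality $(\vx_j-\vx_1)^\T(\vx_i-\vx_1)<0$ is used, and it is what turns "optimal" into "unique."
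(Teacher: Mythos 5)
This proposition is stated but not proved in the present paper: the appendix opens with ``In this section we use the following Propositions from \citep{zeno2023how},'' so the result is imported and there is no in-paper proof to compare against.

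On its own merits, your upper-bound half is correct and complete: the candidate \eqref{eq:obtusefit} is feasible (the obtuseness hypothesis gives $\vu_m^\T(\vy-\vx_1)<\rho$ on $B(\vx_n,\rho)$ for $m\ne n$, and Cauchy--Schwarz handles $B(\vx_1,\rho)$), and its cost is $\sum_{n=2}^{N}2s_n$. The translation-plus-orthogonal-projection reduction to $\mathrm{span}\{\vx_2-\vx_1,\dots,\vx_N-\vx_1\}$ is also sound (the projection maps each $B(\vx_n,\rho)$ into itself, preserves feasibility, and can only decrease $\|\va_k\|$ and $\|\rmP\vw_k\|$). The gap is the lower bound, and with it uniqueness. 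You never construct the vector-valued dual measure $\Lambda$: you list what it must satisfy and explicitly flag its construction as ``the hardest, least routine step.'' The auxiliary inequality $\sum_n[\vw^\T\vu_n]_+^2\le 1$ for a pairwise-obtuse unit family is in fact true --- expand $0\le\bigl\|\vw-\sum_{n\in S}(\vw^\T\vu_n)\vu_n\bigr\|^2$ with $S=\{n:\vw^\T\vu_n>0\}$ and drop the negative cross terms --- but by itself it does not yield the needed bound $\bigl\|\int[\vw^\T\vy+b]_+\,d\Lambda\bigr\|\le 1$, which depends entirely on the yet-unspecified $\Lambda$. Consequently your uniqueness argument, which traces equality conditions in a pairing $\langle\Lambda,\bv h_\theta\rangle$ built from that very measure, is also not yet an argument. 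What you have is a correct framework (LP duality in the Radon/path domain, in the spirit of \citet{Ongie2020A}) with the decisive certificate construction and the sup-norm verification left blank; as written it establishes $R(\bv h^*_\rho)\le\sum_n 2s_n$ but not the matching lower bound, and therefore not optimality or uniqueness.
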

\begin{proposition}\label{prop:acute_triangle}
Suppose the convex hull of the training points $\vx_1,\vx_2,\vx_3 \in \R^d$ is an equilateral triangle.
Assume the norm-balls $B_n := B(\vx_n, \rho)$ centered at each training point have radius $\rho < \|\vx_n-\vx_0\|/2$, $n=1,2,3$, where $\vx_0 = \frac{1}{3}(\vx_1+\vx_2+\vx_3)$ is the centroid of the triangle. 
Then a minimizer  $\vh^*_\rho$ of \eqref{eq:fitonballs} is given by\begin{equation}\label{eq:acutetrianglefit}
\vh^*_{\rho}(\vy) = \vu_1 \phi_1(\vu_1^\T(\vy-\vx_0)) + \vu_2 \phi_2(\vu_2^\T(\vy-\vx_0)) + \vu_3 \phi_3(\vu_3^\T(\vy-\vx_0)) + \vx_0,
\end{equation}
where $\phi_n(t) = s_n([t-a_n]_+-[t-b_n]_+)$ with $\vu_n = \frac{\vx_n-\vx_0}{\|\vx_n-\vx_0\|}$, $a_n = -\frac{1}{2}\|\vx_n-\vx_0\| + \rho$, $b_n = \|\vx_n-\vx_0\|-\rho$, and $s_n = \|\vx_n-\vx_0\|/(b_n-a_n)$.
\end{proposition}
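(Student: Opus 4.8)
The goal is to show that the explicit network $\bv h^*_\rho$ of \eqref{eq:acutetrianglefit} attains the minimum of the representation cost $R$ among all $\bv h:\R^d\to\R^d$ that are constant (equal to $\vx_n$) on each ball $B(\vx_n,\rho)$. Unlike Proposition~\ref{prop:obtuse_triangle}, uniqueness is not asserted, so it suffices to produce a matching lower bound. I would follow the template used by \citet{zeno2023how} for the obtuse case: (i) check feasibility of the candidate, (ii) bound $R(\bv h^*_\rho)$ from above by an explicit finite representation, and (iii) match it from below by a convex-duality / function-space argument.

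\textbf{Feasibility and upper bound.} Write $L=\|\vx_n-\vx_0\|$ (the same for all $n$) and use the equilateral geometry $\vu_m^\T(\vx_n-\vx_0)=L$ if $m=n$ and $=-L/2$ if $m\ne n$ (since $\vu_m^\T\vu_n=-1/2$). For $\vy\in B(\vx_n,\rho)$ we get $\vu_n^\T(\vy-\vx_0)\in(L-\rho,L+\rho)$, which lies above $b_n=L-\rho$, so $\phi_n(\vu_n^\T(\vy-\vx_0))=s_n(b_n-a_n)=L$; and for $m\ne n$, $\vu_m^\T(\vy-\vx_0)<-L/2+\rho=a_m$, so $\phi_m=0$. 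Hence $\bv h^*_\rho(\vy)=L\vu_n+\vx_0=\vx_n$, which is exactly where the hypothesis $\rho<L/2$ is used. For the upper bound, each ridge $\vu_n\phi_n(\vu_n^\T(\vy-\vx_0))$ is realized by two ReLU units with weight vectors $\va=\pm s_n\vu_n$, directions $\vw=\vu_n$ (already unit norm), and appropriate biases, while the constant $\vx_0$ is absorbed by the free skip/bias ($\mV=0$, $\vc=\vx_0$). Thus $R(\bv h^*_\rho)\le\sum_{n=1}^3 2 s_n = 6 s$, where $s=L/(\tfrac32 L-2\rho)$.

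\textbf{Lower bound --- the crux.} Here I would invoke the dual (measure / Radon-domain) characterization of the representation cost of vector-valued shallow ReLU networks with unpenalized biases and skip connection, as used by \citet{zeno2023how}: for any $\bv h$ and any $\R^d$-valued measure $\boldsymbol\nu$ satisfying (a) $\int d\boldsymbol\nu=0$ and $\int\vy\,d\boldsymbol\nu(\vy)=0$ (so $\boldsymbol\nu$ annihilates affine functions, which are free, and in particular the pairing below is finite), and (b) $\sup_{\|\vw\|=1,\,b\in\R}\bigl\|\int[\vw^\T\vy+b]_+\,d\boldsymbol\nu(\vy)\bigr\|\le 1$, one has $\langle\bv h,\boldsymbol\nu\rangle\le R(\bv h)$. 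If $\boldsymbol\nu$ is supported on $\bar B(\vx_1,\rho)\cup\bar B(\vx_2,\rho)\cup\bar B(\vx_3,\rho)$, then for every feasible $f$ (constant $\vx_n$ on $B(\vx_n,\rho)$), $\langle f,\boldsymbol\nu\rangle=\sum_n\vx_n^\T\vm_n$ with $\vm_n:=\int_{\bar B_n}d\boldsymbol\nu$, a quantity independent of $f$; so $R(f)\ge\sum_n\vx_n^\T\vm_n$ for \emph{all} feasible $f$, and it remains to build $\boldsymbol\nu$ with $\sum_n\vx_n^\T\vm_n=6s$. Guided by complementary slackness with the primal atoms of $\bv h^*_\rho$ (which sit at directions $\vu_n$), I would take $\boldsymbol\nu=\sum_n\boldsymbol\nu_n$ with $\boldsymbol\nu_n$ a superposition of (line/point) masses along the chord $\{\vx_n+t\vu_n\}\cap\bar B_n$, normalized so $\vm_n=\tfrac{2s}{L}\vu_n$: then $\sum_n\vx_n^\T\vm_n=\sum_n L\,\vu_n^\T\cdot\tfrac{2s}{L}\vu_n=6s$; the first half of (a) holds because $\sum_n\vu_n=0$; the second half ($\sum_n\int\vy\,d\boldsymbol\nu_n=0$) is arranged by balancing each $\boldsymbol\nu_n$ about $\vx_n$ and using $\sum_n\vu_n\vu_n^\T=\tfrac32 I$ on the triangle's plane to cancel the residual moment. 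Condition (b) is the real work: the mass inside each $\bar B_n$ must be spread so that the tent-shaped functions $b\mapsto\int[\vw^\T\vy+b]_+\,d\boldsymbol\nu(\vy)$ contributed by the three balls combine to a vector of norm $\le 1$ for every unit $\vw$ (including $\vw$ off the triangle's plane) and every $b$, with equality precisely at the primal breakpoints $\vw=\vu_n$, $b\in\{-(a_n+\vu_n^\T\vx_0),\,-(b_n+\vu_n^\T\vx_0)\}$. An equivalent, perhaps cleaner packaging is to first restrict everything to the $2$-plane of the triangle (restriction does not increase $R$) and use the explicit planar Radon-domain formula for $R$, which turns the lower bound into a one-dimensional total-variation estimate at each of the three relevant angles.

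\textbf{Main obstacle.} The difficulty is entirely in step (iii): exhibiting the certificate $\boldsymbol\nu$ and verifying the uniform bound (b). This step cannot be replaced by a softer one-dimensional argument: every line meets at most two of the three balls, and the best bound obtainable by restricting $\bv h$ to lines through two balls is $\tfrac{3\sqrt3 L}{\sqrt3 L-2\rho}\to 3$ as $\rho\to 0$, strictly below the true value $6s\to 4$; so a genuinely two-dimensional certificate is unavoidable. It is precisely the equilateral symmetry --- the $120^\circ$ angles, $\sum_n\vu_n=0$, and $\sum_n\vu_n\vu_n^\T=\tfrac32 I$ --- that both makes the moment conditions (a) close and keeps (b) saturated only at the desired breakpoints, which is why this is the one data configuration (besides the orthogonal and obtuse ones) for which an exact min-cost solution can be pinned down.
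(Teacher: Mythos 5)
First, note that the paper itself does not prove this proposition: it is imported verbatim from \citet{zeno2023how} (the appendix opens with ``we use the following Propositions from \citep{zeno2023how}''), so the full optimality argument you are trying to reconstruct lives in that reference, not here. Judged on its own terms, your write-up gets the easy half right: the feasibility check (using $\vu_m^\T\vu_n=-1/2$, so that on $B(\vx_n,\rho)$ one has $\vu_n^\T(\vy-\vx_0)>b_n$ and $\vu_m^\T(\vy-\vx_0)<a_m$ for $m\neq n$, whence $\vh^*_\rho(\vy)=\vx_n$) and the upper bound $R(\vh^*_\rho)\le 6s$ with $s=\|\vx_n-\vx_0\|/(\tfrac32\|\vx_n-\vx_0\|-2\rho)$ are both correct, and the duality framework you invoke (a vector-valued measure $\boldsymbol\nu$ annihilating affine maps, with the sliced-ReLU moment bound, paired against any function that is constant on the three balls) is the right kind of machinery for the lower bound.

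The genuine gap is that the lower bound is never actually established, and it is the entire content of the proposition: without it you have only shown that $\vh^*_\rho$ is feasible with cost at most $6s$, not that it is a minimizer of \eqref{eq:fitonballs}. You specify what the certificate $\boldsymbol\nu$ must satisfy and guess its support (masses along the chords $\{\vx_n+t\vu_n\}\cap \bar B(\vx_n,\rho)$ with $\int_{\bar B_n}d\boldsymbol\nu=\tfrac{2s}{L}\vu_n$), but you neither construct it explicitly, nor verify the second moment condition $\int \vy\,d\boldsymbol\nu=0$ beyond asserting it ``is arranged,'' nor -- as you yourself concede -- verify the uniform bound $\sup_{\|\vw\|=1,b}\bigl\|\int[\vw^\T\vy+b]_+\,d\boldsymbol\nu\bigr\|\le 1$, which is exactly where the equilateral geometry must be exploited and where the proof could in principle fail (your own one-dimensional estimate shows the naive certificates fall short of $6s$). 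Declaring this step ``the main obstacle'' does not discharge it; as it stands the argument reduces the proposition to an unproven existence claim for the dual certificate, so the proposal is an outline of a plausible strategy rather than a proof. (A small additional slip: in computing $\sum_n\vx_n^\T\vm_n$ you replace $\vx_n$ by $L\vu_n$, which is only licit because the $\vx_0$ contribution cancels via $\sum_n\vu_n=0$; this should be said, since $\vx_0\neq 0$ in general.)
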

\subsection{Proof of Theorem \ref{theorem:orthogonal_stationary_points}} \label{appendix:proof orthogonal of stationary points} 
\begin{proof}
In the case of orthogonal dataset where for all $i\ne j$ $\bv x_i^\top \bv x_j = 0$ and $\bv x_0 =0$, the score function is
\begin{align}
     \bv s\left(\bv y\right) &= \frac{\bv h^*_{\rho}(\vy)-\bv y}{\sigma^2} \\
    &= \frac{\sum_{i=1}^{N-1} \bv e_n \frac{\left \Vert \bv x_i \right \Vert}{\left \Vert \bv x_i \right \Vert-2\rho} \left([y_i-\rho]_+ - [y_i-\left(\left \Vert \bv x_i \right \Vert-\rho\right)]_+ \right) - \bv y}{\sigma^2}\,.
\end{align}
The Jacobian matrix is
\begin{align}
    J_{ij}\left(\bv y\right) = \frac{\frac{\left \Vert \bv x_i \right \Vert}{\left \Vert \bv x_i \right \Vert -2 \rho}\Delta_i\left(\bv y\right)\delta_{i,j}-\delta_{i,j}}{\sigma^2} \,,
\end{align}
where $\Delta_n\left(\bv y\right)$ indicates if only one of the ReLU functions is activated. In matrix form, we obtain
\begin{align}
    \bv J\left(\bv y\right) = \frac{1}{\sigma^2}\left(\mathrm{diag}\left(\frac{\left \Vert \bv x_1 \right \Vert}{\left \Vert \bv x_1 \right \Vert -2 \rho}\Delta_1\left(\bv y\right),\cdots,\frac{\left \Vert \bv x_{N-1} \right \Vert}{\left \Vert \bv x_{N-1} \right \Vert -2 \rho}\Delta_{N-1}\left(\bv y\right)\right)  -\bv I\right) \,,
\end{align}
where $\Delta_n\left(\bv y\right) \in \{0,1\}$. In this case, the stability condition is
\begin{align}
    \mathrm{Re}\{\lambda\left(\bv J\left(\bv y\right)\right)\} = \lambda\left(\bv J\left(\bv y\right)\right) < 0 \,.
\end{align} 
Note that for $\Delta_i\left(\bv y\right) = 1$
\begin{align}
    \lambda\left(\bv J\left(\bv y\right)\right) = \frac{\left \Vert \bv x_i \right \Vert}{\left \Vert \bv x_i \right \Vert -2 \rho}\Delta_i\left(\bv y\right) - 1 > 0 \,.
\end{align}
Therefore, a stationary point is stable if and only if for all $i \in [N-1]$ $\Delta_i\left(\bv y\right) = 0$. We define the set $\mathcal{A} = \{ \sum_{n \in \mathcal{I}} \bv x_n |  \mathcal{I} \in \mathcal{P}([N-1])\}$. Note that the set of points where the score is zero and $\Delta_i\left(\bv y\right) = 0$ for all $i \in [N-1]$ is $\mathcal{A}$. 
\end{proof}
\subsection{Proof of Theorem \ref{theorem: orthogonal score flow convergence}} \label{appendix:proof orthogonal score flow convergence} 
\begin{proof}
We assume WLOG that for all $i\in [N-1]$ $\bv u_i = \bv e_i$. We can analyze the ODE \eqref{eq:rescaling_score_flow} along each orthogonal direction separately. In each direction, we divide the ODE into the following cases:

If $y_i \le \rho$ or $i>N-1$,  the score function is
\begin{align}
    s_i\left(y_i\right) = -\frac{y_i}{\sigma^2} \,.
\end{align}
Therefore, according to Lemma \ref{lemma:affine ODE}, 
\begin{align}
    (\bv y_t)_i =(\bv y_0)_i e^{-\frac{t}{\sigma^2}}
\end{align}
and we converge to zero.

If $y_i \ge \left \Vert \bv x_i \right \Vert - \rho$, the score function is
\begin{align}
    s_i\left(y_i\right) = \frac{\left \Vert \bv x_i \right \Vert -y_i}{\sigma^2} \,.
\end{align}
Therefore, according to Lemma \ref{lemma:affine ODE},
\begin{align}
    (\bv y_t)_i &=(\bv y_0)_i e^{-\frac{t}{\sigma^2}}+\left\Vert \bv x_i\right\Vert \left(1-e^{-\frac{t}{\sigma^2}}\right) \\
    &= \left(y_{0}-\left\Vert \bv x_i\right\Vert \right)e^{-\frac{t}{\sigma^2}}+\left\Vert \bv x_i\right\Vert 
\end{align}
and we converge to $\left\Vert \bv x_i\right\Vert$.

Finally, if $\rho < y_i < \left \Vert \bv x_i \right \Vert - \rho$, the score function is
\begin{align}
    s_i\left(y_i\right) = \frac{1}{\sigma^2}\left(\left(\frac{\left\Vert \bv x_i\right\Vert }{\left\Vert \bv x_i\right\Vert -2\rho}-1\right)y_i-\frac{\left\Vert \bv x_i\right\Vert \rho}{\left\Vert \bv x_i\right\Vert -2\rho}\right) \,.
\end{align}
Therefore, according to Lemma \ref{lemma:affine ODE}, 
\begin{align}
    (\bv y_t)_i &= (\bv y_0)_i e^{\left(\frac{\left\Vert \bv x_i\right\Vert }{\left\Vert \bv x_i\right\Vert -2\rho}-1\right)\frac{t}{\sigma^2}}+\frac{\left\Vert \bv x_i\right\Vert }{2}\left(1-e^{\left(\frac{\left\Vert \bv x_i\right\Vert }{\left\Vert \bv x_i\right\Vert -2\rho}-1\right)\frac{t}{\sigma^2}}\right) \\
    &= \left((\bv y_0)_i-\frac{\left\Vert \bv x_i\right\Vert }{2}\right)e^{\left(\frac{\left\Vert \bv x_i\right\Vert }{\left\Vert \bv x_i\right\Vert -2\rho}-1\right)\frac{t}{\sigma^2}}+\frac{\left\Vert \bv x_i\right\Vert }{2} \,.
\end{align}
Here, if $(\bv y_0)_i=\frac{\left\Vert \bv x_i\right\Vert}{2}$ we converge to $\frac{\left\Vert \bv x_i\right\Vert}{2}$; if $(\bv y_0)_i>\frac{\left\Vert \bv x_i\right\Vert}{2}$ then we converge to $\left\Vert \bv x_i\right\Vert$; if $(\bv y_0)_i<\frac{\left\Vert \bv x_i\right\Vert}{2}$ then we converge to zero.

There are multiple initializations in which the closest point on the hyperbox is a point on the boundary which is not a vertex. 
We first consider the case where there exist a non empty set $\mathcal{I}\subset [N-1]$ such that for all $i\in\mathcal{I}$ $\rho <(\bv y_0)_i< \left \Vert \bv x_i \right \Vert-\rho$, and for all $j\in [N]\setminus\mathcal{I}$ $(\bv y_0)_j < \rho$ or $(\bv y_0)_j > \left \Vert \bv x_i \right \Vert-\rho$.
We define $\Delta T_{i}\left(\rho\right)$ time to reach the edge of the partition, i.e. $\left \Vert \bv x_i \right \Vert -\rho$ (when $(\bv y_0)_i>\left\Vert \bv x_i\right\Vert-\rho$) starting from the initialization point, and $\Delta \tilde{T}_{j}\left(\rho, \epsilon\right)$ time to reach $\epsilon$  distance from zero or $\left\Vert \bv x_i\right\Vert$ starting from the initialization point:
\begin{align}
    \Delta T_{i}\left(\rho\right) &= \sigma^2\frac{\left\Vert \bv x_i \right\Vert -2\rho}{2\rho}\log\left(\frac{\frac{\left \Vert \bv x_i \right \Vert}{2} -\rho}{(\bv y_0)_i-\frac{\left \Vert \bv x_i \right \Vert}{2}}\right)\\ 
    \Delta \tilde{T}_{j}\left(\rho, \epsilon\right) &= \sigma^2 \log\left(\frac{(\bv y_0)_i}{\epsilon}\right) \,.
\end{align}
Since $\rho = \alpha \sigma$ we get that
\begin{align}
    \Delta T_{i}\left(\rho\right) &= \rho\frac{\left\Vert \bv x_i \right\Vert -2\rho}{2\alpha^2}\log\left(\frac{\frac{\left \Vert \bv x_i \right \Vert}{2} -\rho}{(\bv y_0)_i-\frac{\left \Vert \bv x_i \right \Vert}{2}}\right)\\ 
    \Delta \tilde{T}_{j}\left(\rho,\epsilon\right) &= \left(\frac{\rho}{\alpha}\right)^2 \log\left(\frac{(\bv y_0)_i}{\epsilon}\right) \,.
\end{align}
Note that $\exists \rho_0\left(\epsilon\right)>0$ such that $\forall \rho < \rho_0\left(\epsilon,\right)$ 
\begin{align}
    T_0 = \max_{j}\Delta \tilde{T}_{j}\left(\rho, \epsilon\right)< T < T_1 = \min_{i}\Delta T_{i}\left(\rho\right) \,,
\end{align}
since $\exists \rho_0\left(\epsilon\right)$ such that
\begin{align}
     \left(\frac{\rho_0}{\alpha}\right)^2 \log\left(\frac{(\bv y_0)_i}{\epsilon}\right) &< \rho_0\frac{\left\Vert \bv x_i \right\Vert -2\rho_0}{2\alpha^2}\log\left(\frac{\frac{\left \Vert \bv x_i \right \Vert}{2} -\rho_0}{(\bv y_0)_i-\frac{\left \Vert \bv x_i \right \Vert}{2}}\right) \\
      \log\left(\frac{(\bv y_0)_i}{\epsilon}\right) &< \frac{\left\Vert \bv x_i \right\Vert -2\rho_0}{2\rho_0}\log\left(\frac{\frac{\left \Vert \bv x_i \right \Vert}{2} -\rho_0}{(\bv y_0)_i-\frac{\left \Vert \bv x_i \right \Vert}{2}}\right) \,.
\end{align}
We can similarly derive the time interval during which $\bv y_T$ is at most $\epsilon$ distance from the boundary of the hyperbox and is not at a stationary point for additional initializations. Specifically, for all $i\in[N-1]$ $\rho <(\bv y_0)_i< \left \Vert \bv x_i \right \Vert-\rho$ is such an initialization point.
\end{proof}
\subsection{Proof of Theorem \ref{theorem: orthogonal probability flow convergence}}
\label{Appendix: proof orthogonal probability flow convergence}
First, we prove the following lemma.
\begin{lemma}\label{lemma:affine ODE}
consider the following affine ODE
\begin{align}
    \frac{\mathrm{d}y_t}{\mathrm{d}t} = a y_t + b
\end{align}
with initial point $y_T$, where $a \ne 0$. The solution is 
\begin{align}
    y = e^{a\left(t-T\right)} \left(y_T - \frac{b}{a}\left(e^{-a\left(t-T\right)} - 1\right)\right) \,.
\end{align}
\end{lemma}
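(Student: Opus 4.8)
The plan is to treat this as the elementary scalar first-order linear ODE it is, solve it by the integrating factor method, and then rearrange the result into the stated form. Since the right-hand side $ay_t + b$ is an affine (hence globally Lipschitz) function of $y_t$, the Picard--Lindel\"of theorem guarantees that the initial value problem has a unique solution, so it suffices to exhibit one solution matching the initial condition $y_t\big|_{t=T} = y_T$.

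First I would multiply both sides of $\frac{\mathrm{d}y_t}{\mathrm{d}t} = ay_t + b$ by the integrating factor $e^{-at}$, so that the left-hand side becomes an exact derivative:
\begin{align}
\frac{\mathrm{d}}{\mathrm{d}t}\left(e^{-at}y_t\right) = e^{-at}\frac{\mathrm{d}y_t}{\mathrm{d}t} - a e^{-at} y_t = b\, e^{-at}\,.
\end{align}
Next I would integrate this identity from $T$ to $t$. Using the hypothesis $a \neq 0$ we have $\int_T^t e^{-as}\,\mathrm{d}s = \frac{1}{a}\left(e^{-aT} - e^{-at}\right)$, hence
\begin{align}
e^{-at}y_t - e^{-aT}y_T = \frac{b}{a}\left(e^{-aT} - e^{-at}\right)\,.
\end{align}
Finally I would multiply through by $e^{at}$ to isolate $y_t$, obtaining $y_t = e^{a(t-T)}y_T + \frac{b}{a}\left(e^{a(t-T)} - 1\right)$, and then factor out $e^{a(t-T)}$ to match the claimed expression; the two are algebraically identical because $e^{a(t-T)}\bigl(-\tfrac{b}{a}(e^{-a(t-T)} - 1)\bigr) = \tfrac{b}{a}(e^{a(t-T)} - 1)$.

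There is really no substantive obstacle here. The only points requiring (minor) care are that the hypothesis $a \neq 0$ is exactly what makes $\frac{1}{a}e^{-as}$ a valid antiderivative (for $a = 0$ the solution would instead be the affine-in-$t$ function $y_T + b(t-T)$), and that the last step is a purely algebraic rewriting into the form that the later convergence proofs apply. As an alternative to the integrating-factor derivation, one could simply differentiate the claimed formula, verify it equals $ay + b$ and evaluates to $y_T$ at $t = T$, and then invoke uniqueness; I would include this as a one-line sanity check.
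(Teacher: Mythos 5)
Your proof is correct, but it takes a slightly different route from the paper's. The paper's proof is pure verification: differentiate the stated formula, check that you recover $ay_t + b$, and check that it evaluates to $y_T$ at $t = T$; implicit uniqueness of the IVP solution then finishes the argument. You instead \emph{derive} the formula from scratch via the integrating factor $e^{-at}$, integrating from $T$ to $t$, and only mention the verification-plus-uniqueness route as a one-line alternative at the end. Your derivation is the more informative of the two --- it shows where the formula comes from and makes clear exactly where $a \ne 0$ is needed (to antidifferentiate $e^{-as}$), and it correctly explains what replaces the formula when $a = 0$. The paper's verification is shorter but opaque, and in fact the intermediate lines of the published computation contain typos (e.g.\ $e^{-at}$ and $e^{-(t-T)t}$ where $e^{-a(t-T)}$ is intended), which your cleaner derivation avoids entirely. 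Both arguments are sound; yours is arguably the better exposition for a lemma whose formula a reader might otherwise have to reverse-engineer.
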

\begin{proof}
We verify directly that this is indeed the solution, since
\begin{align}
    \frac{\mathrm{d}y_t}{\mathrm{d}t} &= a e^{a\left(t-T\right)}\left(y_T - \frac{b}{a}\left(e^{-at} - 1\right)\right) + e^{a\left(t-T\right)}be^{-a\left(t-T\right)} \\
    &= a e^{a\left(t-T\right)}\left(y_T - \frac{b}{a}\left(e^{-\left(t-T\right)t} - 1\right)\right) + b = a y_t + b \\
    y_T &=\left(y_T - \frac{b}{a}\left(1 - 1\right)\right)=y_T \,.
\end{align}
\end{proof}
Next, we prove the main Theorem.
\begin{proof}
We assume WLOG that for all $i\in [N-1]$ $\bv u_i = \bv e_i$. We can analyze the score flow along each orthogonal direction separately. In each direction, we divide the ODE to the following cases:

If $i\notin [N-1]$, then  \eqref{eq:time depended score flow} is 
\begin{align}
    \frac{\mathrm{d} y_r}{\mathrm{d}r} &= -y  \,.
\end{align}
Note that the initial point is at $r_{0}=-\log\sqrt{T}$.
Using Lemma \ref{lemma:affine ODE}, we obtain
\begin{align}
    \left(\bv y_r\right)_i = \left(\bv y_{T}\right)_i e^{-1\left(r+\log\sqrt{T}\right)} \,.
\end{align}
Since $r = -\log \sqrt{t}$, we further obtain 
\begin{align}
    \left(\bv y_t\right)_i = \left(\bv y_{T}\right)_i e^{\left( \log \sqrt{t}-\log\sqrt{T}\right)} =  \left(\bv y_{T}\right)_i e^{\left( \log \sqrt{\frac{t}{T}}\right)} = \left(\bv y_{T}\right)_i\sqrt{\frac{t}{T}} \,.
\end{align}
Therefore, we obtain $\left(\bv y_0\right)_i =0$.

We now consider now the case where $i\in [N-1]$.

In the case where $y_{i} < \rho_t$, \eqref{eq:time depended score flow} is 
\begin{align}
    \frac{\mathrm{d} y_r}{\mathrm{d}r} &= -y \,.
\end{align}
So, similarly to the previous case, we obtain $\left(\bv y_0\right)_i =0$.

In the case where $y_{i}>\left\Vert x_{i}\right\Vert -\rho_t$, \eqref{eq:time depended score flow} is 
\begin{align}
    \frac{\mathrm{d} y_r}{\mathrm{d}r} &= \left \Vert \bv x_i\right \Vert - y \,.
\end{align}
Note that the initial point is at $r_{0}=-\log\sqrt{T}$.
Using Lemma \ref{lemma:affine ODE} we obtain
\begin{align}
    \left(\bv y_r\right)_i &= e^{-1\left(r+\log\sqrt{T}\right)}\left(\left(\bv y_{T}\right)_i +\left \Vert \bv x_i\right \Vert  \left(e^{-1\left(r+\log\sqrt{T}\right)} - 1\right) \right) \\
    &= \left \Vert \bv x_i\right \Vert + \left(\left(\bv y_{T}\right)_i - \left \Vert \bv x_i\right \Vert \right) e^{-1\left(r+\log\sqrt{T}\right)} \,.
\end{align}
Since $r = -\log \sqrt{t}$, we further obtain 
\begin{align}
    \left(\bv y_t\right)_i &= \left \Vert \bv x_i\right \Vert + \left(\left(\bv y_{T}\right)_i - \left \Vert \bv x_i\right \Vert \right) e^{\left( \log \sqrt{t}-\log\sqrt{T}\right)} = \\
    &=\left \Vert \bv x_i\right \Vert + \left(\left(\bv y_{T}\right)_i - \left \Vert \bv x_i\right \Vert \right)\sqrt{\frac{t}{T}} \,.
\end{align}
Therefore, we obtain $\left(\bv y_0\right)_i =\left \Vert \bv x_i\right \Vert$.

In the case where $\rho_t<y_{i}<\left \Vert \bv x_i\right \Vert-\rho_t  $, \eqref{eq:time depended score flow} is 
\begin{align}
    \frac{\mathrm{d} y_r}{\mathrm{d}r} &= \rho_{g^{-1}_r}\left(\frac{2}{\left\Vert \bv x_{i}\right\Vert }y-1\right) \,.
\end{align}
Note that
\begin{align}
    \rho_t &= \alpha \sigma_t = \alpha \sqrt{t}\\
    g^{-1}_r &= e^{-2r} \,.
\end{align}
Therefore,
\begin{align}
    \rho_r = \alpha e^{-r} \,
\end{align}
so we obtain the following ODE:
\begin{align}
    \frac{\mathrm{d} y_r}{\mathrm{d}r} &=  \alpha e^{-r}\left(\frac{2}{\left\Vert \bv x_{i}\right\Vert }y-1\right) \,.
\end{align}
Next, we apply additional time re-scaling 
\begin{align}
    k&=-\alpha e^{-r}\\
    \frac{\mathrm{d}k}{\mathrm{d}r}&=\alpha e^{-r}=\rho_r\\
    \frac{\mathrm{d}r}{\mathrm{d}k}&=\alpha^{-1}e^{r}=\rho_r^{-1}\,.
\end{align}
So, we get the following ODE:
\begin{align}
    \frac{\mathrm{d} y_r}{\mathrm{d}k} &=  \frac{\mathrm{d} y_r}{\mathrm{d}r}\frac{\mathrm{d}r}{\mathrm{d}k} = \alpha e^{-r}\left(\frac{2}{\left\Vert \bv x_{i}\right\Vert }y-1\right)\alpha^{-1}e^{r} = \frac{2}{\left\Vert \bv x_{i}\right\Vert }y-1 \\
    \frac{\mathrm{d} y_k}{\mathrm{d}k} &= \frac{2}{\left\Vert \bv x_{i}\right\Vert }y-1 \,.
\end{align}
Note that the initial point is at $k_{0}=-\alpha\sqrt{T}$. Using Lemma \ref{lemma:affine ODE} we obtain
\begin{align}
    \left(\bv y_k\right)_i &= e^{\frac{2}{\left\Vert \bv x_{i}\right\Vert }\left(k+\alpha\sqrt{T}\right)}\left(\left(\bv y_{T}\right)_i+\frac{\left\Vert \bv x_{i}\right\Vert }{2}\left(e^{-\frac{2}{\left\Vert \bv x_{i}\right\Vert }\left(k+\alpha\sqrt{T}\right)}-1\right)\right) \\
    &= \frac{\left\Vert x_{i}\right\Vert }{2}+\left(\left(\bv y_{T}\right)_i-\frac{\left\Vert x_{i}\right\Vert }{2}\right)e^{\frac{2}{\left\Vert x_{i}\right\Vert }\left(k+\alpha\sqrt{T}\right)} \,.
\end{align}
Since $k=-\alpha e^{-r}$ and $r = -\log \sqrt{t}$,  we obtain 
\begin{align}
    \left(\bv y_r\right)_i &= \frac{\left\Vert x_{i}\right\Vert }{2}+\left(\left(\bv y_{T}\right)_i-\frac{\left\Vert x_{i}\right\Vert }{2}\right)e^{\frac{2}{\left\Vert x_{i}\right\Vert }\left(-\alpha e^{-r}+\alpha\sqrt{T}\right)} \\
    \left(\bv y_t\right)_i &= \frac{\left\Vert x_{i}\right\Vert }{2}+\left(\left(\bv y_{T}\right)_i-\frac{\left\Vert x_{i}\right\Vert }{2}\right)e^{\frac{2}{\left\Vert x_{i}\right\Vert }\left(-\alpha \sqrt{t}+\alpha\sqrt{T}\right)}   \,.
\end{align}
So, we obtain $\left(\bv y_0\right)_i =\frac{\left\Vert x_{i}\right\Vert }{2}+\left(\left(\bv y_{T}\right)_i-\frac{\left\Vert x_{i}\right\Vert }{2}\right)e^{\frac{2\alpha\sqrt{T}}{\left\Vert x_{i}\right\Vert }}$.
Given an initialization point $\bv y_T$, let $\mathcal{I}\subseteq [N-1]$ be a non empty set such that $\rho < \left(\bv y_{T}\right)_i < \left \Vert \bv x_i \right \Vert-\rho$ for all $i\in\mathcal{I}$ and either $ \left(\bv y_{T}\right)_i  < \rho$ or $ \left(\bv y_{T}\right)_i > \left \Vert \bv x_i \right \Vert-\rho$ for all $j\in [N-1]\setminus\mathcal{I}$.
Then, if 
\begin{align}
    T > \max_{i\in \mathcal{I}}\left(\frac{\left\Vert x_{i}\right\Vert }{2\alpha}\right)^2 \log^2\left(\frac{\frac{\left\Vert x_{i}\right\Vert }{2}}{\left(\bv y_{T}\right)_i-\frac{\left\Vert x_{i}\right\Vert }{2}}\right) \,,
\end{align}
we converge to the closest point in the set $\mathcal{A} = \{ \sum_{n \in \mathcal{I}} \bv x_n \mid  \mathcal{I} \subseteq [N-1] \}$ to the initialization point $\bv y_T$, where $\{\bv x_n\}_{n=0}^{N-1}$ is the training set.
We instead converge to the closest boundary of the hyperbox to the initialization point $\bv y_T$ if
\begin{align}
     T < \max_{i\in \mathcal{I}}\left(\frac{\left\Vert x_{i}\right\Vert }{2\alpha}\right)^2 \log^2\left(\frac{\frac{\left\Vert x_{i}\right\Vert }{2}}{\left(\bv y_{T}\right)_i-\frac{\left\Vert x_{i}\right\Vert }{2}}\right) \,.
\end{align}
\end{proof}
\subsection{Proof of Theorem \ref{theorem:obtuse stationary points}} \label{appendix:proof obtuse of stationary points} 
\begin{proof}
In the case where the convex hull of the training points is an $(N-1)$-simplex, such that $\bv x_0$ forms an obtuse angle with all other vertices and $\bv x_0 = 0$, the score function is 
\begin{align}
    \bv s\left(\bv y\right) &= \frac{\bv h^*_{\rho}(\vy)-\bv y}{\sigma^2} \\
    &= \frac{\sum_{n=1}^{N-1} \frac{\left \Vert \bv x_n \right \Vert}{\left \Vert \bv x_n \right \Vert -2 \rho} \bv u_n \left([\bv u_n^\top \bv y - \rho]_+ -[\bv u_n^\top \bv y - \left(\left \Vert \bv x_n \right \Vert -\rho\right)]_+\right) -\bv y}{\sigma^2} \,.
\end{align}
The Jacobian matrix is
\begin{align}
    J_{ij}\left(\bv y\right) = \frac{\sum_{n=1}^{N-1} \frac{\left \Vert \bv x_n \right \Vert}{\left \Vert \bv x_n \right \Vert -2 \rho}\left(u_n\right)_i\left(u_n\right)_j \Delta_n\left(\bv y\right)-\delta_{i,j}}{\sigma^2} \,,
\end{align}
where $\Delta_n\left(\bv y\right)$ indicates if only one of the ReLU functions is activated. In matrix form we obtain
\begin{align}
    \bv J\left(\bv y\right) = \frac{1}{\sigma^2}\left(\bv U \bv U^\top -\bv I\right) \,,
\end{align}
where
\begin{align}
    \bv U &= \left( \Delta_1\left(\bv y\right)\sqrt{\gamma_1} \bv u_1,\cdots,  \Delta_{N-1}\left(\bv y\right)\sqrt{\gamma_{N-1}}\bv u_{N-1}\right)\\
    \gamma_n &= \frac{\left \Vert \bv x_n\right \Vert}{\left \Vert \bv x_n\right \Vert -2 \rho} \\
    \Delta_n\left(\bv y\right) &\in \{0,1\}\,.
\end{align}
Note that the Jacobian matrix is a real and symmetric matrix therefore it has real eigenvalues.
In this case, the stability condition is 
\begin{align}
    \mathrm{Re}\{\lambda\left(\bv J\left(\bv y\right)\right)\} = \lambda\left(\bv J\left(\bv y\right)\right) < 0  \,.
\end{align} 
For any $\bv a \in \mathbb{R}^d$ 
\begin{align}
    \bv a^\top \bv J\left(\bv y\right) \bv a \le \lambda_{\max}\left(\bv J\left(\bv y\right)\right)  \bv a^\top \bv a \,.
\end{align}
This holds in particular for $\bv a \in \mathcal{S}^{d-1}$, therefore
\begin{align}
    \lambda_{\max}\left(\bv J\right)  &\ge \bv a^\top  \frac{1}{\sigma^2}\left(\bv U \bv U^\top -\bv I\right) \bv a \\
    & = \frac{1}{\sigma^2}\left( \left \Vert \bv a^\top \bv U\right \Vert_2^2 -1\right) \,.
\end{align}
If we choose $\bv a = \bv u_n$ such that $\Delta_n\left(\bv y\right) \ne 0$, then $\left \Vert \bv a^\top \bv U\right \Vert_2^2 > 1$, since $\gamma_n > 1$. Therefore, a stationary point is stable if and only if for all $n \in \{1,\cdots, N-1\}$ $\Delta_i\left(\bv y\right) = 0$. 
Note that if $\bv y$ is such that $\Delta_n\left(\bv y\right) = 0$ for all $n \in \{1,\cdots,N-1\}$, then there exists
$\mathcal{I}  \in \mathcal{P}({0,1,\cdots,N-1})$ such that
\begin{align}
    f^*(\bv y) = \sum_{n \in \mathcal{I}} \bv x_n \,.
\end{align}
Therefore, $\bv y^* = \sum_{n \in \mathcal{I}} \bv x_n$ is a stationary point if and only if  for all $i \in \{1,\cdots,N-1\}$ $\Delta_i\left(\bv y^*\right) = 0$.
Note that the set of stable stationary points is not empty, since for all $i\in [N]$ the point $\bv y^* = \bv x_i$ is a stable stationary point because $\bv f^*\left(\bv y^*\right) = \bv x_i$, and thus  $\Delta_n\left(\bv y^*\right) = 0$ for all $n \in \{1,\cdots, N-1\}$.  

The condition for the point $\sum_{n \in \mathcal{I}} \bv x_n $ where $ \mathcal{I} \subseteq [N]$ and $|\mathcal{I}| \ge 2$ if $0 \notin \mathcal{I}$ and $|\mathcal{I}| \ge 3$ if $0 \in \mathcal{I}$ to be a stable stationary point, is that for all $\forall k\in\mathcal{I}$
\begin{align}
\sum_{i\in\mathcal{I}} \bv u_k^\top \bv x_i &>\left\Vert \bv x_k\right\Vert -\rho \,,
\end{align}
which is equivalent to that for all $\forall k\in\mathcal{I}$
\begin{align}
    \sum_{i\in\mathcal{I} \setminus \{k\}} \bv u_k^\top \bv x_i >-\rho \,.
\end{align}
This set of inequality is equivalent to the condition
\begin{align}
    \min_{k\in\mathcal{I}}\left\{ \sum_{i\in\mathcal{I} \setminus \{k\}}\bv u_k^\top \bv u_i \left\Vert \bv x_i\right\Vert \right\} >-\rho\,.
\end{align}
\end{proof} 

\subsection{Proof of Theorem \ref{theorem: obtuse of score flow convergence}} \label{appendix:proof obtuse of score flow convergence} 
First, we prove the following lemma.
\begin{lemma}\label{lemma: affine system ODE}
Consider the following system of affine ODE 
\begin{align}
    \frac{\mathrm{d}\bv y_t}{\mathrm{d} t} =\bv A \bv y_t+\bv b \,,
\end{align}
with the initial condition $\bv y_0$, where  $\bv A\in \mathbb{R}^{d\times d}$ is a non singular  matrix. The solution is
\begin{align}
    \bv y_t = e^{\bv At}\left(\bv y_0 -\bv A^{-1}\left(e^{-\bv At}-\bv I\right)\bv b\right)\,.
\end{align}
In the case where $\bv A$ is also symmetric, the solution can be written as 
\begin{align}
    \bv y_t = \sum_{i=1}^{d}\bv v_i\left(\bv v_i^{\top}\bv y_0\right)e^{\lambda_i t}-\sum_{i=1}^{d}\bv v_i\left(\bv v_i^{\top}\bv b\right)\lambda_i^{-1}\left(1-e^{\lambda_i t}\right) \,,
\end{align}
where $\sum_{i=1}^d \lambda_i \bv v_i \bv v_i^\top $ is the eigenvalue decomposition of the matrix $\bv A$.
\end{lemma}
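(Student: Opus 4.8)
The plan is to derive the closed form by the integrating-factor method and then read off the symmetric case by diagonalization. Since $\bv A$ is a constant matrix it commutes with $e^{\pm \bv A t}$ and $\frac{\mathrm{d}}{\mathrm{d}t}e^{-\bv A t} = -e^{-\bv A t}\bv A$, so left-multiplying the ODE by the integrating factor $e^{-\bv A t}$ collapses the left-hand side to a total derivative:
\begin{align}
\frac{\mathrm{d}}{\mathrm{d}t}\left(e^{-\bv A t}\bv y_t\right) = e^{-\bv A t}\bv b.
\end{align}

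First I would integrate this identity from $0$ to $t$. Because $\bv A$ is nonsingular, $\int_0^t e^{-\bv A s}\,\mathrm{d}s = \bv A^{-1}\!\left(\bv I - e^{-\bv A t}\right)$, giving $e^{-\bv A t}\bv y_t - \bv y_0 = \bv A^{-1}(\bv I - e^{-\bv A t})\bv b$. Multiplying back by $e^{\bv A t}$, which commutes with $\bv A^{-1}$, and rearranging produces the stated formula $\bv y_t = e^{\bv A t}\left(\bv y_0 - \bv A^{-1}(e^{-\bv A t} - \bv I)\bv b\right)$. Equivalently — and this is perhaps the cleanest write-up — one can simply differentiate the claimed expression, check that it solves $\mathrm{d}\bv y_t/\mathrm{d}t = \bv A\bv y_t + \bv b$ and reduces to $\bv y_0$ at $t=0$, and invoke uniqueness for linear ODEs with constant coefficients.

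For the symmetric case I would substitute the spectral decomposition $\bv A = \sum_{i=1}^d \lambda_i \bv v_i \bv v_i^\top$ with $\{\bv v_i\}$ orthonormal, so that $e^{\bv A t} = \sum_i e^{\lambda_i t}\bv v_i \bv v_i^\top$ and $\bv A^{-1} = \sum_i \lambda_i^{-1}\bv v_i \bv v_i^\top$ (all $\lambda_i \neq 0$ by nonsingularity). Expanding the two terms of the closed form using orthonormality of the $\bv v_i$ gives $e^{\bv A t}\bv y_0 = \sum_i \bv v_i(\bv v_i^\top \bv y_0)e^{\lambda_i t}$ and the second term equal to $-\sum_i \bv v_i(\bv v_i^\top \bv b)\lambda_i^{-1}(1 - e^{\lambda_i t})$, which is exactly the asserted eigenbasis expression.

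The argument is entirely routine and I do not anticipate any genuine obstacle; the only points needing a line of justification are that a constant matrix commutes with its own exponential and inverse, so that the integrating-factor step is legitimate, and that nonsingularity of $\bv A$ is precisely what lets us evaluate $\int_0^t e^{-\bv A s}\,\mathrm{d}s$ in closed form and write $\bv A^{-1}$ at all.
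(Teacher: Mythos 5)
Your argument is correct. The paper's own proof simply differentiates the claimed closed form and checks it satisfies the ODE and initial condition (the verification route you mention at the end of your second paragraph), then does the same spectral expansion you do for the symmetric case. Your integrating-factor derivation is a slightly more constructive variant of the same idea — it shows where the formula comes from rather than just confirming it — but there is no substantive difference in content or difficulty; both hinge on $\bv A$ commuting with $e^{\pm\bv A t}$ and on nonsingularity to make $\bv A^{-1}$ (equivalently $\int_0^t e^{-\bv A s}\,\mathrm{d}s$) well defined.
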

\begin{proof}
We verify directly that this is indeed the solution, since
\begin{align}
\frac{\mathrm{d}\bv y_t}{\mathrm{d} t}	&=\bv Ae^{\bv At}\left(\bv y_0-\bv A^{-1}\left(e^{ -\bv A t}-\bv I\right)\bv b\right)+e^{\bv A t}e^{-\bv A t}\bv b
=\bv A \bv y_t+\bv b \\
\bv y_0	&=I\left(\bv y_{0}-\bv A^{-1}\left(\mathbf{I}-\mathbf{I}\right)\bv b\right)=\bv y_0\,.
\end{align}
In the case where $\bv A$ is also symmetric,
\begin{align}
e^{\bv A t}&=\sum_{k=0}^{\infty}\frac{1}{k!}\left(\bv A t\right)^{k}
= \bv {V}\left(\sum_{k=0}^{\infty}\frac{1}{k!}t^{k}\bs \Lambda ^{k}\right)\bv V^{\top}  \\ 
&=\bv V\mathrm{diag}\left(e^{\lambda_1 t},\cdots,e^{\lambda_d t}\right)\bv V^{\top} 
=\sum_{i=1}^{d} e^{\lambda_i t}\bv v_i\bv v_i^{\top} \\
e^{-\bv A t}&=\sum_{i=1}^{d} e^{-\lambda_i t}\bv v_i\bv v_i^{\top}\,.
\end{align}
Therefore,
\begin{align}
\bv y_t	&=e^{\bv A t}\left(\bv y_0-\bv A^{-1}\left(e^{-\bv A t}-\bv I\right)\bv b\right) \\
&=\sum_{i=1}^{d}\bv v_i \bv v_i^{\top}e^{\lambda_i t}\left(\bv y_0-\sum_{k=1}^{d}\bv v_k\bv v_k^{\top}\lambda_i^{-1}\sum_{j=1}^{d}\bv v_j\bv v
_j^{\top}\left(e^{-\lambda_j t}-1\right)\bv b\right) \\
&=\sum_{i=1}^{d}\bv v_i\bv v_i^{\top}e^{\lambda_i t}\left(\bv y_0-\sum_{k=1}^{d}\bv v_k\lambda_k^{-1}\bv v_k^{\top}\left(e^{-\lambda_k t}-1\right)\bv b\right) \\
&=\sum_{i=1}^{d}\bv v_i\left(\bv v_i^{\top}\bv y_0\right)e^{\lambda_i t}-\sum_{i=1}^{d}\bv v_i\left(\bv v_i^{\top}\bv b\right)\lambda_i^{-1}\left(1-e^{\lambda_i t}\right)\,.
\end{align}
\end{proof}
Next, we prove Theorem \ref{theorem: obtuse of score flow convergence}.
\begin{proof}
We assume WLOG that $\bv x_0 = 0$.
Given the initial point $\bv y_0$ such that  $\bv y_0$ such that $\rho <\bv u_i^\top \bv y_0 < \left \Vert \bv x_i \right \Vert -\rho$ and $\bv u_j^\top \bv y_0 < \rho$ for all $j \ne i$, the score is given by
\begin{align}
    \bv s\left(\bv y \right) &= \frac{1}{\sigma^2}\left(\frac{\left\Vert \bv x_i\right\Vert }{\left\Vert \bv x_i\right\Vert -2\rho}\bv u_i\left(\bv u_i^{\top}\bv y-\rho\right)-\bv y\right) \\
    &=  \frac{1}{\sigma^2}\left(\left(\frac{\left\Vert \bv x_i\right\Vert }{\left\Vert \bv x_i\right\Vert -2\rho}\bv u_i \bv  u_i^{\top} -\bv I\right)\bv y - \frac{\left\Vert \bv x_i\right\Vert \rho}{\left\Vert \bv x_i\right\Vert -2\rho}\bv u_i \right)\,.
\end{align}
According to Lemma \ref{lemma: affine system ODE}, the score flow in the partition $\rho <\bv u_i^\top \bv y < \left \Vert \bv x_i \right \Vert -\rho$ and $\bv u_j^\top \bv y < \rho$ for all $j \ne i$ is 
\begin{align}
\bv y_t	= \sum_{k=1}^{d}\bv v_k\left(\bv v_k^{\top}\bv y_0\right)e^{\lambda_k \frac{t}{\sigma^2}}-\sum_{k=1}^{d}\bv v_k\left(\bv v_k^{\top}\bv b\right)\lambda_k^{-1}\left(1-e^{\lambda_k \frac{t}{\sigma^2}}\right) \,,
\end{align}
where the matrix $\bv A = \left(\frac{\left\Vert \bv x_i\right\Vert }{\left\Vert \bv x_i\right\Vert -2\rho}\bv u_i \bv  u_i^{\top} -\bv I\right)$.
The eigenvalue decomposition of $\bv A$ is
\begin{align}
    \bv A	&=\bv V \bs \Lambda \bv V^{\top} \\
    \mathbf{V}	&=\begin{pmatrix}
        \bv u_i &
        \bv w_1 &
        \cdots &
        \bv w_{d-1}
        \end{pmatrix} \\
    \bs \Lambda &=\mathrm{diag}\left(\frac{2\rho}{\left\Vert \bv x_i \right\Vert -2\rho},-1,\cdots,-1\right)\,,
\end{align}
where $\bv w_j\in\bv u_i^{\perp}$. Since,
\begin{align}
    \left(\frac{\left\Vert \bv x_i \right\Vert}{\left\Vert \bv x_i\right\Vert -2\rho}\bv u_i\mathbf{u}_{i}^{\top}-\bv I\right)\bv u_i &=\left(\frac{\left\Vert \bv x_i\right\Vert }{\left\Vert \bv x_i\right\Vert -2\rho}-1\right)\bv u_i\\&=\frac{2\rho}{\left\Vert \bv x_i\right\Vert -2\rho}\bv u_i \\ \left(\frac{\left\Vert \bv x_i\right\Vert }{\left\Vert \bv x_i\right\Vert -2\rho}\bv u_i \bv u_i^{\top}-\bv I\right)\bv w_j &=-\bv w_j \,,
\end{align}
and $\bv b = - \frac{\left\Vert \bv x_i\right\Vert \rho}{\left\Vert \bv x_i\right\Vert -2\rho}\bv u_i $.
So, we get
\begin{align}
\bv y_t	&= \bv u_i\left(\left(\bv u_i^{\top}\bv y_0\right)e^{\frac{2\rho}{\left\Vert \bv x_i \right\Vert -2\rho}\frac{t}{\sigma^2}} + \frac{\left \Vert \bv x_i\right \Vert}{2}\left(1-e^{\frac{2\rho}{\left\Vert \bv x_i \right\Vert -2\rho}\frac{t}{\sigma^2}}\right)\right)   + \sum_{k=2}^{d}\bv v_k\left(\bv v_k^{\top}\bv y_0\right)e^{-\frac{t}{\sigma^2}} \,.
\end{align}
Note that we can analyze the score flow along each orthogonal direction separately.
Next, we divide it into the following cases:

If $\bv u_i^{\top}\bv y_0 = \frac{\left \Vert \bv x_i \right \Vert}{2}$, then
\begin{align}
\bv y_t	&= \bv u_i\frac{\left \Vert \bv x_i\right \Vert}{2}   + \sum_{k=2}^{d}\bv v_k\left(\bv v_k^{\top}\bv y_0\right)e^{-\frac{t}{\sigma^2}} \,.
\end{align}
Therefore, we converge to the point $\bv y_{\infty} =\bv u_i\frac{\left \Vert \bv x_i\right \Vert}{2}$. 

If $\bv u_i^{\top}\bv y_0 > \frac{\left \Vert \bv x_i \right \Vert}{2}$, then we converge to $\bv y_{\infty} =\bv u_i\left \Vert \bv x_i\right \Vert$, and if $\bv u_i^{\top}\bv y_0 < \frac{\left \Vert \bv x_i \right \Vert}{2}$ then we converge to $\bv y_{\infty} = \bv x_1 =0$ (since then the score function is $\frac{\left \Vert \bv x_i\right \Vert-\bv y}{\sigma^2}$ or $-\frac{\bv y}{\sigma^2}$).

We assume WLOG that $\bv u_i^{\top}\bv y_0 > \frac{\left \Vert \bv x_i \right \Vert}{2}$.
We define $\Delta T_{u_i}\left(\rho\right)$ time to reach the edge of the partition, i.e. $\left \Vert \bv x_i \right \Vert -\rho$ starting from the initialization point, and $\Delta T_{v_k}\left(\rho, \epsilon\right)$ time to reach $\epsilon$  distance from zero (the data manifold) starting from the initialization point.
\begin{align}
    \Delta T_{u_i}\left(\rho\right) &= \sigma^2 \frac{\left\Vert \bv x_i \right\Vert -2\rho}{2\rho}\log\left(\frac{\frac{\left \Vert \bv x_i \right \Vert}{2} -\rho}{\bv u_i^{\top}\bv y_0-\frac{\left \Vert \bv x_i \right \Vert}{2}}\right)\\ 
    \Delta T_{v_k}\left(\rho, \epsilon\right) &= \sigma^2 \log\left(\frac{\bv v_k^{\top}\bv y_0}{\epsilon}\right)\,.
\end{align}
Since $\rho = \alpha \sigma$, we get that
\begin{align}
    \Delta T_{u_i}\left(\rho\right) &= \rho \frac{\left\Vert \bv x_i \right\Vert -2\rho}{2\alpha^2}\log\left(\frac{\frac{\left \Vert \bv x_i \right \Vert}{2} -\rho}{\bv u_i^{\top}\bv y_0-\frac{\left \Vert \bv x_i \right \Vert}{2}}\right)\\ 
    \Delta T_{v_k}\left(\rho, \epsilon\right) &= \left(\frac{\rho}{\alpha}\right)^2\log\left(\frac{\bv v_k^{\top}\bv y_0}{\epsilon}\right)\,.
\end{align}
Similarly to \ref{appendix:proof orthogonal score flow convergence}, we get that $\exists \rho_0\left(\epsilon\right)>0$ such that $\forall \rho < \rho_0\left(\epsilon,\right)$ 
\begin{align}
    T_0 = \max_{k}\Delta T_{v_k}\left(\epsilon\right)< T <\Delta T_{u_i}\left(\rho\right) \,.
\end{align}
\end{proof}

\subsection{Proof of Theorem \ref{theorem: obtuse probability flow convergence}}
\label{Appendix: proof obtuse probability flow convergence}
\begin{proof}
The estimated score function at the initialization is 
\begin{align}
        \sigma_t^2\bv s\left(\bv y,t\right) =\left(\left(1+\frac{2}{\left\Vert \bv x_{i}\right\Vert }\rho_t\right)\bv u_i \bv  u_i^{\top} -\bv I\right)\bv y - \rho_t\bv u_i  \,.
\end{align}
Next, we project the estimated score along $\bv u_i$ and the orthogonal direction, so we get
\begin{align}
        \bv u_i\bv u_i^{\top}\sigma_t^2\bv s\left(\bv y,t\right) &= \left(\left(1+\frac{2}{\left\Vert \bv x_{i}\right\Vert }\rho_t\right)\bv u_i \bv  u_i^{\top} -\bv u_i\bv u_i^{\top}\right)\bv y - \rho_t\bv u_i  \\
        &= \bv u_i \rho_t \left(\frac{2}{\left\Vert \bv x_{i}\right\Vert } \bv u_i^\top \bv y -1 \right) \\
        \left(\bv I - \bv u_i\bv u_i^{\top}\right)\sigma_t^2\bv s\left(\bv y,t\right) &= \left(\bv I - \bv u_i\bv u_i^{\top}\right)\left(\left(1+\frac{2}{\left\Vert \bv x_{i}\right\Vert }\rho_t\right)\bv u_i \bv  u_i^{\top} -\bv I\right)\bv y - \rho_t\left(\bv I - \bv u_i\bv u_i^{\top}\right)\bv u_i \\
        &= \left(\left(1+\frac{2}{\left\Vert \bv x_{i}\right\Vert }\rho_t\right)\bv u_i \bv  u_i^{\top} -\bv I\right)\bv y -  \left(\left(1+\frac{2}{\left\Vert \bv x_{i}\right\Vert }\rho_t\right)\bv u_i \bv  u_i^{\top} -\bv u_i\bv u_i^{\top}\right)\bv y \\
        &= \left(\left(1+\frac{2}{\left\Vert \bv x_{i}\right\Vert }\rho_t\right)\bv u_i \bv  u_i^{\top} -\bv I\right)\bv y -  \frac{2}{\left\Vert \bv x_{i}\right\Vert }\rho_t\bv u_i \bv  u_i^{\top} \\
        &=\left(\bv u_i\bv u_i^{\top}- \bv I  \right)\bv y \,.
\end{align}
Therefore, the projected score onto $\bv u_i$ is $\frac{\rho_t \left(\frac{2}{\left\Vert \bv x_{i}\right\Vert } \bv u_i^\top \bv y -1 \right)}{\sigma_t^2}$, and the projected score function onto $\bv w_j \in \bv u_i^{\perp}$ is  $-\frac{\bv w_j ^\top y}{\sigma_t^2}$, so we get the same estimated score as in Theorem \ref{theorem: orthogonal probability flow convergence} (we can analyze the score flow along each orthogonal direction separately). Therefore, along $\bv w_j$ we get 
\begin{align}
    \bv w_j^\top \bv y_t = \bv w_j^\top\bv y_{T} e^{\left( \log \sqrt{t}-\log\sqrt{T}\right)} =  \bv w_j^\top\bv y_{T} e^{\left( \log \sqrt{\frac{t}{T}}\right)} = \left(\bv y_{T}\right)_i\sqrt{\frac{t}{T}} \,.
\end{align}
So, we obtain $\bv w_j^\top \bv y_0 =0$.
Along $\bv u_i$ we get 
\begin{align}
    \bv u_i^\top \bv y_t &= \frac{\left\Vert x_{i}\right\Vert }{2}+\left(\bv u_i^\top \bv y_{T}-\frac{\left\Vert x_{i}\right\Vert }{2}\right)e^{\frac{2}{\left\Vert x_{i}\right\Vert }\left(-\alpha \sqrt{t}+\alpha\sqrt{T}\right)} \,,
\end{align}
so we obtain $\bv w_j^\top \bv y_0 =\frac{\left\Vert x_{i}\right\Vert }{2}+\left(\bv u_i^\top \bv y_{T}-\frac{\left\Vert x_{i}\right\Vert }{2}\right)e^{\frac{2\alpha\sqrt{T}}{\left\Vert x_{i}\right\Vert }} $.
Then, if 
\begin{align}
    T \ge \left(\frac{\left\Vert x_{i}\right\Vert }{2\alpha}\right)^2 \log^2\left(\frac{\frac{\left\Vert x_{i}\right\Vert }{2}}{\left(\bv y_{T}\right)_i-\frac{\left\Vert x_{i}\right\Vert }{2}}\right) \,,
\end{align}
we converge to the closest point in the set $\{ \bv x_0, \bv x_i \}$ to the initialization point $\bv y_T$ since the estimated score is equal to $-\frac{\bv y}{\sigma_t^2}$ or $\frac{\left \Vert \bv x_i \right \Vert -\bv y}{\sigma_t^2}$ and we converge to $0$ or $\left \Vert \bv x_i \right \Vert$ (as in Theorem \ref{theorem: orthogonal probability flow convergence}), and if 
\begin{align}
    T < \left(\frac{\left\Vert x_{i}\right\Vert }{2\alpha}\right)^2 \log^2\left(\frac{\frac{\left\Vert x_{i}\right\Vert }{2}}{\left(\bv y_{T}\right)_i-\frac{\left\Vert x_{i}\right\Vert }{2}}\right) \,,
\end{align}
we converge to a point on the line connecting $\bv x_0$ and $\bv x_i$.
\end{proof}

\subsection{Poof of Proposition \ref{theorem: equilateral triangle score flow convergence}}
\label{appendix: proof equilateral triangle score flow convergence}
\begin{proof}
We assume WLOG that $\bv x_0 = 0$. Note that since the convex hull of the training points is an equilateral triangle, then $\left \Vert x_i \right \Vert = \left \Vert x \right \Vert$.
Given the initial point $\bv y_0$ such that $i\in\{1,2\}-\frac{\left\Vert \bv x\right\Vert}{2}+\rho<\bv u_i^{\top}\bv y<\left\Vert \bv x\right\Vert -\rho$ and $\bv u_3^{\top}\bv y < - \frac{\left\Vert \bv x \right\Vert}{2} + \rho$, the score is given by
\begin{align}
    \bv s\left(\bv y \right) &= \frac{1}{\sigma^2}\left(\frac{\left\Vert \bv x\right\Vert}{\frac{3}{2}\left\Vert \bv x\right\Vert -2\rho} \sum_{i=1}^2\left(\bv u_i \bv u_i^{\top}\bv y+\frac{1}{2}\bv x_i-\bv u_i\rho\right)-\mathbf{y}\right) \\
    &= \frac{1}{\sigma^2}\left(\frac{\left\Vert \bv x\right\Vert }{\frac{3}{2}\left\Vert \bv x\right\Vert -2\rho}\left(\bv u_1 \bv u_1^{\top}+\bv u_2 \bv u_2^{\top}\right)-\bv I\right)\bv y \\
    &\quad+\frac{1}{\sigma^2}\left(\frac{\left\Vert \bv x\right\Vert }{\frac{3}{2}\left\Vert \bv x\right\Vert -2\rho}\left(\frac{1}{2}\left\Vert \bv x\right\Vert -\rho\right)\bv u_1+\frac{\left\Vert \bv x\right\Vert }{\frac{3}{2}\left\Vert \bv x\right\Vert -2\rho}\left(\frac{1}{2}\left\Vert \bv x\right\Vert -\rho\right)\bv u_2\right)\,.
\end{align}
According to Lemma \ref{lemma: affine system ODE}, the score flow in the partition $i\in\{1,2\}-\frac{\left\Vert \bv x\right\Vert}{2}+\rho<\bv u_i^{\top}\bv y<\left\Vert \bv x\right\Vert -\rho$ and $\bv u_3^{\top}\bv y < -\frac{\left\Vert \bv x \right\Vert}{2} +\rho$ is 
\begin{align}
\bv y_t	= \sum_{k=1}^{2}\bv v_k\left(\bv v_k^{\top}\bv y_0\right)e^{\lambda_k \frac{t}{\sigma^2}}-\sum_{k=1}^{2}\bv v_k\left(\bv v_k^{\top}\bv b\right)\lambda_k^{-1}\left(1-e^{\lambda_k \frac{t}{\sigma^2}}\right) \,,
\end{align}
where the matrix $\bv A = \left(\frac{\left\Vert \bv x\right\Vert }{\frac{3}{2}\left\Vert \bv x\right\Vert -2\rho}\left(\bv u_1 \bv u_1^{\top}+\bv u_2 \bv u_2^{\top}\right)-\bv I\right)$.
The eigenvalue decomposition of $\bv A$ is
\begin{align}
    \bv A	&=\bv V \bs \Lambda \bv V^{\top} \\
    \mathbf{V}	&=\begin{pmatrix}
        \frac{\bv u_1-\bv u_2}{\sqrt{2\left(1-\bv u_1^{\top}\bv u_2\right)}} &
        \frac{\bv u_1+\bv u_2}{\sqrt{2\left(1+\bv u_1^{\top}\bv u_2\right)}} 
        \end{pmatrix} \\
    \bs \Lambda &=\mathrm{diag}\left(\frac{\left\Vert \bv x\right\Vert }{\frac{3}{2}\left\Vert \bv x\right\Vert -2\rho}\left(1-\bv u_1^{\top}\bv u_2\right)-1,\frac{\left\Vert \bv x\right\Vert }{\frac{3}{2}\left\Vert \bv x\right\Vert -2\rho}\left(1+\bv u_1^{\top}\bv u_2\right)-1\right)\,,
\end{align}
since,
\begin{align}
\left(\frac{\left\Vert \mathbf{x}\right\Vert }{\frac{3}{2}\left\Vert \bv x\right\Vert -2\rho}\left(\bv u_1\bv u_{1}^{\top}+\bv u_2 \bv u_2^{\top}\right) - \bv I\right)\left(\bv u_1-\bv u_2\right)&=\frac{\left\Vert \bv x\right\Vert }{\frac{3}{2}\left\Vert \bv x\right\Vert -2\rho}\left(\bv u_1+\bv u_2 \bv u_2^{\top}\bv u_1-\bv u_1 \bv u_1^{\top} \bv u_2-\bv u_2\right) - \left(\bv u_1-\bv u_2\right)\\&=\left(\frac{\left\Vert \bv x\right\Vert }{\frac{3}{2}\left\Vert \bv x\right\Vert -2\rho}\left(1-\bv u_{2}^{\top}\bv u_1\right) - 1\right)\left(\bv u_1-\bv u_2\right) \\
\left(\frac{\left\Vert \mathbf{x}\right\Vert }{\frac{3}{2}\left\Vert \bv x\right\Vert -2\rho}\left(\bv u_1\bv u_{1}^{\top}+\bv u_2 \bv u_2^{\top}\right) - \bv I\right)\left(\bv u_1+\bv u_2\right)&=\frac{\left\Vert \bv x\right\Vert }{\frac{3}{2}\left\Vert \bv x\right\Vert -2\rho}\left(\bv u_1+\bv u_2 \bv u_2^{\top}\bv u_1+\bv u_1 \bv u_1^{\top} \bv u_2+\bv u_2\right) - \left(\bv u_1+\bv u_2\right)\\&=\left(\frac{\left\Vert \bv x\right\Vert }{\frac{3}{2}\left\Vert \bv x\right\Vert -2\rho}\left(1+\bv u_{2}^{\top}\bv u_1\right) - 1\right)\left(\bv u_1+\bv u_2\right)\,,
\end{align}
and $\bv b = \frac{\left\Vert \bv x\right\Vert }{\frac{3}{2}\left\Vert \bv x\right\Vert -2\rho}\left(\frac{1}{2}\left\Vert \bv x\right\Vert -\rho\right)\bv u_1+\frac{\left\Vert \bv x\right\Vert }{\frac{3}{2}\left\Vert \bv x\right\Vert -2\rho}\left(\frac{1}{2}\left\Vert \bv x\right\Vert -\rho\right)\bv u_2$.
We assume WLOG that,
\begin{align}
    \bv u_1 = \begin{pmatrix}
        0\\
        1
        \end{pmatrix}\,, \quad
    \bv u_2 = \begin{pmatrix}
        \frac{\sqrt{3}}{2}\\
        -\frac{1}{2}
        \end{pmatrix}\,, \quad
    \bv u_3 = \begin{pmatrix}
        -\frac{\sqrt{3}}{2}\\
        -\frac{1}{2}
        \end{pmatrix}\,,
\end{align}
and we get
\begin{align}
    \bv v_1 &= \frac{1}{\sqrt{3}}\left(\bv u_1 - \bv u_2\right) \\
    \bv v_2 &= \bv u_1 + \bv u_2 = -\bv u3\\
    \lambda_1 &= \frac{\frac{3}{2}\left\Vert \bv x\right\Vert }{\frac{3}{2}\left\Vert \bv x\right\Vert -2\rho} -1 >0 \\
    \lambda_2 &= - \left(1 - \frac{\frac{1}{2}\left\Vert \bv x\right\Vert }{\frac{3}{2}\left\Vert \bv x\right\Vert -2\rho}\right) <0 \,.
\end{align}
\begin{align}
\bv y_t	&= \frac{1}{\sqrt{3}}\left(\bv u_1 - \bv u_2\right)\left(\frac{1}{\sqrt{3}}\left(\bv u_1 - \bv u_2\right)^\top\bv y_0\right)e^{\left(\frac{\frac{3}{2}\left\Vert \bv x\right\Vert }{\frac{3}{2}\left\Vert \bv x\right\Vert -2\rho} -1\right)\frac{\frac{t}{\sigma^2}}{\sigma^2}} \\
&\quad+\left(\bv u_1 + \bv u_2\right)\left(\left(\bv u_1 + \bv u_2\right)^\top\bv y_0\right)e^{-\left(1 - \frac{\frac{1}{2}\left\Vert \bv x\right\Vert }{\frac{3}{2}\left\Vert \bv x\right\Vert -2\rho}\right) \frac{t}{\sigma^2}} \\
&\quad-\left(\bv u_1 + \bv u_2\right)\left(\frac{\left\Vert \bv x\right\Vert }{\frac{3}{2}\left\Vert \bv x\right\Vert -2\rho} \frac{1}{2}\left\Vert \bv x\right\Vert -\rho\right)\left(\frac{\frac{1}{2}\left\Vert \bv x\right\Vert }{\frac{3}{2}\left\Vert \bv x\right\Vert -2\rho} - 1\right)^{-1}\left(1-e^{- \left(1 - \frac{\frac{1}{2}\left\Vert \bv x\right\Vert }{\frac{3}{2}\left\Vert \bv x\right\Vert -2\rho}\right)\frac{t}{\sigma^2}}\right)\,.
\end{align}
Note that,
\begin{align}
    \left(\frac{\left\Vert \bv x\right\Vert }{\frac{3}{2}\left\Vert \bv x\right\Vert -2\rho} \left(\frac{1}{2}\left\Vert \bv x\right\Vert -\rho\right)\right)\left(\frac{\frac{1}{2}\left\Vert \bv x\right\Vert }{\frac{3}{2}\left\Vert \bv x\right\Vert -2\rho} - 1\right)^{-1} &= \frac{\left\Vert \bv x\right\Vert\left(\frac{1}{2}\left\Vert \bv x\right\Vert-\rho\right)}{\frac{3}{2}\left\Vert \bv x\right\Vert -2\rho}\frac{\frac{3}{2}\left\Vert \bv x\right\Vert -2\rho}{-\left\Vert \bv x\right\Vert+2\rho} \\
    &=\frac{\left\Vert \bv x\right\Vert\left(\frac{1}{2}\left\Vert \bv x\right\Vert-\rho\right)}{-\left\Vert \bv x\right\Vert+2\rho} = -\frac{\left\Vert \bv x\right\Vert}{2}\,.
\end{align}
Therefore,
\begin{align}
\bv y_t	&= \frac{1}{\sqrt{3}}\left(\bv u_1 - \bv u_2\right)\left(\frac{1}{\sqrt{3}}\left(\bv u_1 - \bv u_2\right)^\top\bv y_0\right)e^{\left(\frac{\frac{3}{2}\left\Vert \bv x\right\Vert }{\frac{3}{2}\left\Vert \bv x\right\Vert -2\rho} -1\right)\frac{t}{\sigma^2}} \\
&\quad+\left(\bv u_1 + \bv u_2\right)\left(\left(\bv u_1 + \bv u_2\right)^\top\bv y_0\right)e^{-\left(1 - \frac{\frac{1}{2}\left\Vert \bv x\right\Vert }{\frac{3}{2}\left\Vert \bv x\right\Vert -2\rho}\right) \frac{t}{\sigma^2}} \\
&\quad-\left(\bv u_1 + \bv u_2\right)\left(-\frac{\left\Vert \bv x\right\Vert}{2}\right)\left(1-e^{-\left(1 - \frac{\frac{1}{2}\left\Vert \bv x\right\Vert }{\frac{3}{2}\left\Vert \bv x\right\Vert -2\rho}\right) \frac{t}{\sigma^2}}\right) \\
&=\frac{1}{\sqrt{3}}\left(\bv u_1 - \bv u_2\right)\left(\frac{1}{\sqrt{3}}\left(\bv u_1 - \bv u_2\right)^\top\bv y_0\right)e^{\left(\frac{\frac{3}{2}\left\Vert \bv x\right\Vert }{\frac{3}{2}\left\Vert \bv x\right\Vert -2\rho} -1\right)\frac{t}{\sigma^2}} \\
&\quad+\left(\bv u_1 + \bv u_2\right)\left(\left( \left(\bv u_1 + \bv u_2\right)^\top\bv y_0 - \frac{\left\Vert \bv x\right\Vert}{2}\right)e^{- \left(1 - \frac{\frac{1}{2}\left\Vert \bv x\right\Vert }{\frac{3}{2}\left\Vert \bv x\right\Vert -2\rho}\right)\frac{t}{\sigma^2}} + \frac{\left\Vert \bv x\right\Vert}{2} \right)\,.
\end{align}
Note that we can analyze the score flow along each orthogonal direction separately. Next, we divide it into the following cases:

If $\frac{1}{\sqrt{3}}\left(\bv u_1 - \bv u_2\right)^\top\bv y_0 = 0$, then
\begin{align}
    \bv y_t = \left(\bv u_1 + \bv u_2\right)\left(\left( \left(\bv u_1 + \bv u_2\right)^\top\bv y_0 - \frac{\left\Vert \bv x\right\Vert}{2}\right)e^{- \left(1 - \frac{\frac{1}{2}\left\Vert \bv x\right\Vert }{\frac{3}{2}\left\Vert \bv x\right\Vert -2\rho}\right)\frac{t}{\sigma^2}} + \frac{\left\Vert \bv x\right\Vert}{2} \right)\,,
\end{align}
and we converge to the point $\bv y_\infty =  \left(\bv u_1 + \bv u_2\right)\frac{\left\Vert \bv x\right\Vert}{2}$.

If $\frac{1}{\sqrt{3}}\left(\bv u_1 - \bv u_2\right)^\top\bv y_0 > 0$, then we converge to $\bv y_\infty = \bv x_1$, and if $\frac{1}{\sqrt{3}}\left(\bv u_1 - \bv u_2\right)^\top\bv y_0 < 0$, then we converge to $\bv y_\infty = \bv x_2$.

We assume WLOG that $\frac{1}{\sqrt{3}}\left(\bv u_1 - \bv u_2\right)^\top\bv y_0 > 0$. We define $\Delta T_{d}\left(\rho, \epsilon\right)$ as the time to reach $\epsilon$ distance from the data manifold (the line connecting the training points $\bv x_1$ and $\bv x_2$) starting from initialization point $\bv y_0$, and $\Delta T_{e}\left(\rho\right)$ the time to reach the edge of the partition starting from initialization point $\bv y_0$. We assume WLOG that $\left(\bv u_1 + \bv u_2\right)^\top\bv y_0 > \frac{\left\Vert \bv x\right\Vert}{2}$ and $\left(\bv u_1 + \bv u_2\right)^\top\bv y_0 - \frac{\left\Vert \bv x\right\Vert}{2} > \epsilon$
\begin{align}
    \Delta T_{d}\left(\rho, \epsilon\right) &= \frac{\sigma^2}{\frac{\frac{1}{2}\left\Vert \bv x\right\Vert }{\frac{3}{2}\left\Vert \bv x\right\Vert -2\rho} -1}\log{\left(\frac{\epsilon }{\left(\bv u_1 + \bv u_2\right)^\top\bv y_0 - \frac{\left\Vert \bv x\right\Vert}{2}}\right)} \\
    \Delta T_{e}\left(\rho\right) &= \frac{\sigma^2}{\frac{\frac{3}{2}\left\Vert \bv x\right\Vert }{\frac{3}{2}\left\Vert \bv x\right\Vert -2\rho} -1}\log{\left(\frac{\frac{1}{2}\left \Vert \bv x \right \Vert - \rho}{\frac{1}{\sqrt{3}}\left(\bv u_1 - \bv u_2\right)^\top\bv y_0}\right)}\,.
\end{align}
Since $\rho = \alpha \sigma$, we get that
\begin{align}
    \Delta T_{d}\left(\rho, \epsilon\right) &= \frac{\rho^2}{\alpha^2\left(\frac{\frac{1}{2}\left\Vert \bv x\right\Vert }{\frac{3}{2}\left\Vert \bv x\right\Vert -2\rho} -1\right)}\log{\left(\frac{\epsilon }{\left(\bv u_1 + \bv u_2\right)^\top\bv y_0 - \frac{\left\Vert \bv x\right\Vert}{2}}\right)} \\
    \Delta T_{e}\left(\rho\right) &= \frac{\rho^2}{\alpha^2\left(\frac{\frac{3}{2}\left\Vert \bv x\right\Vert }{\frac{3}{2}\left\Vert \bv x\right\Vert -2\rho} -1\right)}\log{\left(\frac{\frac{1}{2}\left \Vert \bv x \right \Vert - \rho}{\frac{1}{\sqrt{3}}\left(\bv u_1 - \bv u_2\right)^\top\bv y_0}\right)}\,.
\end{align}
Similar to \ref{appendix:proof orthogonal score flow convergence} we get that $\exists \rho_0\left(\epsilon\right)>0$ such that $\forall \rho < \rho_0\left(\epsilon,\right)$ 
\begin{align}
    T_0 = \Delta T_{d}\left(\rho, \epsilon\right)< T <T_1 = \Delta T_{e}\left(\rho\right) \,.
\end{align}
\end{proof}

\section{Exploration of Different Thresholds}\label{app:more_thresholds}
We next repeat the statistical analysis done in \Secref{sec:simulations} for different thresholds. 
Figure~\ref{fig:fixedPointIters_l2} demonstrates the existence of virtual points, in an analogous way to Figure~\ref{fig:fixedPointIters}, for the $L_2$ metric.
Figures~\ref{fig:ScoreFlowLinfBarplots} and \ref{fig:ScoreFlowL2Barplots} offer additional insights to the right side of Figure~\ref{fig:scoreFlowSampling}.
Specifically, in Figure~\ref{fig:ScoreFlowLinfBarplots} we compare the results of the convergence types frequency of randomly sampled points with score flow when using different thresholds of the $L_\infty$ distance. In Figure~\ref{fig:ScoreFlowL2Barplots} we instead use the $L_2$ metric.
Similarly, Figures~\ref{fig:DiffusionSamplingLinfBarplots} and \ref{fig:DiffusionSamplingL2Barplots} depict additional comparisons to the right side of Figure~\ref{fig:diffusionSampling}, for both the $L_\infty$ and $L_2$ distance metrics.  

\begin{figure}
    \centering
    \includegraphics[width=0.5\linewidth]{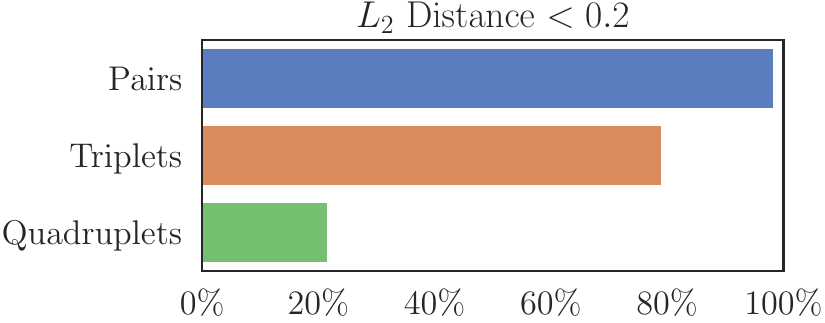}
    \caption{
    \textbf{Existence of stable virtual training points.}
    We run fixed-point iterations on a single denoiser, starting from all possible pair-wise, triplet-wise, and quadruplet-wise combinations of training samples. The plot shows the percentage of points that converged within an $L_2$ distance of $0.2$ to the original, virtual, input point.
    }
    \label{fig:fixedPointIters_l2}
\end{figure}

\begin{figure}
    \centering
    \includegraphics[height=5cm]{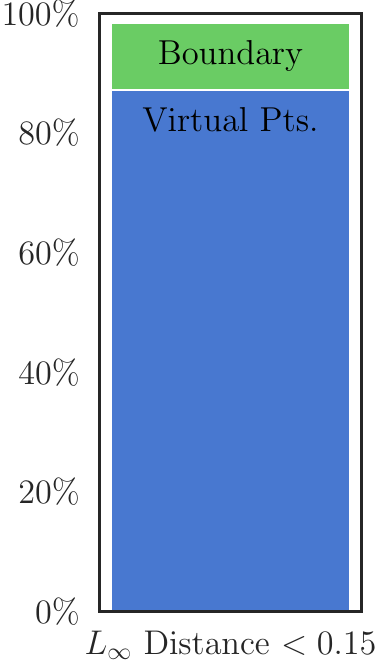}%
    \includegraphics[height=5cm]{figures/ScoreFlow_37_StackedBarPlot_Linf0.2_H300_D30_M500_3000steps_d5.pdf}%
    \includegraphics[height=5cm]{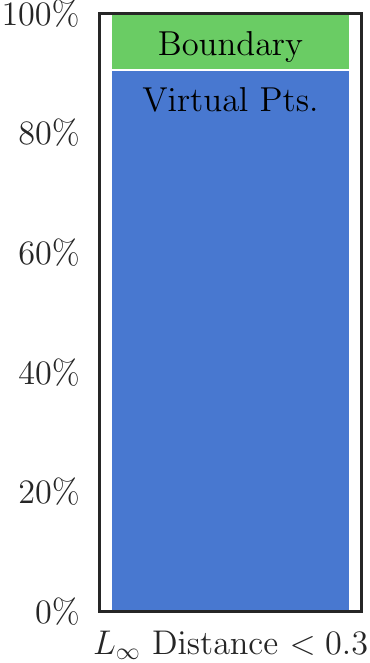}
    \caption{
    \textbf{Convergence types frequency of randomly sampled points for score flow based on $L_\infty$ proximity.}
    We run the discrete ODE formulation of \eqref{eq:discrete ode} for 500 randomly sampled points from $\R^{30}$ for sampling using the score flow. 
    We plot the percentage of points that converged to either a virtual point, a training point, or to the boundaries of the hyperbox, out of all points, based on their $L_\infty$ proximity for different thresholds.
    }
    \label{fig:ScoreFlowLinfBarplots}
\end{figure}

\begin{figure}
    \centering
    \includegraphics[height=5cm]{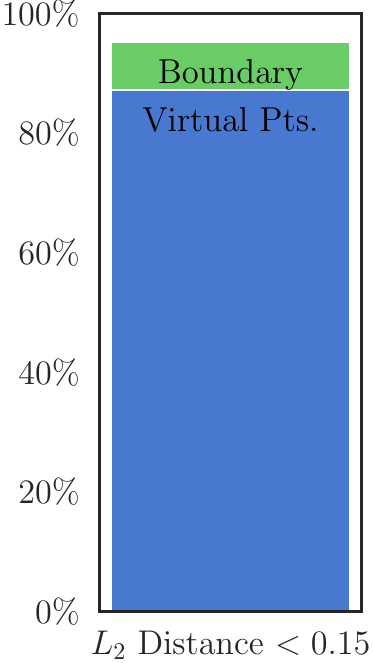}%
    \includegraphics[height=5cm]{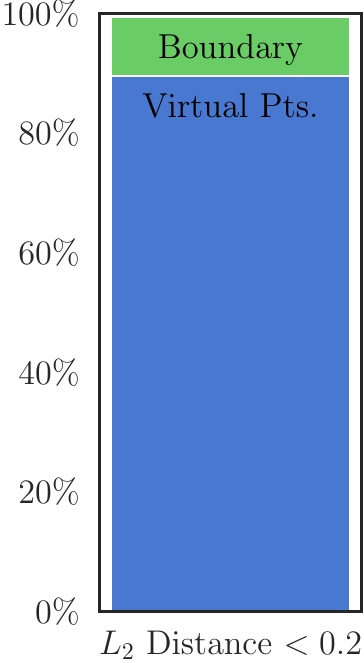}%
    \includegraphics[height=5cm]{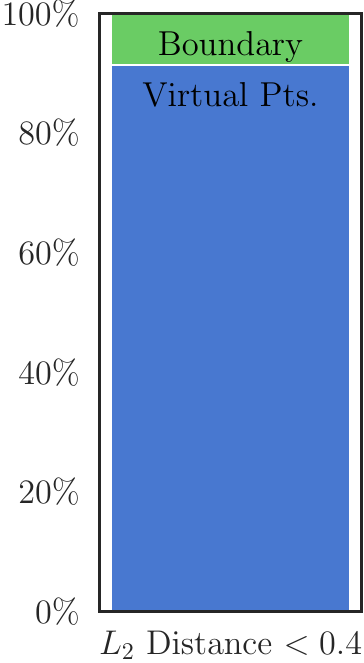}
    \caption{
    \textbf{Convergence types frequency of randomly sampled points for score flow based on $L_2$ proximity.}
    We run the discrete ODE formulation of \eqref{eq:discrete ode} for 500 randomly sampled points from $\R^{30}$ for sampling using the score flow. 
    We plot the percentage of points that converged to either a virtual point, a training point, or to the boundaries of the hyperbox, out of all points, based on their $L_2$ proximity for different thresholds.
    }
    \label{fig:ScoreFlowL2Barplots}
\end{figure}

\begin{figure}[t]
    \centering
    \includegraphics[height=5cm]{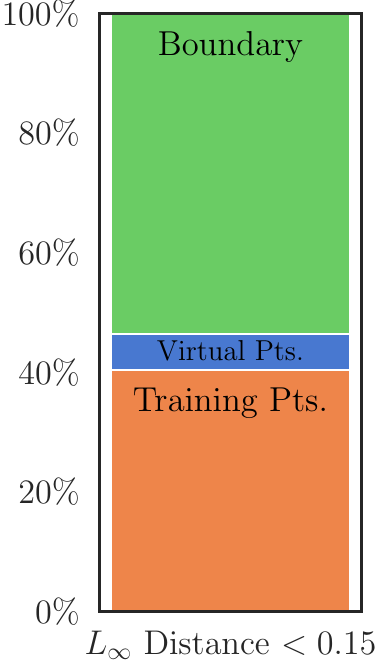}%
    \includegraphics[height=5cm]{figures/StackedBarPlot_Linf0.2_H300_D30_M500.pdf}%
    \includegraphics[height=5cm]{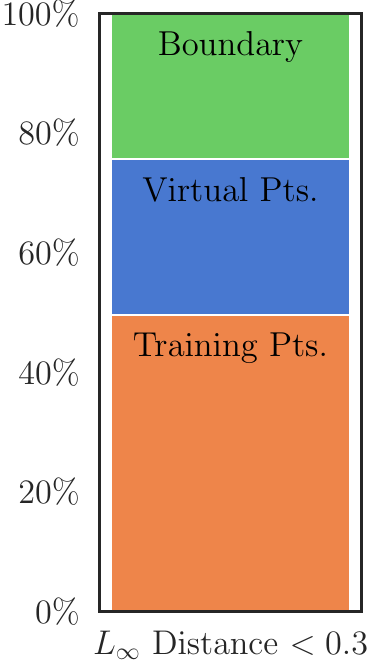}
    \caption{
    \textbf{Convergence types frequency of randomly sampled points for probability flow based on $L_\infty$ proximity.}
    We run the discrete ODE formulation of \eqref{eq:discrete ode} for 500 randomly sampled points from $\R^{30}$ for probability flow. 
    We plot the percentage of points that converged to either a virtual point, a training point, or to the boundaries of the hyperbox, out of all points, based on their $L_\infty$ proximity for different thresholds.
    }
    \label{fig:DiffusionSamplingLinfBarplots}
\end{figure}

\begin{figure}[t]
    \centering
    \includegraphics[height=5cm]{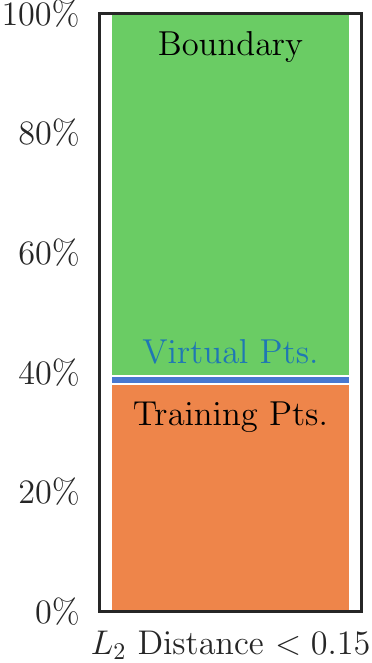}%
    \includegraphics[height=5cm]{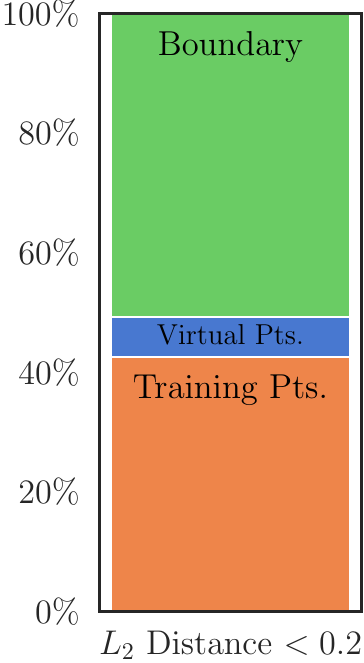}%
    \includegraphics[height=5cm]{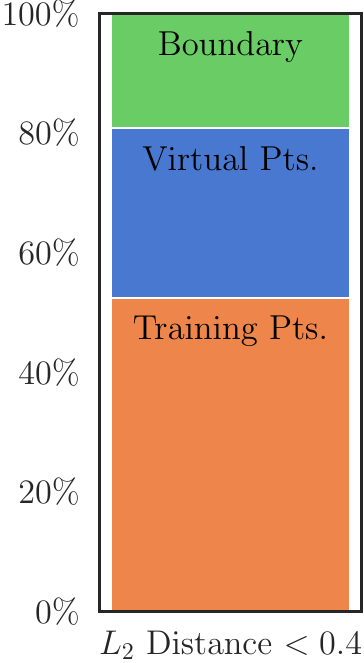}
    \caption{
    \textbf{Convergence types frequency of randomly sampled points for probability flow based on $L_2$ proximity.}
    We run the discrete ODE formulation of \eqref{eq:discrete ode} for 500 randomly sampled points from $\R^{30}$ for probability flow. 
    We plot the percentage of points that converged to either a virtual point, a training point, or to the boundaries of the hyperbox, out of all points, based on their $L_2$ proximity for different thresholds.
    }
    \label{fig:DiffusionSamplingL2Barplots}
\end{figure}

\clearpage

\section{The Minimum Norm Assumption} \label{appenix:weightDecay}

Theorems~\ref{theorem: equilateral triangle score flow convergence}, \ref{theorem: orthogonal probability flow convergence}, \ref{theorem:obtuse stationary points}, \ref{theorem: obtuse of score flow convergence} and
\ref{theorem: obtuse probability flow convergence} all hold in the case of a minimum norm denoiser, in which the denoiser achieves exact interpolation over the noisy training samples. To enforce a consistent denoiser, we used a non-standard training protocol in Section~\ref{sec:simulations}. 
Specifically, we optimize an equality-constrained optimization problem using the Augmented Lagrangian method. 
Here we verify the robustness of our results and the necessity of the minimum norm assumption by repeating the experiment from Section~\ref{sec:simulations} when using standard training, with and without the use of weight decay.
Specifically, all the hyper parameters and Adam optimizer are kept the same, and only the loss function changes to directly optimize \eqref{eq:l2_loss}.
Training with weight decay should result in a denoiser that is similar to the min-norm solution.
Figure~\ref{fig:weightDecay} shows the percentage of points that converged within an $L_\infty$ distance of $0.2$ to either virtual points, training points, or a boundary of the hyperbox, for the different training configurations.
The use of weight decay in a standard training protocol induces a similar bias to that achieved by the use of Augmented Lagrangian method.

\begin{figure}
    \centering
    \begin{subfigure}{0.26\linewidth}
        \centering
        \includegraphics[height=5cm]{figures/StackedBarPlot_Linf0.2_H300_D30_M500.pdf}
        \caption{AL method, $\lambda=1$.}
    \end{subfigure}
    \begin{subfigure}{0.26\linewidth}
        \centering
        \includegraphics[height=5cm]{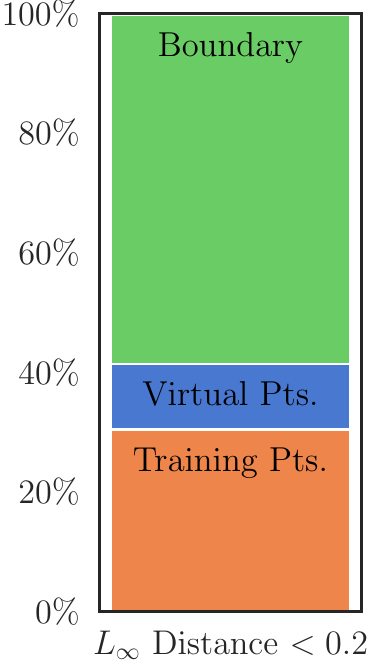}
        \caption{Weight decay, $\lambda=0.25$.}
    \end{subfigure}
    \begin{subfigure}{0.26\linewidth}
        \centering
        \includegraphics[height=5cm]{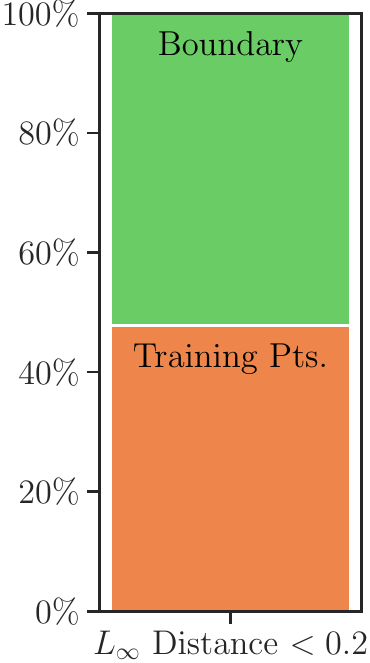}
        \caption{No weight decay.}
    \end{subfigure}
    \caption{
        \textbf{Convergence types frequency of randomly sampled points in diffusion sampling for training with AL method, weight decay, and without weight decay.}
        We run the discrete ODE formulation of \eqref{eq:discrete ode} for 500 randomly sampled points from $\R^{30}$ for diffusion sampling, using different training configurations. 
        We plot the percentage of points that converged to either a virtual point, a training point, or to the boundaries of the hyperbox, out of all points. 
        The minimum norm constraint is necessary for inducing the bias towards virtual training points and the boundaries of the hyperbox. Additionally, standard training protocol using weight decay regularization simulates well the minimum norm denoiser, which is achieved by the use of the AL method.
        \label{fig:weightDecay}
    }
\end{figure}

\section{Extension to Orthogonal Data With $N>d+1$} \label{app:n_gt_d}
As in the case of strictly orthogonal data it is impossible to have $N>d$, we consider here a simliar setup where the training set is augmented with samples randomly drawn from the boundary of the hyperbox.
Specifically, for each augmented sample we sample a random vector with i.i.d. elements from the uniform distribution $\textnormal{Unif}[0.3,0.7]$. This choice avoids the degenerate case where no denoisers are active in the low-noise regime.
We then project the vector on a random face of the hyperbox to ensure that the new random data points lie on the hyperbox boundary.
We train the denoisers following the same experimental setup as in Figure~\ref{fig:CubesAndBars}, using $M=500$ noisy samples and the AL method for optimization. We report the the numerical values for the convergence of points for the $L_\infty$ metric with a $0.2$ threshold in Table~\ref{tab:n_gt_d}.
As can be seen from these results, in cases where $N>d+1$ the probability flow almost exclusively converges to the hyperbox boundary. Specifically, either to the boundary, training points (either old orthogonal points or the new points on the boundary), or to other vertices of the hyperbox (the original virtual points), which aligns with our theory.

\begin{table}[H]
    \centering
    \caption{Convergence types frequency of randomly sampled points in diffusion sampling for training with AL
method, with an augmented training set such that $N>d+1$.}
    \begin{tabular}{lcc}
         \toprule
         & $\bm{N=40}$ & $\bm{N=50}$ \\
         \midrule
         \textbf{Hyperbox Boundary}	& 99\%	& 98.21\% \\
         \textbf{Original Virtual Points} & 2.4\%	& 3.4\%\\
        \textbf{Orthogonal Training Datapoints}&	32\% &	19.6\% \\
        \textbf{Augmented Random Data Points}	&14.6\%	&19\%\\
        \bottomrule
    \end{tabular}
    \label{tab:n_gt_d}
\end{table}

Further metrics and thresholds for the two cases can be seen in Figures~\ref{fig:n_gt_d_1} and \ref{fig:n_gt_d_2}.

\begin{figure}[t]
    \centering
    \includegraphics[width=\linewidth]{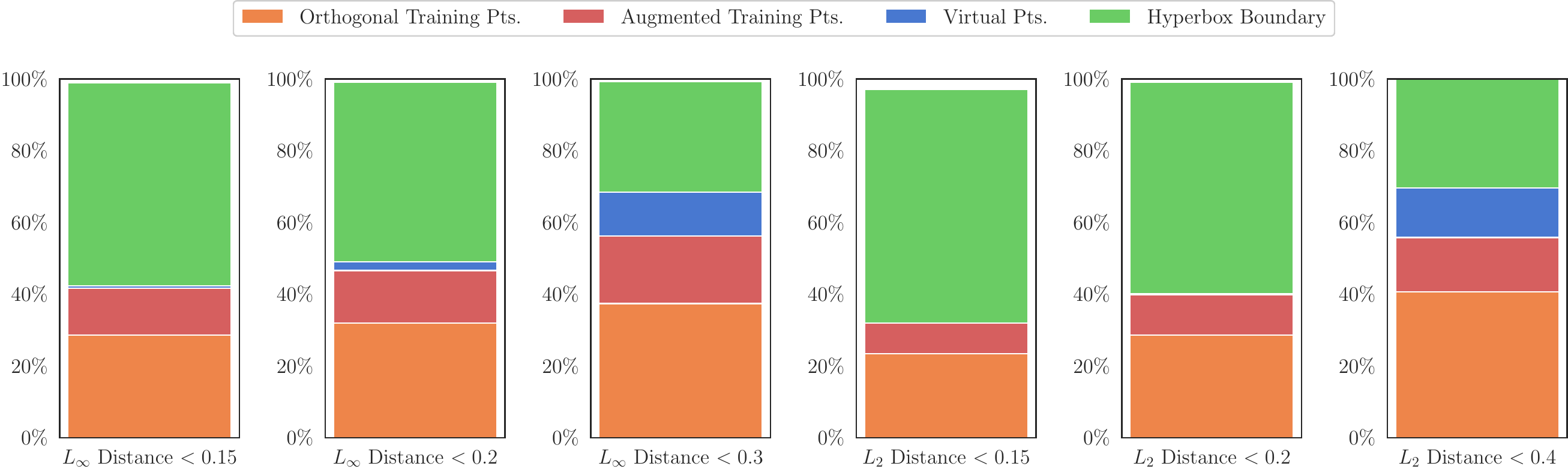}
    \caption{\textbf{Convergence types frequency of randomly sampled points in diffusion sampling for training with AL
method, with an augmented training set such that $N>d+1$, where $N=40$.}
    We run the discrete ODE formulation of \eqref{eq:discrete ode} for 500 randomly sampled points from $\R^{30}$ for probability flow. 
    We plot the percentage of points that converged to either a virtual point, a training point, a new augmented training point from the boundary of the hyperbox, or to the boundaries of the hyperbox, out of all points, based on their $L_\infty$ or $L_2$ proximity for different thresholds.
    }
    \label{fig:n_gt_d_1}
\end{figure}

\begin{figure}[t]
    \centering
    \includegraphics[width=\linewidth]{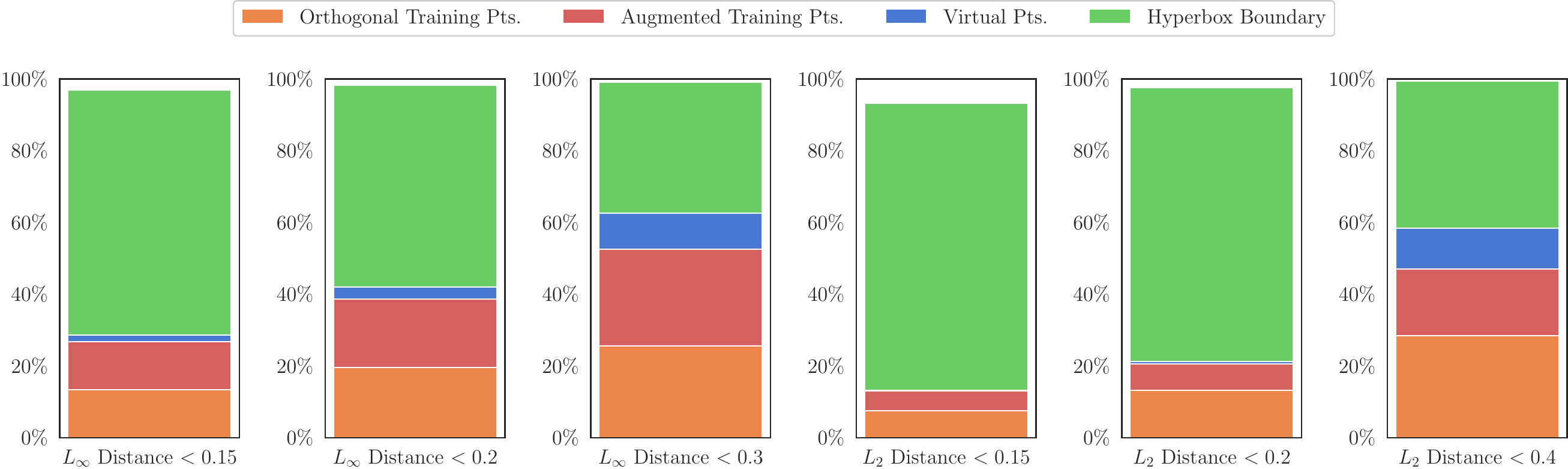}
        \caption{\textbf{Convergence types frequency of randomly sampled points in diffusion sampling for training with AL
method, with an augmented training set such that $N>d$, where $N=50$.}
    We run the discrete ODE formulation of \eqref{eq:discrete ode} for 500 randomly sampled points from $\R^{30}$ for probability flow. 
    We plot the percentage of points that converged to either a virtual point, a training point, a new augmented training point from the boundary of the hyperbox, or to the boundaries of the hyperbox, out of all points, based on their $L_\infty$ or $L_2$ proximity for different thresholds.
    }
    \label{fig:n_gt_d_2}
\end{figure}

\section{Including Dropout During Training}\label{appendix:dropout}
We conduct additional experiments to assess the effect of incorporating dropout during training.
As shown in Figure~\ref{fig:dropout}, this results in more samples being generated outside the boundary of the hyperbox.
Notably, adding dropout increases the MSE loss during denoiser training, especially in the “low-noise regime”, which contributes to this behavior.
\begin{figure}
    \centering
    \includegraphics[width=0.5\linewidth]{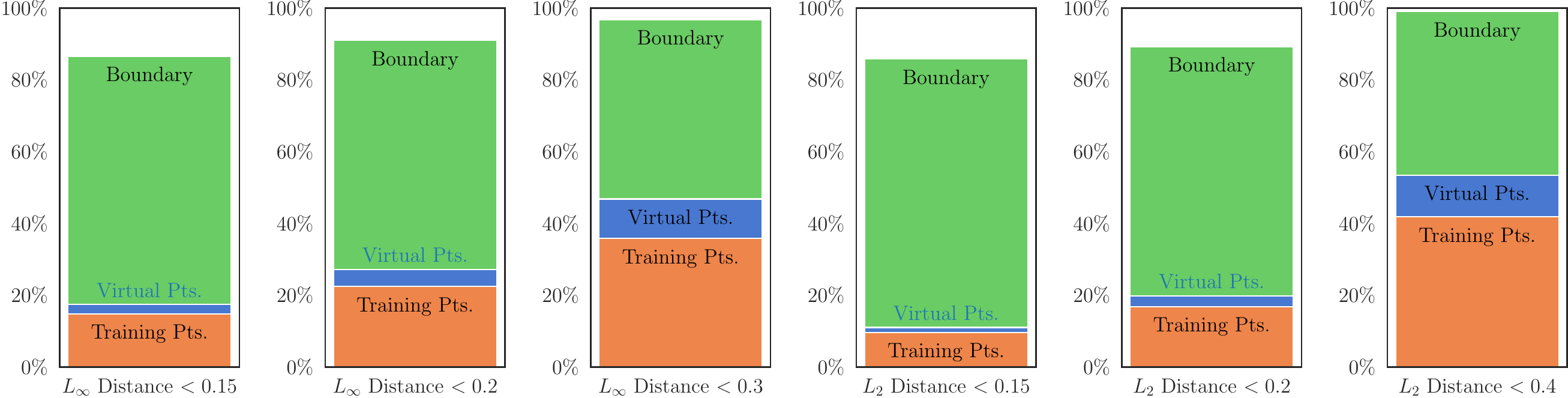}
    \caption{\textbf{Convergence types frequency of randomly sampled points in diffusion sampling for training with AL and dropout.}
    We run the discrete ODE formulation of \eqref{eq:discrete ode} for 500 randomly sampled points from $\R^{30}$ for probability flow. 
    We plot the percentage of points that converged to either a virtual point, a training point, a new augmented training point from the boundary of the hyperbox, or to the boundaries of the hyperbox, out of all points, based on their $L_\infty$ or $L_2$ proximity for different thresholds.}
    \label{fig:dropout}
\end{figure}

\section{Connection to Hopfield Models} \label{app:hopfield_models}
We compute the detection metrics suggested by \citet{pham2024memorization} for memorization, spurious, and generalization detection, for the same setup as in Figure~\ref{fig:CubesAndBars}. Specifically, the training set contains $N=31$ orthogonal points, and we use the same 500 sampled points as the evaluation set. Therefore, in the notation of \citet{pham2024memorization}, $|S|=31$ and $S^\text{eval}|=500$. Following the guidelines set by \citet{pham2024memorization}, we ensure the additional required set $S'$ is much larger than the training set by constructing $S'$ such that $|S'|=100\times|S|$. Additionally, both $S'$ and $S^\text{eval}$ were sampled using probability flow.

Table~\ref{tab:hopfield} compares the classification of evaluation points into memorization, spurious, and generalization categories as described by \citet{pham2024memorization}, with our own categorization into training points, virtual points, and hyperbox boundary points. We set $\delta_m=L$, corresponsing to the thresholds used in our experiments, and test both $L_\infty$ and $L_2$. Additionally, we use $\delta_s=0.15$.
The results suggest that many virtual points are classified as spurious under the criteria set by \citet{pham2024memorization}. This aligns with our analysis, as virtual points are stable, stationary points of the score flow; therefore, given a large evaluation set, points will cluster near the virtual points, which matches the spurious points definition. However, due to the exponential number of virtual points, we cannot expect a cluster to form around each of them. As a result, some virtual points are also classified as “generalization” under these metrics.

\begin{table}[H]
    \centering
    \caption{Comparison of the classification of evaluation points between our notations and the notations of \citet{pham2024memorization}.}
    \begin{tabular}{llccccc}
    \toprule
    \makecell{\textbf{Classification under} \\ \textbf{\citet{pham2024memorization}}} & \makecell{\textbf{Our}\\\textbf{Classification}} & $\bm{L_\infty=0.2}$ & $\bm{L_\infty=0.15}$ & $\bm{L_\infty=0.3}$ & $\bm{L_2=0.2}$ & $\bm{L_2=0.4}$\\
    \midrule
         \multirow[c]{2}{*}{\textbf{Memorized}} & \textbf{Training Pts.} & 100\% & 99.49\% & 100\% & 100\% & 100\% 
         \\
                                                & \textbf{Boundary Pts.}  & 0\%   & 0.51\%   & 0\% & 0\% & 0\% 
                                                \\
    \midrule
         \multirow[c]{3}{*}{\textbf{Spurious}} & \textbf{Virtual Pts.}  & 52.94\% & 21.57\% & 78.22\% & 34.29\% & 100\% 
         \\
                                               & \textbf{Training Pts.} & 0.98\%  & 0\%  & 0\%& 0\% & 0\% 
                                               \\
                                               & \textbf{Boundary Pts.}  & 46.08\% & 78.43\% & 21.78\% & 65.71\% & 0\% 
                                               \\
    \midrule
         \multirow[c]{3}{*}{\textbf{Generalization}} & \textbf{Virtual Pts.}  & 11.35\% & 3.98\% & 31.48\% & 3.98\% & 40.22\% 
         \\
                                                     & \textbf{Training Pts.} & 4.86\%  & 2.99\% & 6.79\% & 3.98\% & 6.15\% 
                                                     \\
                                                     & \textbf{Boundary Pts.} & 83.78\% & 93.03\% & 64.73\% & 92.04\% & 53.63\% 
                                                     \\
          \bottomrule   
    \end{tabular}
    \label{tab:hopfield}
\end{table}

\section{Additional Simulations} \label{app:Additional simulations}

\begin{figure}[t]
	\centering
	\subfloat[]{\includegraphics[height=5.1cm]{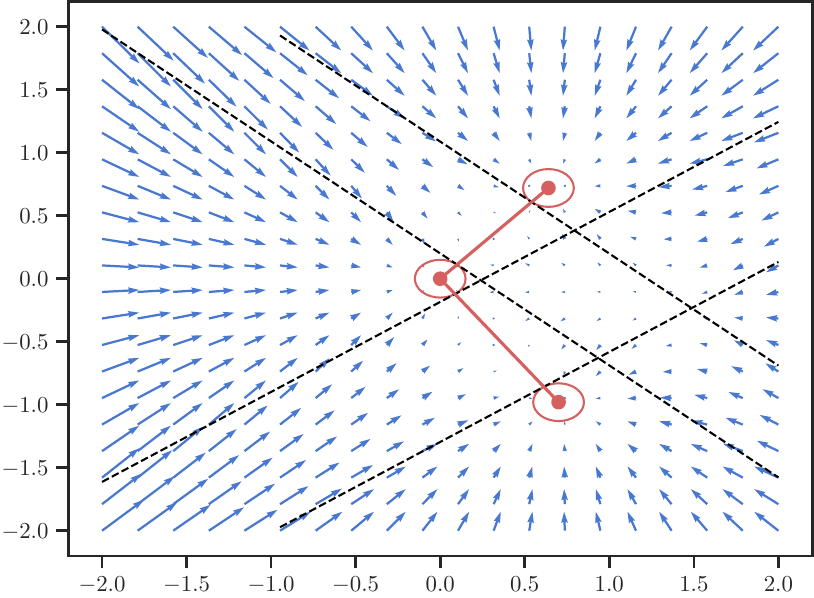}}
	\subfloat[]{\includegraphics[height=5.1cm]{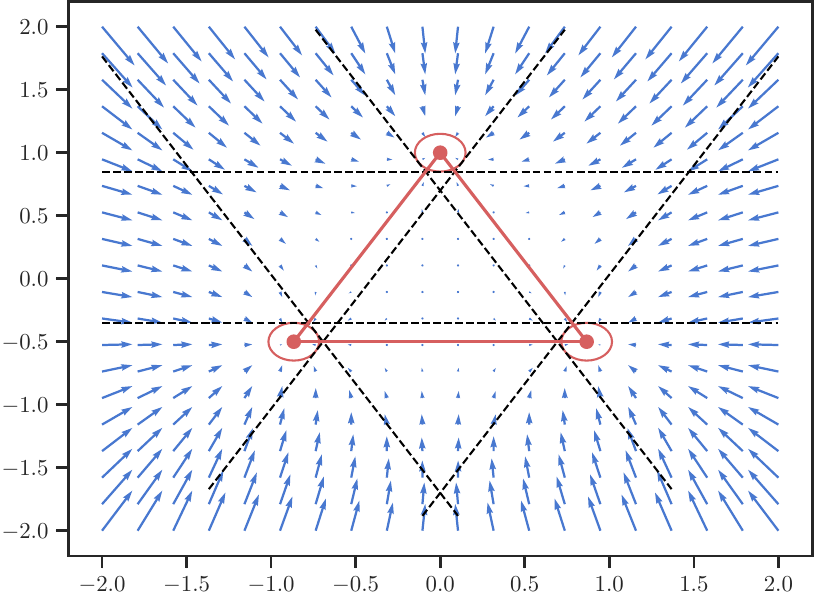}}
	\caption{\textbf{The score function of obtuse and acute simplex}. The red dots are the training points $\bv x_1, \bv x_2, \bv x_3$. The black lines are the ReLU boundaries. In figure (a) we plot the score function of obtuse simplex (Proposition \ref{prop:obtuse_triangle}). In figure (b) we plot acute simplex (Proposition \ref{prop:acute_triangle}) }\label{fig:score flow}
\end{figure}

\begin{figure}[t]
	\centering
	\includegraphics[height=5cm]{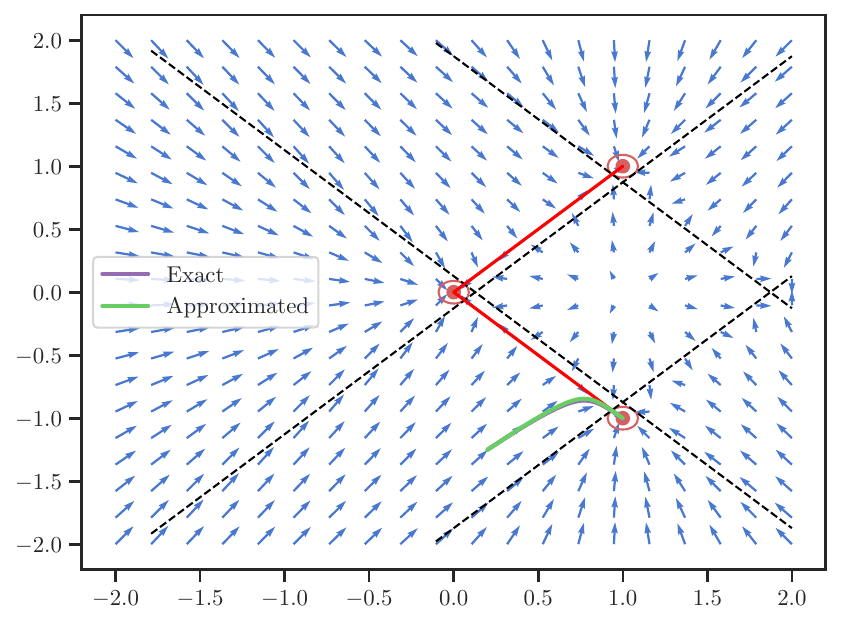}
	\caption{\textbf{The score function of orthogonal dataset}. The purple line is the trajectory of the score flow of the exact score function, and the green line is the trajectory of the score flow of the approximated score function (\eqref{eq:score estimation orthogonal}) in the case where $\sigma = 0.03, \rho = 0.09$. Both trajectories are very similar.}\label{fig:score flow trajectories}
\end{figure}

Figure~\ref{fig:normalized score flow} shows the normalized score flow for the case of an obtuse 2-simplex. Figure~\ref{fig:normalized score flow equilateral triangle} shows the normalized score flow for the case of an equilateral triangle. The normalization was done for visualization purposes only, since the norm of the score decreases as it approaches the ReLU boundaries. In Figure~\ref{fig:score flow} we illustrate the unnormalized score flow.
Figure~\ref{fig:score flow trajectories} shows the trajectory of score flow of the exact score function, and the green line is trajectory of the score flow of the approximated score function as can be seen the trajectories are practically identical.

\end{document}